\algnewcommand\INPUT{\item[\textbf{Input:}]}
\algnewcommand\OUTPUT{\item[\textbf{Output:}]}
\newcommand{\mcs}{\mathcal S}
\newcommand{\mca}{\mathcal A}
\newcommand{\mcf}{\mathcal F}
\newcommand{\mcv}{\mathcal V}
\newcommand{\mcphi}{{\rm\Phi}}
\newcommand{\mcxi}{{\rm\Xi}}
\newcommand{\mcpi}{{\rm \Pi}}
\newcommand{\mE}{\mathbb{E}}
\newcommand{\mN}{\mathbb{N}}
\newcommand{\mP}{\mathbb{P}}
\newcommand{\mR}{\mathbb{R}}
\newcommand{\ltwo}[1]{\left\|#1\right\|_2}
\DeclareMathOperator*{\argmin}{argmin}
\newtheorem{theorem}{Theorem}
\newtheorem{lemma}{Lemma}
\newtheorem{assumption}{Assumption}
\title{Two Time-scale Off-Policy TD Learning: Non-asymptotic Analysis over Markovian Samples}
\author{%
	Tengyu Xu\\
	Department of Electrical and Computer Engineering\\
	The Ohio State University\\
	\texttt{xu.3260@osu.edu} \\
	\AND
	Shaofeng Zou \\
	Department of Electrical Engineering\\
	University at Buffalo, The State University of New York\\
	\texttt{szou3@buffalo.edu} \\
	\AND
	Yingbin Liang \\
	Department of Electrical and Computer Engineering \\
	The Ohio State University \\
	\texttt{liang.889@osu.edu} \\
}
\begin{document}

\maketitle

\begin{abstract}
Gradient-based temporal difference (GTD) algorithms are widely used in off-policy learning scenarios. Among them, the two time-scale TD with gradient correction (TDC) algorithm has been shown to have superior performance. In contrast to previous studies that characterized the non-asymptotic convergence rate of TDC only under identical and independently distributed (i.i.d.) data samples, we provide the first non-asymptotic convergence analysis for two time-scale TDC under a non-i.i.d.\ Markovian sample path and linear function approximation. We show that the two time-scale TDC can converge as fast as $\mathcal{O}(\frac{\log t}{t^{2/3}})$ under diminishing stepsize, and can converge exponentially fast under constant stepsize, but at the cost of a non-vanishing error. We further propose a TDC algorithm with blockwisely diminishing stepsize, and show that it asymptotically converges with an arbitrarily small error at a blockwisely linear convergence rate. Our experiments demonstrate that such an algorithm converges as fast as TDC under constant stepsize, and still enjoys comparable accuracy as TDC under diminishing stepsize. 
\end{abstract}

\vspace{-0.4 cm}
\section{Introduction}\label{sec:intro}
In practice, it is very common that we wish to learn the value function of a {\em target} policy based on data sampled by a different {\em behavior} policy, in order to make maximum use of the data available. For such off-policy scenarios, it has been shown that conventional temporal difference (TD) algorithms \cite{sutton1988learning,sutton2018reinforcement} and Q-learning \cite{watkins1992q} may diverge to infinity when using linear function approximation \cite{baird1995residual}. 
To overcome the divergence issue in off-policy TD learning, \cite{sutton2008convergent,sutton2009fast,maei2011gradient} proposed a family of gradient-based TD (GTD) algorithms, which were shown to have guaranteed convergence in off-policy settings and are more flexible than on-policy learning in practice \cite{maei2018offpolicyac, silver2014deterministic}. 
Among those GTD algorithms, the TD with gradient correction (TDC) algorithm has been verified to have superior performance \cite{maei2011gradient} \cite{dann2014policy} and is widely used in practice. To elaborate, TDC uses the mean squared projected Bellman error as the objective function, and iteratively updates the function approximation parameter with the assistance of an auxiliary parameter that is also iteratively updated. These two parameters are typically updated with stepsizes diminishing at different rates, resulting the two time-scale implementation of TDC, i.e., the function approximation parameter is updated at a slower time-scale and the auxiliary parameter is updated at a faster time-scale.

The convergence of two time-scale TDC and general two time-scale stochastic approximation (SA) have been well studied. The asymptotic convergence has been shown in \cite{borkar2009stochastic,borkar2018concentration} for two time-scale SA, and in \cite{sutton2009fast} for two time-scale TDC, where both studies assume that the data are sampled in an identical and independently distributed (i.i.d.) manner. Under non-i.i.d.\ observed samples, the asymptotic convergence of the general two time-scale SA and TDC were established in \cite{karmakar2017two,yu2017convergence}.

All the above studies did not characterize how fast the two time-scale algorithms converge, i.e, they did not establish the non-asymptotic convergence rate, which is specially important for a two time-scale algorithm. In order for two time-scale TDC to perform well, it is important to properly choose the relative scaling rate of the stepsizes for the two time-scale iterations. In practice, this can be done by fixing one stepsize and treating the other stepsize as a tuning hyper-parameter \cite{dann2014policy}, which is very costly. The non-asymptotic convergence rate by nature captures how the scaling of the two stepsizes affect the performance and hence can serve as a guidance for choosing the two time-scale stepsizes in practice. Recently, \cite{dalal2017finite} established the non-asymptotic convergence rate for the projected two time-scale TDC with i.i.d.\ samples under diminishing stepsize. 
\begin{list}{$\bullet$}{\topsep=0.ex \leftmargin=0.15in \rightmargin=0.in \itemsep =0.01in}
\item {\em One important open problem that still needs to be addressed is to characterize the {\bf non-asymptotic} convergence rate for two time-scale TDC under {\bf non-i.i.d.} samples and diminishing stepsizes, and explore what such a result suggests for designing the stepsizes of the fast and slow time-scales accordingly.  Existing method developed in \cite{dalal2017finite} that handles the non-asymptotic analysis for i.i.d. sampled TDC does not accommodate a direct extension to the non-i.i.d.\ setting. Thus, new technical developments are necessary to solve this problem.}
\end{list}

Furthermore, although {\em diminishing} stepsize offers accurate convergence, {\em constant} stepsize is often preferred in practice due to its much faster error decay (i.e., convergence) rate. 
For example, empirical results have shown that for {\em one} time-scale conventional TD, constant stepsize not only yields fast convergence, but also results in comparable convergence accuracy as diminishing stepsize \cite{dann2014policy}. However, for {\em two} time-scale TDC, our experiments (see \Cref{exp: sub2}) demonstrate that constant stepsize, although yields faster convergence, has much bigger convergence error than diminishing stepsize. This motivates to address the following two open issues.
\begin{list}{$\bullet$}{\topsep=0.ex \leftmargin=0.15in \rightmargin=0.in \itemsep =0.01in}
\item {\em It is important to theoretically understand/explain why constant stepsize yields large convergence error for two-time scale TDC. Existing non-asymptotic analysis for two time-scale TDC \cite{dalal2017finite} focused only on the diminishing stepsize, and does not characterize the convergence rate of two time-scale TDC under constant stepsize. }
\item {\em For two-time scale TDC, given the fact that constant stepsize yields large convergence error but converges fast, whereas diminishing stepsize has small convergence error but converges slowly, it is desirable to design a new update scheme for TDC that converges faster than diminishing stepsize, but has as good convergence error as diminishing stepsize. }
\end{list}
In this paper, we comprehensively address the above issues.


\vspace{-0.2 cm}
\subsection{Our Contribution}
\vspace{-0.2 cm}
Our main contributions are summarized as follows.

We develop a novel non-asymptotic analysis for two time-scale TDC with a single sample path and under non-i.i.d.\ data. We show that under the diminishing stepsizes $\alpha_t=c_\alpha/(1+t)^\sigma$ and $\beta_t=c_\beta/(1+t)^\nu$ respectively for slow and fast time-scales (where $c_\alpha,c_\beta,\nu,\sigma$ are positive constants and $0<\nu<\sigma\leq1$), the convergence rate can be as large as $\mathcal{O}(\frac{\log t}{t^{2/3}})$, which is achieved by $\sigma=\frac{3}{2}\nu=1$. This recovers the convergence rate (up to $\log t$ factor due to non-i.i.d.\ data) in \cite{dalal2017finite} for i.i.d.\ data as a special case. 

We also develop the non-asymptotic analysis for TDC under non-i.i.d.\ data and constant stepsize. In contrast to conventional one time-scale analysis, our result shows that the training error (at slow time-scale) and the tracking error (at fast time scale) converge at different rates (due to different condition numbers), though both converge linearly to the neighborhood of the solution. Our result also characterizes the impact of the tracking error on the training error. Our result suggests that TDC under constant stepsize can converge faster than that under diminishing stepsize at the cost of a large training error, due to a large tracking error caused by the auxiliary parameter iteration in TDC. 

We take a further step and propose a TDC algorithm under a blockwise diminishing stepsize inspired by \cite{yang2018does} in conventional optimization, in which both stepsizes are constants over a block, and decay across blocks. We show that TDC asymptotically converges with an arbitrarily small training error at a blockwisely linear convergence rate as long as the block length and the decay of stepsizes across blocks are chosen properly.
Our experiments demonstrate that TDC under a blockwise diminishing stepsize converges as fast as vanilla TDC under constant stepsize, and still enjoys comparable accuracy as TDC under diminishing stepsize. 



From the technical standpoint, our proof develops new tool to handle the non-asymptotic analysis of bias due to non-i.i.d.\ data for two time-scale algorithms under diminishing stepsize that does not require square summability, to bound the impact of the fast-time-scale tracking error on the slow-time-scale training error, and the analysis to recursively refine the error bound in order to sharpening the convergence rate.
\vspace{-0.2 cm}
\subsection{Related Work}
\vspace{-0.1 cm}
%
%
Due to extensive studies on TD learning, we here include only the most relevant work to this paper.

\textbf{On policy TD and SA.} The convergence of TD learning with linear function approximation with i.i.d samples has been well established by using standard results in SA \cite{borkar2000ode}. The non-asymptotic convergence have been established in \cite{borkar2009stochastic,kamal2010onthe,thoppe2015alekseev} for the general SA algorithms with martingale difference noise, and in \cite{dalal2018finite} for TD with i.i.d. samples. For the Markovian settings, the asymptotic convergence has been established in \cite{tsitsiklis1996analysis,tadi2001td} for of TD($\lambda$), and the non-asymptotic convergence has been provided for projected TD($\lambda$) in \cite{bhandari2018finite} and for linear SA with Markovian noise in \cite{karmakar2016dynamics,ramaswamy2018stability,srikant2019finite}.


\textbf{Off policy one time-scale GTD.} The convergence of one time-scale GTD and GTD2 (which are off-policy TD algorithms) were derived by applying standard results in SA \cite{sutton2008convergent} \cite{sutton2009fast,maei2011gradient}. The non-asymptotic analysis for GTD and GTD2 have been conducted in \cite{liu2015finite} by converting the objective function into a convex-concave saddle problem, and was further generalized to the Markovian setting in \cite{wang2017finite}. However, such an approach cannot be generalized for analyzing two-time scale TDC that we study here because TDC does not have an explicit saddle-point representation.


\textbf{Off policy two time-scale TDC and SA.} The asymptotic convergence of two time-scale TDC under i.i.d.\ samples has been established in \cite{sutton2009fast,maei2011gradient}, and the non-asymptotic analysis has been provided in \cite{dalal2017finite} as a special case of two time-scale linear SA. Under Markovian setting, the convergence of various two time-scale GTD algorithms has been studied in \cite{yu2017convergence}. The non-asymptotic analysis of two time-scale TDC  under non-i.i.d. data has not been studied before, which is the focus of this paper.

General two time-scale SA has also been studied. The convergence of two time-scale SA with martingale difference noise was established in \cite{borkar2009stochastic}, and its non-asymptotic convergence was provided in \cite{konda2004convergence,mokkadem2006convergence,dalal2017finite,borkar2018concentration}. Some of these results can be applied to two time-scale TDC under i.i.d.\ samples (which can fit into a special case of SA with martingale difference noise), but not to the non-i.i.d.\ setting. For two time-scale linear SA with more general Markovian noise, only asymptotic convergence was established in \cite{tadic2004almost,yaji2016stochastic,karmakar2017two}. In fact, our non-asymptotic analysis for two time-scale TDC can be of independent interest here to be further generalized for studying linear SA with more general Markovian noise.

\vspace{-0.3 cm}
\section{Problem Formulation}
\vspace{-0.2 cm}
\subsection{Off-policy Value Function Evaluation}
We consider the problem of policy evaluation for a Markov decision process (MDP) $(\mcs, \mca, \mathsf{P},r,\gamma)$, where $\mcs\subset \mR^d$ is a compact state space, $\mca$ is a finite action set, $\mathsf{P}=\mathsf{P}(s^\prime|s,a)$ is the transition kernel,  $r(s, a, s^\prime)$ is the reward function bounded by $r_{\max}$, and $\gamma\in(0,1)$ is the discount factor. A stationary policy $\pi$ maps a state $s\in \mcs$ to a probability distribution $\pi(\cdot|s)$ over $\mca$. At time-step $t$, suppose the process is in some state $s_t\in \mcs$. Then an action $a_t\in \mca$ is taken based on the distribution $\pi(\cdot|s_t)$, the system transitions to a next state $s_{t+1}\in \mcs$ governed by the transition kernel $\mathsf{P}(\cdot|s_t,a_t)$, and a reward $r_t=r(s_t, a_t, s_{t+1})$ is received. Assuming the associated Markov chain $p(s^\prime|s)=\sum_{a\in\mca}p(s^\prime|s,a)\pi(a|s)$ is ergodic, let $\mu_\pi$ be the induced stationary distribution of this MDP, i.e., $\sum_{s}p(s^\prime|s)\mu_{\pi}(s)=\mu_{\pi}(s^\prime)$. The value function for policy $\pi$ is defined as: $v^\pi\left(s\right)=\mE[\sum_{t=0}^{\infty}\gamma^t r(s_t,a_t, s_{t+1})|s_0=s,\pi]$, and it is known that $v^\pi(s)$ is the unique fixed point of the Bellman operator $T^\pi$, i.e., $v^\pi(s) = T^\pi v^\pi(s)\coloneqq r^\pi(s)+\gamma\mE_{s^\prime |s} v^\pi(s^\prime)$, where $r^\pi(s)=\mE_{a, s^\prime|s}r(s,a,s^\prime)$ is the expected reward of the Markov chain induced by policy $\pi$. 

We consider policy evaluation problem in the off-policy setting. Namely, a sample path $\{ (s_t, a_t, s_{t+1}) \}_{t\geq 0}$ is generated by the Markov chain according to the behavior policy $\pi_b$, but our goal is to obtain the value function of a target policy $\pi$, which is different from $\pi_b$.
\vspace{-0.2 cm}
\subsection{Two Time-Scale TDC}
When $\mcs$ is large or infinite, a linear function $\hat{v}(s,\theta)=\phi(s)^\top\theta$ is often used to approximate the value function, where $\phi(s)\in\mR^d$ is a fixed feature vector for state $s$ and $\theta\in\mR^d$ is a parameter vector. We can also write the linear approximation in the vector form as $\hat{v}(\theta)={\rm \Phi} \theta$, where ${\rm \Phi}$ is the $|\mcs|\times d$ feature matrix. To find a parameter $\theta^*\in\mR^d$ with $ \mE_{\mu_{\pi_b}}\hat{v}(s,\theta^*)= \mE_{\mu_{\pi_b}}T^\pi\hat{v}(s,\theta^*)$. The gradient-based TD algorithm TDC \cite{sutton2009fast} updates the parameter by minimizing the mean-square projected Bellman error (MSPBE) objective, defined as
\begin{flalign*}
	J(\theta)=\mE_{\mu_{\pi_b}}[\hat{v}(s,\theta)-{\rm \Pi} T^\pi\hat{v}(s,\theta)]^2,
\end{flalign*}
where ${\rm \Pi}=\mcphi(\mcphi^\top \mcxi\mcphi)^{-1}\mcphi^\top \mcxi$ is the orthogonal projection operation into the function space $\hat{\mcv}=\{\hat{v}(\theta)\  |\  \theta\in\mR^d\  \text{and}\ \hat{v}(\cdot, \theta)=\phi(\cdot)^\top \theta   \}$ and $\mcxi$ denotes the $|\mcs|\times|\mcs|$ diagonal matrix with the components of $\mu_{\pi_b}$ as its diagonal entries. Then, we define the matrices $A$, $B$, $C$ and the vector $b$ as 
\begin{flalign*}
	A\coloneqq\mE_{\mu_{\pi_b}}[\rho(s,a)\phi(s)(\gamma\phi(s^\prime)-\phi(s))^\top],\quad B\coloneqq-\gamma\mE_{\mu_{\pi_b}}[\rho(s,a)\phi(s^\prime)\phi(s)^\top],\\
	C\coloneqq-\mE_{\mu_{\pi_b}}[\phi(s)\phi(s)^\top],\quad b\coloneqq\mE_{\mu_{\pi_b}}[\rho(s,a)r(s,a,s^\prime)\phi(s)],
\end{flalign*}
where $\rho(s,a)=\pi(a|s)/\pi_b(a|s)$ is the importance weighting factor with $\rho_{\max}$ being its maximum value. If $A$ and $C$ are both non-singular, $J(\theta)$ is strongly convex and has $\theta^*=-A^{-1}b$ as its global minimum, i.e., $J(\theta^*)=0$. Motivated by minimizing the MSPBE objective function using the stochastic gradient methods, TDC was proposed with the following update rules:
\begin{flalign}
	&\theta_{t+1}=\mcpi_{R_\theta}\left(\theta_t + \alpha_t(A_t\theta_t+b_t+B_t w_t)\right), \label{algorithm1_1}\\
	&w_{t+1}=\mcpi_{R_w}\left( w_t + \beta_t(A_t\theta_t+b_t+C_t w_t) \right),\label{algorithm1_2}
\end{flalign}
where $A_t=\rho(s_t,a_t)\phi(s_t)(\gamma\phi(s_{t+1})-\phi(s_t))^\top$, $B_t=-\gamma\rho(s_t,a_t)\phi(s_{t+1})\phi(s_t)^\top$, $C_t = -\phi(s_t)\phi(s_t)^\top$, $b_t = \rho(s_t, a_t)r(s_t,a_t,s_{t+1})\phi(s_t)$, and $\mcpi_R(x)=\argmin_{x^\prime:||x^\prime||_2\leq R}||x-x^\prime||_2$ is the projection operator onto a norm ball of radius $R<\infty$. The projection step is widely used in the stochastic approximation literature. As we will show later, iterations \eqref{algorithm1_1}-\eqref{algorithm1_2} are guaranteed to converge to the optimal parameter $\theta^*$ if we choose the value of $R_\theta$ and $R_w$ appropriately. 
TDC with the update rules \eqref{algorithm1_1}-\eqref{algorithm1_2} is a two time-scale algorithm. The parameter $\theta$ iterates at a slow time-scale determined by the stepsize $\{\alpha_t\}$, whereas $w$ iterates at a fast time-scale determined by the stepsize $\{\beta_t\}$. 
Throughout the paper, we make the following standard assumptions \cite{bhandari2018finite,wang2017finite,maei2011gradient}.
\begin{assumption}[Problem solvability]\label{ass1}
	The matrix $A$ and $C$ are non-singular.
\end{assumption}
\begin{assumption}[Bounded feature]\label{ass2}
	$\ltwo{\phi(s)}\leq 1$ for all $s\in\mcs$ and $\rho_{\max}<\infty$.
\end{assumption}
\begin{assumption}[Geometric ergodicity]\label{ass3}
	There exist constants $m>0$ and $\rho\in(0,1)$ such that
	\begin{flalign*}
	\sup_{s\in\mcs}d_{TV}(\mP(s_t\in\cdot|s_0=s),\mu_{\pi_b})\leq m\rho^t, \forall t\geq 0,
	\end{flalign*}
	where $d_{TV}(P,Q)$ denotes the total-variation distance between the probability measures $P$ and $Q$.
\end{assumption}
In Assumption \ref{ass1}, the matrix $A$ is required to be non-singular so that the optimal parameter $\theta^*=-A^{-1}b$ is well defined. The matrix $C$ is non-singular when the feature matrix $\mcphi$ has linearly independent columns. Assumption \ref{ass2} can be ensured by normalizing the basis functions $\{\phi_i\}_{i=1}^d$ and when $\pi_b(\cdot|s)$ is non-degenerate for all $s$. Assumption \ref{ass3} holds for any time-homogeneous Markov chain with finite state-space and any uniformly ergodic Markov chains with general state space.
Throughout the paper, we require $R_\theta\geq \ltwo{A}\ltwo{b}$ and $R_w\geq 2\ltwo{C^{-1}}\ltwo{A}R_\theta$. In practice, we can estimate $A$, $C$ and $b$ as mentioned in \cite{bhandari2018finite} or simply let $R_\theta$ and $R_w$ to be large enough.
\vspace{-0.2cm}

\section{Main Theorems}
\subsection{Non-asymptotic Analysis under Diminishing Stepsize}
Our first main result is the convergence rate of two time-scale TDC with diminishing stepsize. We define the tracking error: $z_t = w_t -\psi(\theta_t)$, where $\psi(\theta_t)=-C^{-1}(b+A\theta_t)$ is the stationary point of the ODE given by $\dot{w}(t)= Cw(t) + A\theta_t + b$, with $\theta_t$ being fixed. Let $\lambda_\theta$ and $\lambda_w$ be any constants that satisfy $\lambda_{\max}(2A^\top C^{-1}A)\leq\lambda_\theta<0$ and $\lambda_{\max}(2C)\leq\lambda_w<0$.
\begin{theorem}\label{thm1}
	Consider the projected two time-scale TDC algorithm in \eqref{algorithm1_1}-\eqref{algorithm1_2}. Suppose Assumptions \ref{ass1}-\ref{ass3} hold. Suppose we apply diminishing stepsize $\alpha_t=\frac{c_\alpha}{(1+t)^\sigma}$, $\beta_t=\frac{c_\beta}{(1+t)^\nu}$ which satisfy $0<\nu<\sigma< 1$, $0<c_\alpha<\frac{1}{|\lambda_\theta|}$ and $0<c_\beta<\frac{1}{|\lambda_w|}$. Suppose $\epsilon$ and $\epsilon^\prime$ can be any constants in $(0,\sigma-\nu]$ and $(0,0.5]$, respectively. Then we have for $t\geq0$:
	\begin{flalign}
		\mE\ltwo{\theta_t-\theta^*}^2&\leq \mathcal{O}(e^{\frac{-|\lambda_\theta| c_\alpha}{1-\sigma}(t^{1-\sigma}-1)}) + \mathcal{O}\Big(\frac{\log t}{t^\sigma}\Big) +\mathcal{O}\Big(\frac{\log t}{t^\nu}+h(\sigma, \nu)\Big)^{1-\epsilon^\prime}, \label{thm1eq1_1}\\
		\mE\ltwo{z_t}^2&\leq \mathcal{O}\Big( \frac{\log t}{t^\nu} \Big) + \mathcal{O}(h(\sigma, \nu))\label{thm1eq1_2},
	\end{flalign}
	where
	\begin{equation}\label{thm1eq3}
	h(\sigma, \nu)=\left\{
	\begin{array}{lr}
	\frac{1}{t^\nu}, & \sigma>1.5\nu, \\
	\frac{1}{t^{2(\sigma-\nu)-\epsilon}}, &  \nu<\sigma\leq1.5\nu.
	\end{array}
	\right.
	\end{equation}
	If $0<\nu<\sigma=1$, with $c_\alpha=\frac{1}{|\lambda_\theta|}$ and $0<c_\beta<\frac{1}{|\lambda_w|}$, we have for $t\geq0$
	\begin{flalign}\label{thm1eq2}
		\mE\ltwo{\theta_t-\theta^*}^2&\leq \mathcal{O}\Big( \frac{(\log t)^2}{t} \Big) + \mathcal{O}\Big(\frac{\log t}{t^\nu}+h(1, \nu)\Big)^{1-\epsilon^\prime}.
	\end{flalign}
	For explicit expressions of \eqref{thm1eq1_1}, \eqref{thm1eq1_2} and \eqref{thm1eq2}, please refer to \eqref{thm1_1: explicit}, \eqref{eq: firstboundz} and \eqref{thm1_2: explicit} in the Appendix.
\end{theorem}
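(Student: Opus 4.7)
The plan is to analyze two coupled stochastic recursions: one for the training error $\mathbb{E}\|\theta_t-\theta^*\|^2$ at the slow time-scale, and one for the tracking error $\mathbb{E}\|z_t\|^2 = \mathbb{E}\|w_t-\psi(\theta_t)\|^2$ at the fast time-scale. First I would expand the update \eqref{algorithm1_2} and use the nonexpansiveness of the projection $\mathcal{\Pi}_{R_w}$, together with $\psi(\theta_{t+1}) = \psi(\theta_t) + C^{-1}A(\theta_t-\theta_{t+1})$, to obtain a recursion of the form $\mathbb{E}\|z_{t+1}\|^2 \leq (1-\beta_t|\lambda_w|)\mathbb{E}\|z_t\|^2 + \mathcal{O}(\beta_t^2) + \mathcal{O}(\alpha_t^2/\beta_t) + (\text{Markovian bias})$. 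The $\alpha_t^2/\beta_t$ term captures the drift of $\psi(\theta_t)$ relative to the fast iteration, and gives rise to the cross-scale term $h(\sigma,\nu)$ in \eqref{thm1eq3}, with the threshold $\sigma = 1.5\nu$ appearing as the point where this cross-term balances the intrinsic $\beta_t$ noise. Analogously, for the training error I would decompose $A_t\theta_t+b_t+B_tw_t = (A_t-BC^{-1}A)(\theta_t-\theta^*) + B_t z_t + (\text{mean-zero and bias terms})$, yielding $\mathbb{E}\|\theta_{t+1}-\theta^*\|^2 \leq (1-\alpha_t|\lambda_\theta|)\mathbb{E}\|\theta_t-\theta^*\|^2 + \mathcal{O}(\alpha_t^2) + \mathcal{O}(\alpha_t\,\mathbb{E}\|z_t\|) + (\text{bias})$, where the key structural fact is that $-2A^\top C^{-1}A$ is the effective negative-definite drift matrix for the slow iteration.

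The second task is to control the Markovian bias introduced by replacing $A_t,B_t,C_t,b_t$ with their stationary expectations. Here I would follow the now-standard $\tau$-step lookback argument used by Bhandari et al.\ and Wang et al.: for each $t$, define $\tau_t = \mathcal{O}(\log t)$ such that $m\rho^{\tau_t} \leq \alpha_t$, and condition on the $\sigma$-algebra $\mathcal{F}_{t-\tau_t}$. Geometric ergodicity (Assumption \ref{ass3}) then gives exponentially small distributional mismatch, while boundedness of $\theta_t,w_t$ (via projection) and Lipschitz continuity of the update functions control the discrepancy between $\theta_{t-\tau_t}$ and $\theta_t$ by a telescoping sum $\sum_{s=t-\tau_t}^{t-1}\alpha_s \lesssim \tau_t\alpha_{t-\tau_t}$. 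This produces per-step bias contributions of order $\mathcal{O}(\alpha_t\tau_t\alpha_{t-\tau_t})$ and $\mathcal{O}(\beta_t\tau_t\beta_{t-\tau_t})$, which is precisely the source of the $\log t$ factors in \eqref{thm1eq1_1}--\eqref{thm1eq1_2}.

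With both recursions in hand, I would solve them sequentially. The tracking-error recursion is self-contained once one treats the $\alpha_t^2/\beta_t$ and bias terms as exogenous forcing, and admits a closed-form bound by unrolling the product $\prod_{s=t_0}^{t-1}(1-\beta_s|\lambda_w|)$ and invoking standard lemmas for sums of the form $\sum_s \beta_s \prod_{k>s}(1-\beta_k|\lambda_w|)\,\gamma_s$ with $\gamma_s$ polynomially decaying; this yields \eqref{thm1eq1_2}. Substituting this bound into the slow-scale recursion gives a first crude estimate $\mathbb{E}\|\theta_t-\theta^*\|^2 = \mathcal{O}(\max(\alpha_t,\beta_t)^{1/2})$ through Cauchy--Schwarz on the $\alpha_t\mathbb{E}\|z_t\|$ cross term. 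To sharpen the exponent to $1-\epsilon'$, I would then run a bootstrapping/recursive refinement: reinsert this initial bound for $\mathbb{E}\|\theta_t-\theta^*\|^2$ into the bias terms and into the drift of $\psi(\theta_t)$, and iterate. Each refinement improves the effective exponent by a factor that shrinks $\epsilon'$ toward zero at the cost of constants blowing up, which explains why $\epsilon' \in (0,0.5]$ is arbitrary but nonzero. The boundary case $\sigma=1$ in \eqref{thm1eq2} requires only a minor modification: one takes $c_\alpha = 1/|\lambda_\theta|$ so that the product $\prod_s(1-\alpha_s|\lambda_\theta|)$ decays as $1/t$, producing a $(\log t)^2/t$ contribution rather than a subexponential one.

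The main technical obstacle will be the bootstrapping step: one must track how the cross-coupling $\alpha_t\,\mathbb{E}\|z_t\|$ interacts with the projection-induced boundedness and the $\tau_t$-step lookback windows without demanding square-summability of the stepsizes (which fails since $\sigma,\nu < 1$ in the generic case). In particular, the Cauchy--Schwarz split of $\alpha_t\mathbb{E}\|z_t\|$ into $\alpha_t^{1+\epsilon'}$ and $\alpha_t^{1-\epsilon'}\mathbb{E}\|z_t\|^2$ pieces must be handled so that the improved bound on $\mathbb{E}\|z_t\|^2$ propagates correctly through the Grönwall-type recursion for $\mathbb{E}\|\theta_t-\theta^*\|^2$, and so that the $h(\sigma,\nu)$ term survives the refinement with the correct power $1-\epsilon'$. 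Everything else---handling the projection (nonexpansive), bounding $\|A_t\|,\|B_t\|,\|C_t\|,\|b_t\|$ via Assumption \ref{ass2}, and invoking strong monotonicity via Assumption \ref{ass1}---is standard once the coupled recursions and bias decomposition are set up.
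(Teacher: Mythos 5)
Your proposal is correct and follows the paper's architecture almost step for step: the same reformulation in $(\theta_t,z_t)$ with $z_t=w_t+C^{-1}(b+A\theta_t)$; the same Markovian-bias control via a $\tau$-step lookback with $\tau=\mathcal{O}(\log t)$, geometric ergodicity, Lipschitzness of the noise functionals, and projection-induced boundedness (the paper's Lemmas \ref{lemma: biasf2final}, \ref{lemma: biasg2final}, \ref{lemma: biasf1final}, following Bhandari et al.); the same unrolled Gr\"onwall-type sums; the same Cauchy--Schwarz-plus-bootstrap treatment of the cross term $\alpha_t\mE\langle B_tz_t,\theta_t-\theta^*\rangle$, iterating the exponent along $1-2^{-N}$ for $N=\mathcal{O}(\log(1/\epsilon^\prime))$ rounds (the paper's Lemmas \ref{lemma: mixorderbound1} and \ref{lemma: mixorderbound3}); and the same idea for $\sigma=1$ with $c_\alpha=1/|\lambda_\theta|$, which the paper implements by the weighting $(1+t)\ltwo{\theta_{t+1}-\theta^*}^2-t\ltwo{\theta_t-\theta^*}^2$ so that the product decays as $1/t$ and the $\tau_\alpha\log(1+t)/(1+t)$ bias term produces $(\log t)^2/t$. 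The one genuinely different element is your handling of the fast-scale drift: you absorb $2\langle C^{-1}A(\theta_{t+1}-\theta_t),z_t\rangle$ into the contraction via a Young split, $2K\alpha_t\ltwo{z_t}\leq\frac{|\lambda_w|\beta_t}{2}\ltwo{z_t}^2+\mathcal{O}(\alpha_t^2/\beta_t)$, which is legitimate because projection gives the deterministic bound $\ltwo{\theta_{t+1}-\theta_t}\leq\alpha_t(K_{f_1}+K_{g_1})$; a single unrolling then yields $\mE\ltwo{z_t}^2=\mathcal{O}(\log t/t^\nu)+\mathcal{O}(\alpha_t^2/\beta_t^2)$, with the threshold $\sigma=1.5\nu$ appearing exactly where $t^{-2(\sigma-\nu)}$ meets $t^{-\nu}$. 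The paper instead bounds this drift linearly by $\alpha_i\sqrt{\mE\ltwo{z_i}^2}$, obtains the crude rate $\mathcal{O}(1/t^{\sigma-\nu})$, and recursively re-substitutes (its Step 3, Lemma \ref{lemma: mixorderbound2}), which is precisely why its $h(\sigma,\nu)$ carries the $\epsilon$-loss $t^{-2(\sigma-\nu)+\epsilon}$ when $\nu<\sigma\leq1.5\nu$. Your route buys a cleaner one-shot argument and a marginally sharper tracking bound (no $\epsilon$), at the cost of a halved contraction constant in the subexponential transient; since the theorem permits any $\epsilon\in(0,\sigma-\nu]$, proving the stronger bound is fine. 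Two small imprecisions that do not affect validity: the slow-scale bootstrapping acts on the cross term $\alpha_t\sqrt{\mE\ltwo{z_t}^2}\sqrt{\mE\ltwo{\theta_t-\theta^*}^2}$ rather than on the Markovian bias terms (those are already $\mathcal{O}(\log t/t^\sigma)$ after one pass), and the correct crude first pass is $\mathcal{O}\big(\log t/t^\nu+h(\sigma,\nu)\big)^{1/2}$ rather than literally $\mathcal{O}(\max(\alpha_t,\beta_t)^{1/2})$.
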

We further explain Theorem \ref{thm1} as follows: (a) In \eqref{thm1eq1_1} and \eqref{thm1eq3}, since both $\epsilon$ and $\epsilon^\prime$ can be arbitrarily small, the convergence of $\mE\ltwo{\theta_t-\theta^*}^2$ can be almost as fast as $ \frac{1}{t^{2(\sigma-\nu)}}$ when $\nu<\sigma<1.5\nu$, and $ \frac{\log t}{t^\nu}$ when $1.5\nu\leq\sigma$. Then best convergence rate is almost as fast as $\mathcal{O}(\frac{\log t}{t^{2/3}})$ with $\sigma=\frac{3}{2}\nu=1$. (b) If data are i.i.d. generated, then our bound reduces to $\mE\ltwo{\theta_t-\theta^*}^2\leq \mathcal{O}(\exp(\lambda_\theta c_\alpha(t^{1-\sigma}-1)/(1-\sigma))) + \mathcal{O}(1/t^\sigma) +\mathcal{O}(h(\sigma, \nu))^{1-\epsilon^\prime}$ with $h(\sigma, \nu)=\frac{1}{t^{\nu}}$ when $\sigma > 1.5\nu$, and $h(\sigma, \nu)=\frac{1}{t^{2(\sigma-\nu)-\epsilon}}$ when $\nu<\sigma\leq1.5\nu$.
The best convergence rate is almost as fast as $\frac{1}{t^{2/3}}$ with $\sigma=\frac{3}{2}\nu=1$ as given in \cite{dalal2017finite}.

Theorem \ref{thm1} characterizes the relationship between the convergence rate of $\theta_t$ and stepsizes $\alpha_t$ and $\beta_t$. The first term of the bound in \eqref{thm1eq1_1} corresponds to the convergence rate of $\theta_t$ with full gradient $\nabla J(\theta_t)$, which exponentially decays with $t$. The second term is introduced by the bias and variance of the gradient estimator which decays sublinearly with $t$. The last term arises due to the accumulated tracking error $z_t$, which specifically arises in two time-scale algorithms, and captures how accurately $w_t$ tracks $\psi(\theta_t)$. Thus, if $w_t$ tracks the stationary point $\psi(\theta_t)$ in each step perfectly, then we have only the first two terms in \eqref{thm1eq1_1}, which matches the results of one time-scale TD learning \cite{bhandari2018finite,dalal2018finite}. Theorem \ref{thm1} indicates that asymptotically, \eqref{thm1eq1_1} is dominated by the tracking error term $\mathcal{O}(h(\sigma,\nu)^{1-\epsilon^\prime})$, which depends on the diminishing rate of $\alpha_t$ and $\beta_t$. Since both $\epsilon$ and $\epsilon^\prime$ can be arbitrarily small, if the diminishing rate of $\alpha_t$ is close to that of $\beta_t$, then the tracking error is dominated by the slow drift, which has an approximate order $\mathcal{O}(1/t^{2(\sigma-\nu)})$; if the diminishing rate of $\alpha_t$ is much faster than that of $\beta_t$, then the tracking error is dominated by the accumulated bias, which has an approximate order $\mathcal{O}(\log t/t^\nu)$. Moreover, \eqref{thm1eq3} and \eqref{thm1eq2} suggest that for any fixed $\sigma\in(0,1]$, the optimal diminishing rate of $\beta_t$ is achieved by $\sigma=\frac{3}{2}\nu$. 

From the technical standpoint, we develop novel techniques to handle the interaction between the training error and the tracking error and sharpen the error bounds recursively. The proof sketch and the detailed steps are provided in Appendix \ref{proof_thm1}.

\vspace{-0.2 cm}
\subsection{Non-asymptotic Analysis under Constant Stepsize}\label{sec: constant}

As we remark in \Cref{sec:intro}, it has been demonstrated by empirical results \cite{dann2014policy} that the standard TD under constant stepsize not only converges fast, but also has comparable training error as that under diminishing stepsize. However, this does not hold for TDC. When the two variables in TDC are updated both under constant stepsize, our experiments demonstrate that constant stepsize yields fast convergence, but has large training error. In this subsection, we aim to explain why this happens by analyzing the convergence rate of the two variables in TDC, and the impact of one on the other. 

The following theorem provides the convergence result for TDC with the two variables iteratively updated respectively by two different constant stepsizes.
\begin{theorem}\label{thm2}
Consider the projected TDC algorithm in \cref{algorithm1_1,algorithm1_2}. Suppose Assumption \ref{ass1}-\ref{ass3} hold. Suppose we apply constant stepsize $\alpha_t=\alpha$, $\beta_t=\beta$ and $\alpha=\eta\beta$ which satisfy $\eta>0$, $0<\alpha<\frac{1}{|\lambda_\theta|}$ and $0<\beta<\frac{1}{|\lambda_w|}$. We then have for $t\geq0$:
	\begin{flalign}
	\mE\ltwo{\theta_t-\theta^*}^2 &\leq (1-|\lambda_\theta|\alpha)^t (\ltwo{\theta_0-\theta^*}^2 + C_1)\nonumber \\
	&\quad + C_2\max\{ \alpha, \alpha\ln\frac{1}{\alpha}  \} + (C_3\max\{\beta, \beta\ln\frac{1}{\beta}  \} + C_4\eta)^{0.5}\label{thm2: eq1}\\
	\mE\ltwo{z_t}^2 &\leq (1-|\lambda_w|\beta)^t\ltwo{z_0}^2 + C_5\max\{\beta, \beta\ln\frac{1}{\beta}  \} + C_6\eta,\label{thm2: eq2}
	\end{flalign}
	where  $C_1 = 4\gamma\rho_{\max}R_\theta R_w \frac{1-(1- |\lambda_\theta|\alpha)^{T+1}}{|\lambda_\theta|(1-|\lambda_\theta|\alpha)^{T+1}}$ with $T = \lceil\frac{\ln[C_5\max\{\beta, \ln(\frac{1}{\beta})\beta  \}/\ltwo{z_0}^2]}{-\ln(1-|\lambda_w|\beta)}\rceil$, and $C_2$, $C_3$, $C_4$, $C_5$ and $C_6$ are positive constants independent of $\alpha$ and $\beta$. For explicit expressions of $C_2$, $C_3$, $C_4$, $C_5$ and $C_6$, please refer to \eqref{thm2: c2}, \eqref{thm2: c3}, \eqref{thm2: c4},  \eqref{thm2: c5}, and  \eqref{thm2: c6} in the Supplementary Materials.
\end{theorem}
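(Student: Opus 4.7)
\textbf{Proof sketch for Theorem~\ref{thm2}.} The plan is to analyze the tracking error $z_t$ as a single time-scale iterate perturbed by the slow drift of $\psi(\theta_t)$, and then to feed its bound into a contraction analysis of $\theta_t-\theta^\ast$. Markovian bias on both scales is controlled by the standard shift-by-$\tau$ technique with $\tau=\mathcal{O}(\ln(1/\beta))$ via Assumption~\ref{ass3}, and non-expansiveness of the projections $\mcpi_{R_\theta},\mcpi_{R_w}$ lets me drop them, since $\theta^\ast$ and $\psi(\theta_t)$ lie in the respective balls by the hypothesis $R_w\geq 2\ltwo{C^{-1}}\ltwo{A}R_\theta$.

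\emph{Tracking error.} Linearity of $\psi$ gives $z_{t+1}=z_t+\beta(A_t\theta_t+b_t+C_t w_t)+C^{-1}A(\theta_{t+1}-\theta_t)$. Substituting $w_t=\psi(\theta_t)+z_t$ and using the identity $A\theta+b+C\psi(\theta)=0$, the expected drift of the $\beta$-term reduces at stationarity to $\beta C z_t$, i.e.\ a pure contraction $(1-|\lambda_w|\beta)$, plus a Markovian bias of order $\beta^2\ln(1/\beta)$ from the shift-by-$\tau$ argument. The perturbation $C^{-1}A(\theta_{t+1}-\theta_t)$ has norm $\mathcal{O}(\alpha)$ and enters linearly through $2\langle z_t,\,C^{-1}A(\theta_{t+1}-\theta_t)\rangle$, whose expectation is of order $\mathcal{O}(\alpha)$ per step (not $\mathcal{O}(\alpha^2)$) because $\ltwo{z_t}$ is uniformly bounded by projection. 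Summing this linear term against $(1-|\lambda_w|\beta)$ produces $\mathcal{O}(\alpha/\beta)=\mathcal{O}(\eta)$, yielding \eqref{thm2: eq2}.

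\emph{Training error.} Splitting $w_t=\psi(\theta_t)+z_t$ inside the $\theta$-update gives
\[\theta_{t+1}-\theta^\ast=(\theta_t-\theta^\ast)+\alpha\bigl[A_t\theta_t+b_t+B_t\psi(\theta_t)\bigr]+\alpha B_t z_t.\]
The bracketed stochastic TDC update has expectation $(A-BC^{-1}A)(\theta_t-\theta^\ast)$ whose symmetric part is controlled by $\lambda_\theta/2$; a shift-by-$\tau$ analysis here yields per-step contraction $(1-|\lambda_\theta|\alpha)$ with bias $\mathcal{O}(\alpha^2\ln(1/\alpha))$. The genuine two-time-scale coupling $2\alpha\langle\theta_t-\theta^\ast,B_t z_t\rangle$ I will bound \emph{linearly}, not via Young's inequality, by $2\alpha\gamma\rho_{\max}R_\theta\sqrt{\mE\ltwo{z_t}^2}$ using Jensen---this is precisely what produces the exponent $0.5$ in \eqref{thm2: eq1}. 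Unrolling yields
\[\mE\ltwo{\theta_t-\theta^\ast}^2\leq(1-|\lambda_\theta|\alpha)^t\ltwo{\theta_0-\theta^\ast}^2+\mathcal{O}(\alpha\ln(1/\alpha))+\mathcal{O}(1)\sum_{k=0}^{t-1}\alpha(1-|\lambda_\theta|\alpha)^{t-1-k}\sqrt{\mE\ltwo{z_k}^2}.\]
Substituting the Step-1 bound and using $\sqrt{a+b+c}\leq\sqrt a+\sqrt b+\sqrt c$, the steady-state pieces $\sqrt{\beta\ln(1/\beta)}$ and $\sqrt\eta$ pass through the geometric sum $\sum\alpha(1-|\lambda_\theta|\alpha)^{t-1-k}\leq 1/|\lambda_\theta|$ to give the $(C_3\max\{\beta,\beta\ln(1/\beta)\}+C_4\eta)^{0.5}$ contribution. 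The transient piece $(1-|\lambda_w|\beta)^{k/2}\ltwo{z_0}$ is handled by splitting the sum at the mixing time $T=\lceil\ln[C_5\max\{\beta,\ln(1/\beta)\beta\}/\ltwo{z_0}^2]/(-\ln(1-|\lambda_w|\beta))\rceil$ from the statement: for $k>T$ the transient has fallen below the steady-state level, while for $k\leq T$ the geometric sum equals $(1-|\lambda_\theta|\alpha)^t(1-(1-|\lambda_\theta|\alpha)^{T+1})/[|\lambda_\theta|(1-|\lambda_\theta|\alpha)^{T+1}]$, which is exactly the shape of $C_1$ (with prefactor $4\gamma\rho_{\max}R_\theta R_w$ coming from $\ltwo{B_t}\cdot R_\theta\cdot\ltwo{z_0}$).

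\emph{Main obstacle.} The crux is controlling the coupling term $\alpha B_t z_t$ so as to retain the sharp $(\cdot)^{0.5}$ on the tracking-error contribution in \eqref{thm2: eq1} while simultaneously keeping \eqref{thm2: eq2} \emph{without} a square root: a naive Young step on $\langle\theta_t-\theta^\ast,B_t z_t\rangle$ would produce an $\eta$ (not $\sqrt\eta$) dependence in the $\theta$ bound and would miss the true price of the stepsize ratio. A secondary difficulty is running the shift-by-$\tau$ arguments self-consistently on both scales: the bias bounds require $\tau\alpha,\tau\beta$ to be small with $\tau\asymp\ln(1/\beta)$, which is why the hypotheses must restrict $\alpha,\beta$ by $1/|\lambda_\theta|,1/|\lambda_w|$.
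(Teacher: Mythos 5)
Your proposal matches the paper's proof essentially step for step: the same reformulation via the tracking error $z_t$ with projections dropped by non-expansiveness, the same shift-by-$\tau$ treatment of Markovian bias yielding the $\max\{\beta,\beta\ln\frac{1}{\beta}\}$ and $\max\{\alpha,\alpha\ln\frac{1}{\alpha}\}$ terms, the same \emph{linear} (per-step $\mathcal{O}(\alpha)$) handling of the slow-drift term $\langle C^{-1}A(\theta_{i+1}-\theta_i),z_i\rangle$ producing $C_6\eta$, and the same Jensen/Cauchy--Schwarz bound on $2\alpha\langle B_i z_i,\theta_i-\theta^*\rangle$ with the sum split at $T$, which is exactly how the paper obtains the exponent $0.5$ and the transient constant $C_1$. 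The only cosmetic difference is that the paper bounds $\mE\ltwo{z_i}^2$ by $4R_w^2$ for $i\leq T$ and by its steady-state value for $i>T$ rather than propagating the transient factor $(1-|\lambda_w|\beta)^{i}$ explicitly through the geometric sum, but this is the same splitting argument.
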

Theorem \ref{thm2} shows that TDC with constant stepsize converges to a neighborhood of $\theta^*$ exponentially fast. The size of the neighborhood depends on the second and the third terms of the bound in \eqref{thm2: eq1}, which arise from the bias and variance of the update of $\theta_t$ and the tracking error $z_t$ in \eqref{thm2: eq2}, respectively. Clearly, the convergence $z_t$, although is also exponentially fast to a neighborhood, is under a different rate due to the different condition number. We further note that as the stepsize parameters $\alpha$, $\beta$ approach $0$ in a way such that $\alpha/\beta\rightarrow0$, $\theta_t$ approaches to $\theta^*$ as $t\rightarrow\infty$, which matches the asymptotic convergence result for two time-scale TDC under constant stepsize in \cite{yu2017convergence}. 

\textbf{Diminishing vs Constant Stepsize:} We next discuss the comparison between TDC under diminishing stepsize and constant stepsize. Generally, Theorem \ref{thm1} suggests that diminishing stepsize yields better converge guarantee (i.e., converges exactly to $\theta^*$) than constant stepsize shown in \Cref{thm2} (i.e., converges to the neighborhood of $\theta^*$). In practice, constant stepsize is recommended because diminishing stepsize may take much longer time to converge. 
However, as Figure \ref{fig: 2} in Section \ref{exp: sub2} shows, although TDC with large constant stepsize converges fast, the training error due to the convergence to the neighborhood is significantly worse than the diminishing stepsize. More specifically, when $\eta=\alpha/\beta$ is fixed, as $\alpha$ grows, the convergence becomes faster, but as a consequence, the term $(C_3\max\{\beta, \beta\ln\frac{1}{\beta}  \} + C_4\eta)^{0.5}$ due to the tracking error increases and results in a large training error. Alternatively, if $\alpha$ gets small so that the training error is comparable to that under diminishing stepsize, then the convergence becomes very slow. This suggests that simply setting the stepsize to be constant for TDC does not yield desired performance. This motivates us to design an appropriate update scheme for TDC such that it can enjoy as fast error convergence rate as constant stepsize offers, but still have comparable accuracy as diminishing stepsize enjoys.

\vspace{-0.2cm}
\subsection{TDC under Blockwise Diminishing Stepsize}
In this subsection, we propose a blockwise diminishing stepsize scheme for TDC (see \Cref{algorithm_stagewise}), and study its theoretical convergence guarantee. In \Cref{algorithm_stagewise}, we define $t_s=\sum_{i=0}^{s}T_s$.
\begin{algorithm}
	\caption{Blockwise Diminishing Stepsize TDC}
	\label{algorithm_stagewise}
	\begin{algorithmic}[1]
		\INPUT $\theta_{0,0}=\theta_0$, $w_{0,0}=w_0=0$, $T_0=0$, block index $S$
		\FOR{$s=1, 2, ..., S$}
		\STATE $\theta_{s,0}=\theta_{s-1}$, $w_{s,0} =w_{s-1}$
		\FOR{$i = 1, 2, ...,T_s$}
		\STATE Sample $(s_{t_{s-1}+i}, a_{t_{s-1}+i}, s_{t_{s-1}+i+1}, r_{t_{s-1}+i})$ from trajetory
		\STATE $\theta_{s,i}=\mcpi_{R_\theta}\left(\theta_{s,i-1} + \alpha_s(A_{t_{s-1}+i}\theta_{s,i-1}+b_{t_{s-1}+i}+B_{t_{s-1}+i} w_{s,i-1})\right)$
		\STATE $w_{s,i}=\mcpi_{R_w}\left( w_{s,i-1} + \beta_s(A_{t_{s-1}+i}\theta_{s,i-1}+b_{t_{s-1}+i}+C_{t_{s-1}+i} w_{s,i-1}) \right)$
		\ENDFOR
		\STATE $\theta_s=\theta_{s,T_s}$, $w_s=w_{s,T_s}$
		\ENDFOR
		\OUTPUT $\theta_S$, $w_S$
	\end{algorithmic}
\end{algorithm}


The idea of \Cref{algorithm_stagewise} is to divide the iteration process into blocks, and diminish the stepsize blockwisely, but keep the stepsize to be constant within each block. In this way, within each block, TDC can decay fast due to constant stepsize and still achieve an accurate solution due to blockwisely decay of the stepsize, as we will demonstrate in \Cref{experiment}. More specifically, the constant stepsizes $\alpha_s$ and $\beta_s$ for block $s$ are chosen to decay geometrically, such that the tracking error and accumulated variance and bias are asymptotically small; and the block length $T_s$ increases geometrically across blocks, such that the training error $\mE\ltwo{\theta_s-\theta^*}^2$ decreases geometrically blockwisely. We note that the design of the algorithm is inspired by the method proposed in \cite{yang2018does} for conventional optimization problems.

The following theorem characterizes the convergence of \Cref{algorithm_stagewise}.
\begin{theorem}\label{thm3}
	Consider the projected TDC algorithm with blockwise diminishing stepsize as in \Cref{algorithm_stagewise}. Suppose Assumptions \ref{ass1}-\ref{ass3} hold. Suppose $\max\{ \log(1/\alpha_s)\alpha_s, \alpha_s  \}\leq  \min\{\epsilon_{s-1}/(4C_7), 1/|\lambda_x| \}$, $\beta_s=\eta\alpha_s$ and $T_s= \lceil \log_{1/(1-|\lambda_x|\alpha_s)}4\rceil$, where $\lambda_x<0$ and $C_7>0$ are constant independent of $s$ (see \eqref{eq: update_x_II} and \eqref{thm3: C_7} in the Supplementary Materials for explicit expression of $\lambda_x$ and $C_7$), $\epsilon_s=\ltwo{\theta_0-\theta^*}/2^s$ and $\eta\geq 1/2\max\{ 0, \lambda_{\min}(C^{-1}(A^\top + A))  \}$. Then, after $S=\lceil \log_2(\epsilon_0/\epsilon) \rceil$ blocks, we have
	\begin{flalign*}
		\mE\ltwo{\theta_{S}-\theta^*}^2&\leq \epsilon.
	\end{flalign*}
	The total sample complexity is $\mathcal{O}(\frac{1}{\epsilon^{1+\xi}})$, where $\xi>0$ can be any arbitrarily small constant.
\end{theorem}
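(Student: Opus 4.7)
The plan is to establish the bound by induction on the block index $s$, showing that the target error halves per block (i.e.\ $\mE\ltwo{\theta_s-\theta^*}^2 \leq \epsilon_s$), so that after $S = \lceil \log_2(\epsilon_0/\epsilon)\rceil$ blocks the accuracy $\epsilon$ is reached. Each block of \Cref{algorithm_stagewise} executes $T_s$ iterations of constant-stepsize TDC with stepsizes $(\alpha_s, \beta_s) = (\alpha_s, \eta\alpha_s)$ starting from $(\theta_{s-1}, w_{s-1})$, so the argument reduces to iterating a per-block bound that is a refinement of Theorem \ref{thm2}.

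The key refinement is that one cannot apply Theorem \ref{thm2} directly: its tracking-error bound carries a residual $C_6\eta$ (and its training-error bound carries $(\cdots + C_4\eta)^{1/2}$) that does not shrink with $\alpha_s,\beta_s\to 0$ while $\eta$ is held fixed, which would produce a non-vanishing error floor across blocks. Instead, the plan is to analyze a combined error variable $x_s$ coupling $\theta_s-\theta^*$ and the tracking error $z_s$, and to derive a Lyapunov-type per-block contraction
\begin{equation*}
\mE\ltwo{x_s}^2 \leq (1-|\lambda_x|\alpha_s)^{T_s}\,\mE\ltwo{x_{s-1}}^2 + C_7\max\{\alpha_s, \alpha_s\log(1/\alpha_s)\},
\end{equation*}
where $\lambda_x<0$ is a single contraction rate for the coupled $(\theta-\theta^*, z)$ dynamics. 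The stepsize-ratio condition $\eta \geq 1/(2\max\{0,\lambda_{\min}(C^{-1}(A^\top+A))\})$ is exactly what guarantees the existence of such a common contraction rate, by balancing the drift of $\psi(\theta_t)$ against the contraction of $w_t$ toward $\psi(\theta_t)$.

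Given this per-block recursion, the prescribed $T_s = \lceil \log_{1/(1-|\lambda_x|\alpha_s)} 4 \rceil$ forces the geometric factor below $1/4$, and the prescribed $\max\{\alpha_s\log(1/\alpha_s), \alpha_s\} \leq \epsilon_{s-1}/(4C_7)$ keeps the residual term below $\epsilon_{s-1}/4$. Combined with the inductive hypothesis $\mE\ltwo{x_{s-1}}^2 \leq \epsilon_{s-1}$, these imply $\mE\ltwo{x_s}^2 \leq \epsilon_{s-1}/2 = \epsilon_s$, and $S = \lceil \log_2(\epsilon_0/\epsilon)\rceil$ iterations of this contraction deliver the claim. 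For the sample complexity, the binding stepsize condition forces $\alpha_s = \Theta(\epsilon_{s-1}/\log(1/\epsilon_{s-1}))$, hence $T_s = \Theta(1/\alpha_s) = \Theta(s\cdot 2^{s-1})$, and summing gives $\sum_{s=1}^S T_s = \Theta(S\cdot 2^S) = \Theta(\epsilon^{-1}\log(1/\epsilon))$, which is $\mathcal{O}(\epsilon^{-(1+\xi)})$ for arbitrarily small $\xi>0$ since $\log(1/\epsilon) = O(\epsilon^{-\xi})$.

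The main obstacle is establishing the joint per-block contraction itself. This requires bounding a weighted combined quantity such as $\ltwo{\theta-\theta^*}^2 + \kappa\ltwo{z}^2$ by a single exponentially contracting expression, using the lower bound on $\eta$ to dominate the cross terms produced by the slow drift of $\psi(\theta_t)$, and carrying the Markovian-bias analysis of Theorem \ref{thm2} through at the level of this combined variable rather than for the two errors separately. Identifying the explicit $\lambda_x$ and $C_7$ that make the block-to-block induction close, and verifying that the contraction holds uniformly as the stepsize $\alpha_s$ shrinks across blocks, constitute the technical heart of the proof.
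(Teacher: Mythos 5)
Your proposal is correct and follows the paper's architecture: a per-block contraction of a combined iterate, $\mE\ltwo{x_s-x^*}^2 \le (1-|\lambda_x|\alpha_s)^{T_s}\,\mE\ltwo{x_{s-1}-x^*}^2 + C_7\max\{\alpha_s,\alpha_s\log(1/\alpha_s)\}$, closed by induction with the geometric factor forced below $1/4$ by the choice of $T_s$ and the residual below $\epsilon_{s-1}/4$ by the stepsize condition, followed by the same complexity count $\sum_s T_s = \Theta(S\cdot 2^S) = \mathcal{O}(\epsilon^{-(1+\xi)})$ (the paper asserts this; your explicit computation via $\alpha_s=\Theta(\epsilon_{s-1}/\log(1/\epsilon_{s-1}))$ fills it in correctly). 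The one genuine divergence is the component you flag as the technical heart: you plan to establish the contraction via a weighted Lyapunov function $\ltwo{\theta-\theta^*}^2+\kappa\ltwo{z}^2$ in $(\theta,z)$ coordinates, dominating the cross terms created by the drift of $\psi(\theta_t)$. The paper sidesteps this entirely: since $\beta_s=\eta\alpha_s$ puts both iterates on a single time scale within each block, it stacks $x_t=[\theta_t^\top, w_t^\top]^\top$ (noting $\psi(\theta^*)=0$, so $x^*=[\theta^{*\top},0^\top]^\top$) and writes the block update as one linear SA recursion $x_{t+1}=\mcpi_X(x_t+\alpha_s(G_tx_t+g_t))$ with $G=\bigl[\begin{smallmatrix} A & B\\ \eta A & \eta B\end{smallmatrix}\bigr]$; the condition on $\eta$ enters only through the known fact, cited from Maei (2011), that $G+G^\top$ is negative definite, yielding the common rate $\lambda_x=\lambda_{\max}(G+G^\top)<0$ immediately, after which the Markovian bias and variance are handled by the same mixing-time lemmas as in Theorems \ref{thm1} and \ref{thm2} applied to the combined noise term $\zeta_h(x_t,O_t)$. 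So the obstacle you anticipate is real in your coordinates---there you would also need to absorb the $\mathcal{O}(\alpha_s)$ term $C^{-1}A(\theta_{t+1}-\theta_t)$ in the $z$-recursion and tune the weight $\kappa$---but it dissolves under the paper's change of variables, which reduces the "technical heart" to an eigenvalue fact plus the already-developed single-time-scale machinery.
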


\Cref{thm3} indicates that the sample complexity of TDC under blockwise diminishing stepsize is slightly better than that under diminishing stepsize.  Our empirical results (see Section \ref{exp: sub3}) also demonstrate that blockwise diminishing stepsize yields as fast convergence as constant stepsize and has comparable training error as diminishing stepsize. However, we want to point out that the advantage of blockwise diminishing stepsize does not come for free, rather at the cost of some extra parameter tuning in practice to estimate $\epsilon_0$, $|\lambda_x|$, $C_7$ and $\eta$; whereas diminishing stepsize scheme as guided by our \Cref{thm1} requires to tune at most three parameters to obtain desirable performance.

\vspace{-0.3cm}
\section{Experimental Results}\label{experiment}
In this section, we provide numerical experiments to verify our theoretical results and the efficiency of Algorithm \ref{algorithm1_1}. 
More precisely, we consider Garnet problems \cite{archibald1995generation} denoted as $\mathcal{G}(n_S,n_A,p,q)$, where $n_s$ denotes the number of states, $n_A$ denotes the number of actions, $p$ denotes the number of possible next states for each state-action pair, and $q$ denotes the number of features. 
The reward is state-dependent 
and both the reward and the feature vectors are generated randomly. The discount factor $\gamma$ is set to $0.95$ in all experiments. We consider the $\mathcal{G}(500,20,50,20)$ problem. For all experiments, we choose $\theta_0=w_0=0$. All plots report the evolution of the mean square error over $500$ independent runs.
\vspace{-0.2 cm}
\subsection{Optimal Diminishing Stepsize}\label{exp: sub1}
In this subsection, we provide numerical results to verify Theorem \ref{thm1}. We compare the performance of TDC updates with the same $\alpha_t$ but different $\beta_t$. We consider four different diminishing stepsize settings: (1) $c_\alpha=c_\beta=0.03$, $\sigma=0.15$; (2) $c_\alpha=c_\beta=0.18$, $\sigma=0.30$; (3) $c_\alpha=c_\beta=1$, $\sigma=0.45$; (4) $c_\alpha=c_\beta=4$, $\sigma=0.60$. For each case with fixed slow time-scale parameter $\sigma$, the fast time-scale stepsize $\beta_t$ has decay rate $\nu$ to be $\frac{1}{2}\sigma$, $\frac{1}{3}\sigma$, $\frac{5}{9}\sigma$, $\frac{2}{3}\sigma$, $\frac{5}{6}\sigma$, and $\sigma$. Our results are reported in Figure \ref{fig: 1}, in which for each case the left figure reports the overall iteration process and the right figure reports the corresponding zoomed tail process of the last 100000 iterations. It can be seen that in all cases, TDC iterations with the same slow time-scale stepsize $\sigma$ share similar error decay rates (see the left plot), and the difference among the fast time-scale parameter $\nu$ is reflected by the behavior of the error convergence tails (see the right plot). We observe that $\nu = \frac{2}{3}\sigma$ yields the best error decay rate. This corroborates \Cref{thm1}, which illustrates that the fast time-scale stepsize $\beta_t$ with parameter $\nu$ affects only the tracking error term in \eqref{thm1eq1_1}, that dominates the error decay rate asymptotically.
\begin{figure}[h]
	\vspace{-0.5cm}
	\centering 
	\subfigure[$\sigma=0.15$ (left: full; right: tail)]{\includegraphics[width=1.3in]{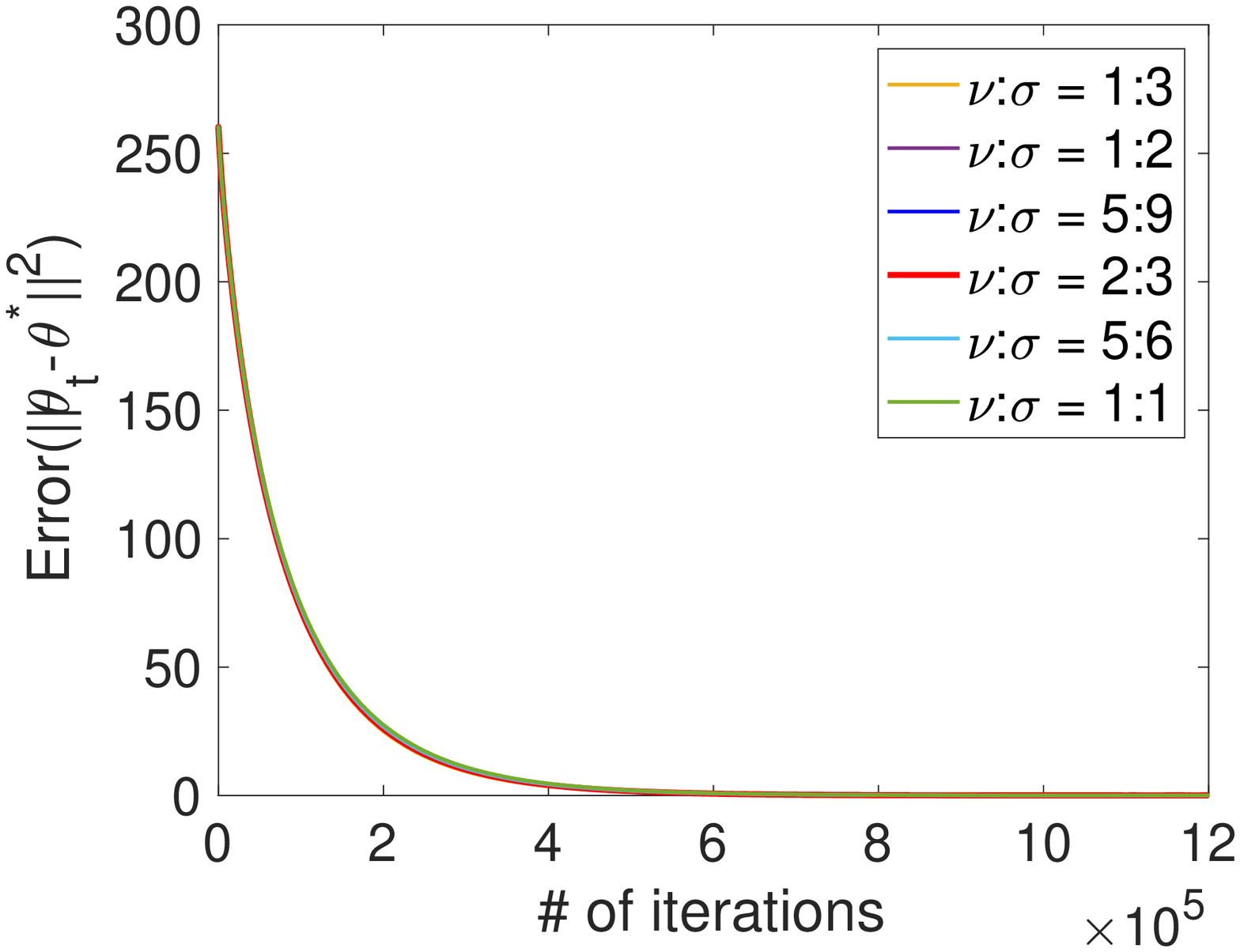}\includegraphics[width=1.3in]{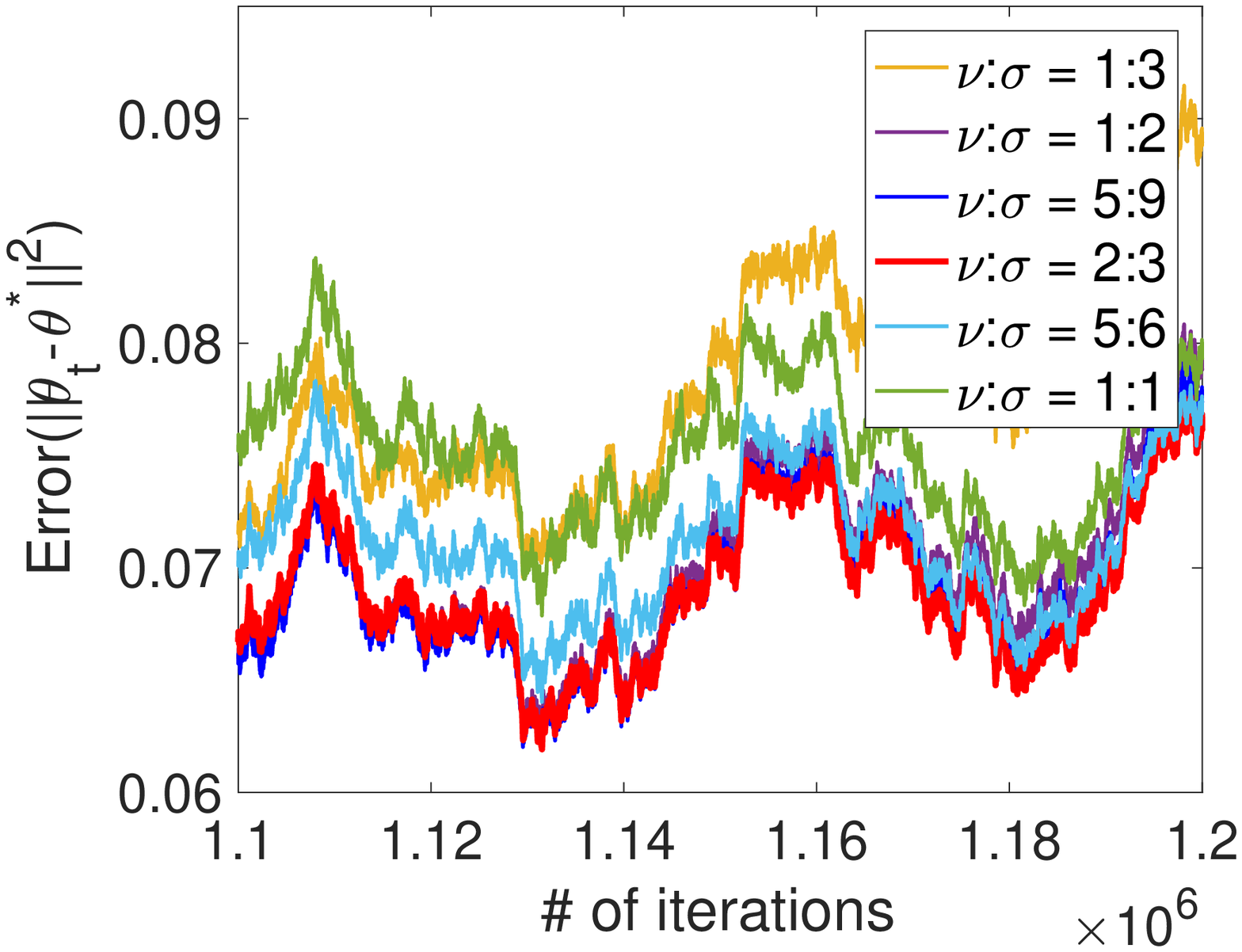}}
	\subfigure[$\sigma=0.3$ (left: full; right: tail)]{\includegraphics[width=1.3in]{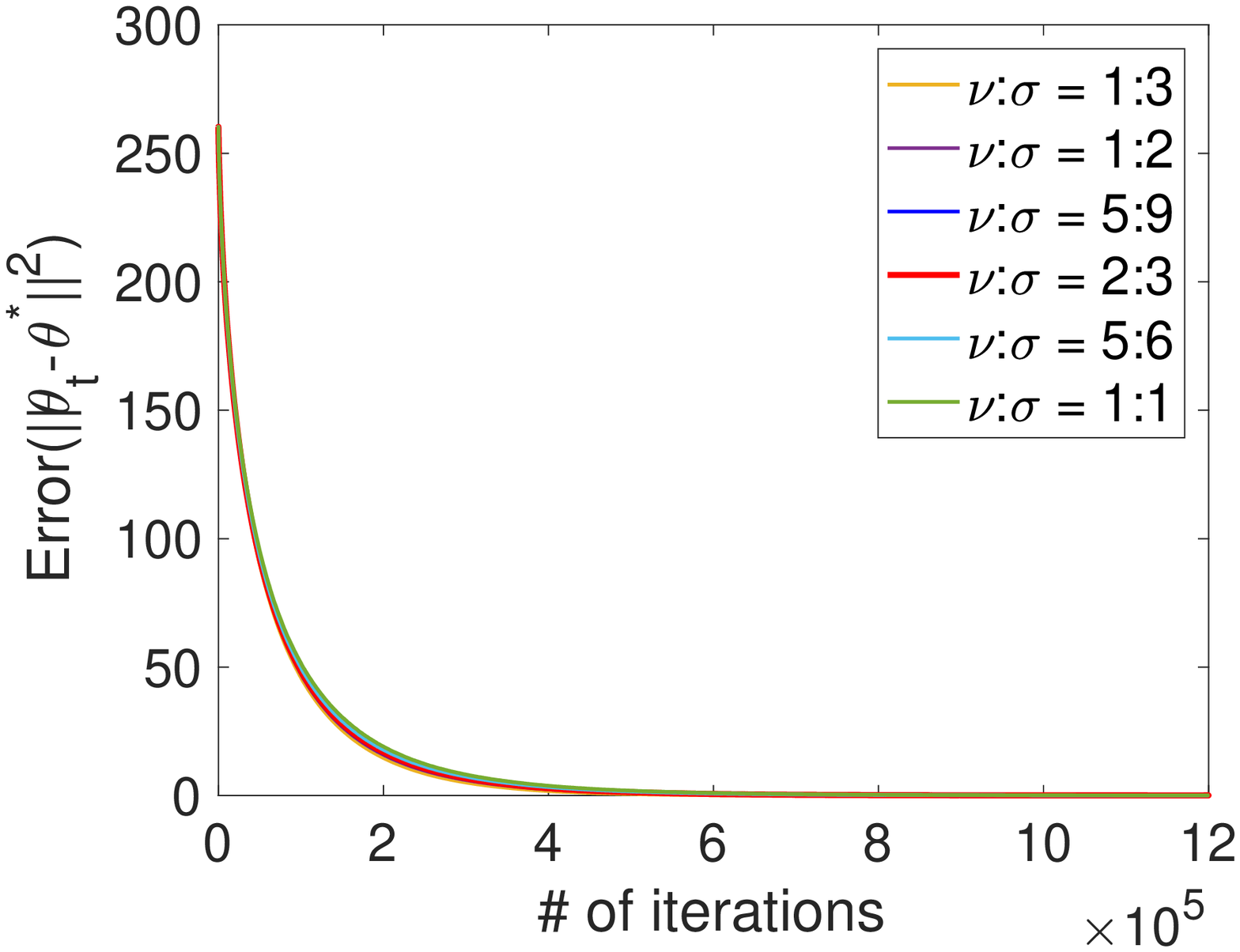}\includegraphics[width=1.3in]{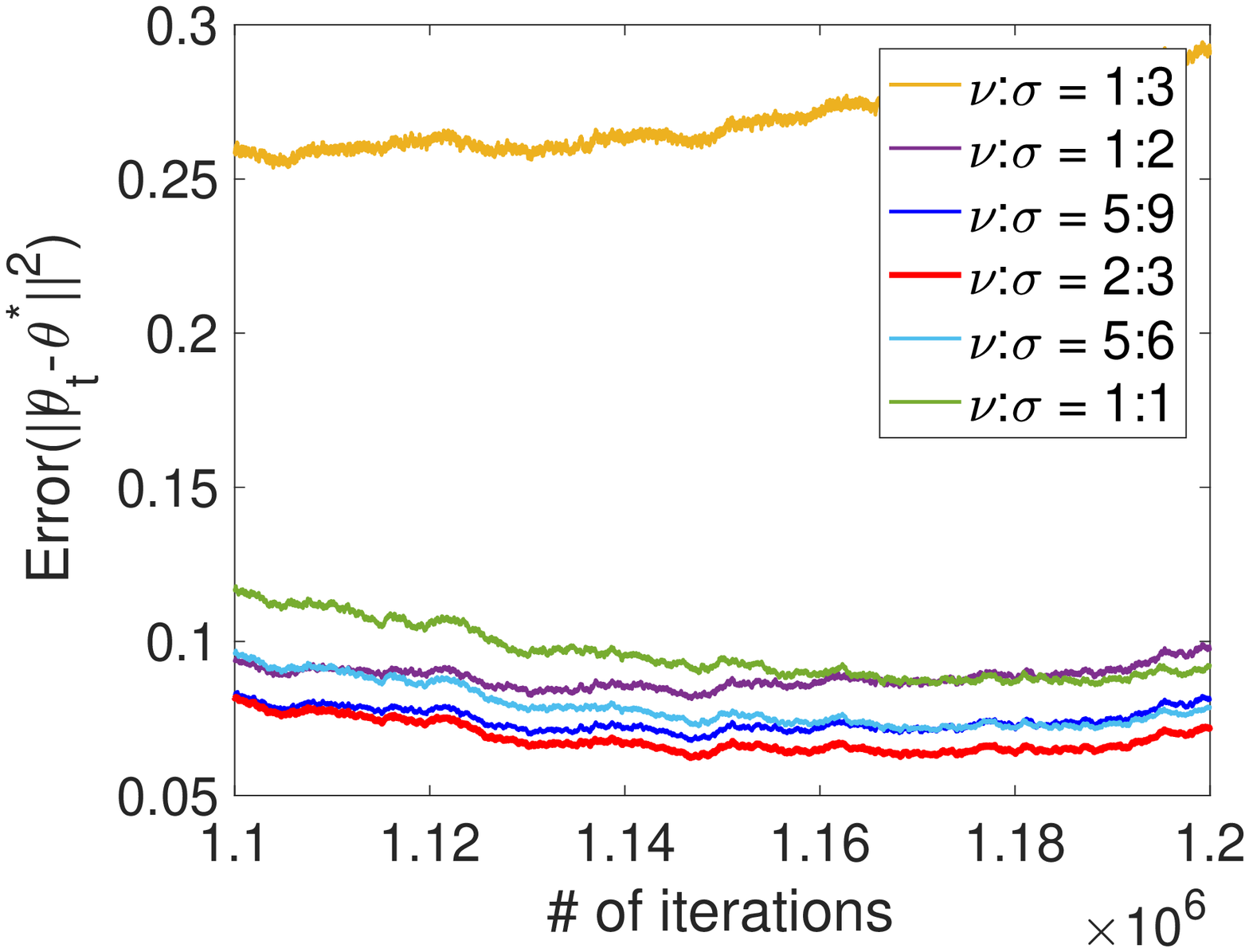}}
	\subfigure[$\sigma=0.45$ (left: full; right: tail)]{\includegraphics[width=1.3in]{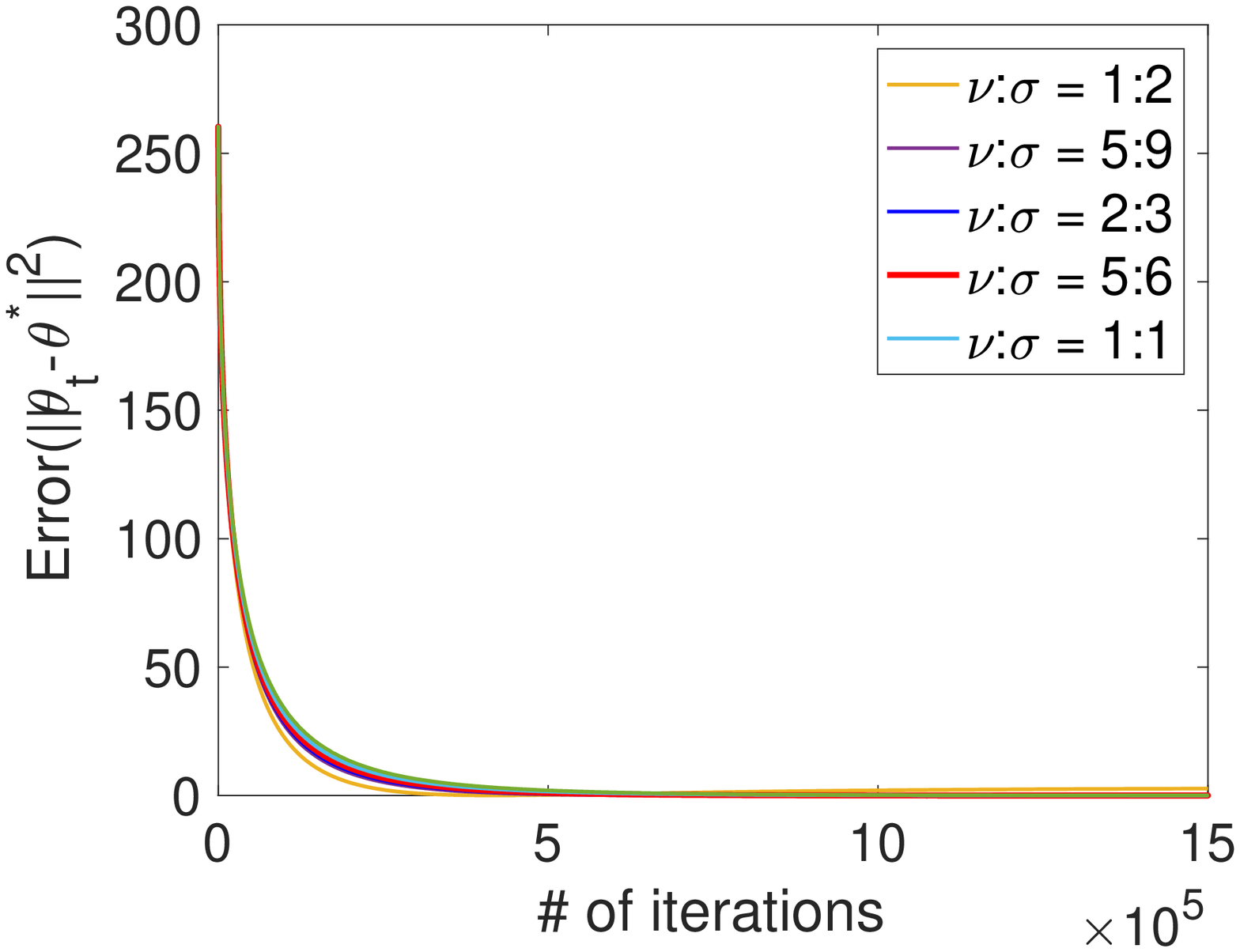}\includegraphics[width=1.3in]{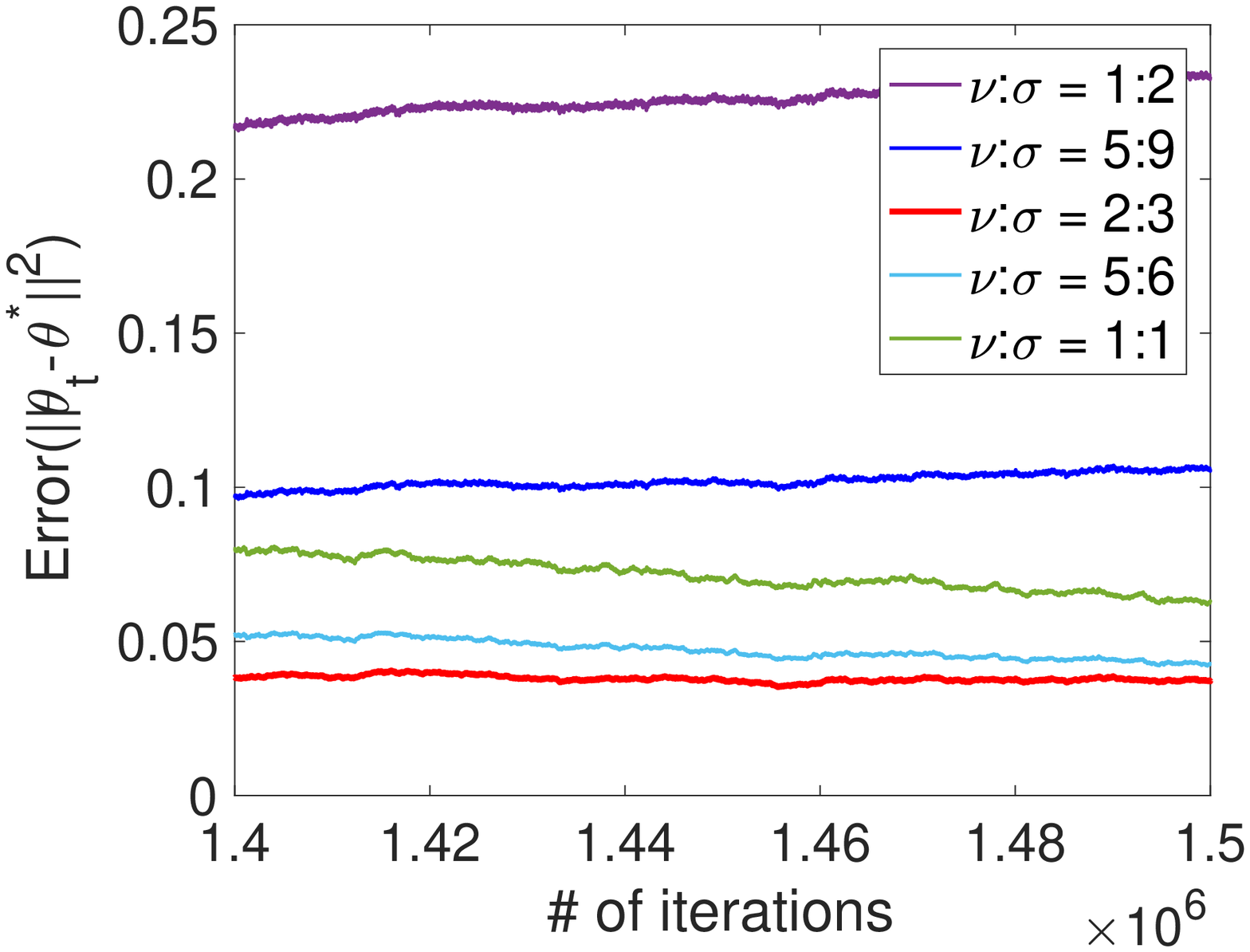}}
	\subfigure[$\sigma=0.6$ (left: full; right: tail)]{\includegraphics[width=1.3in]{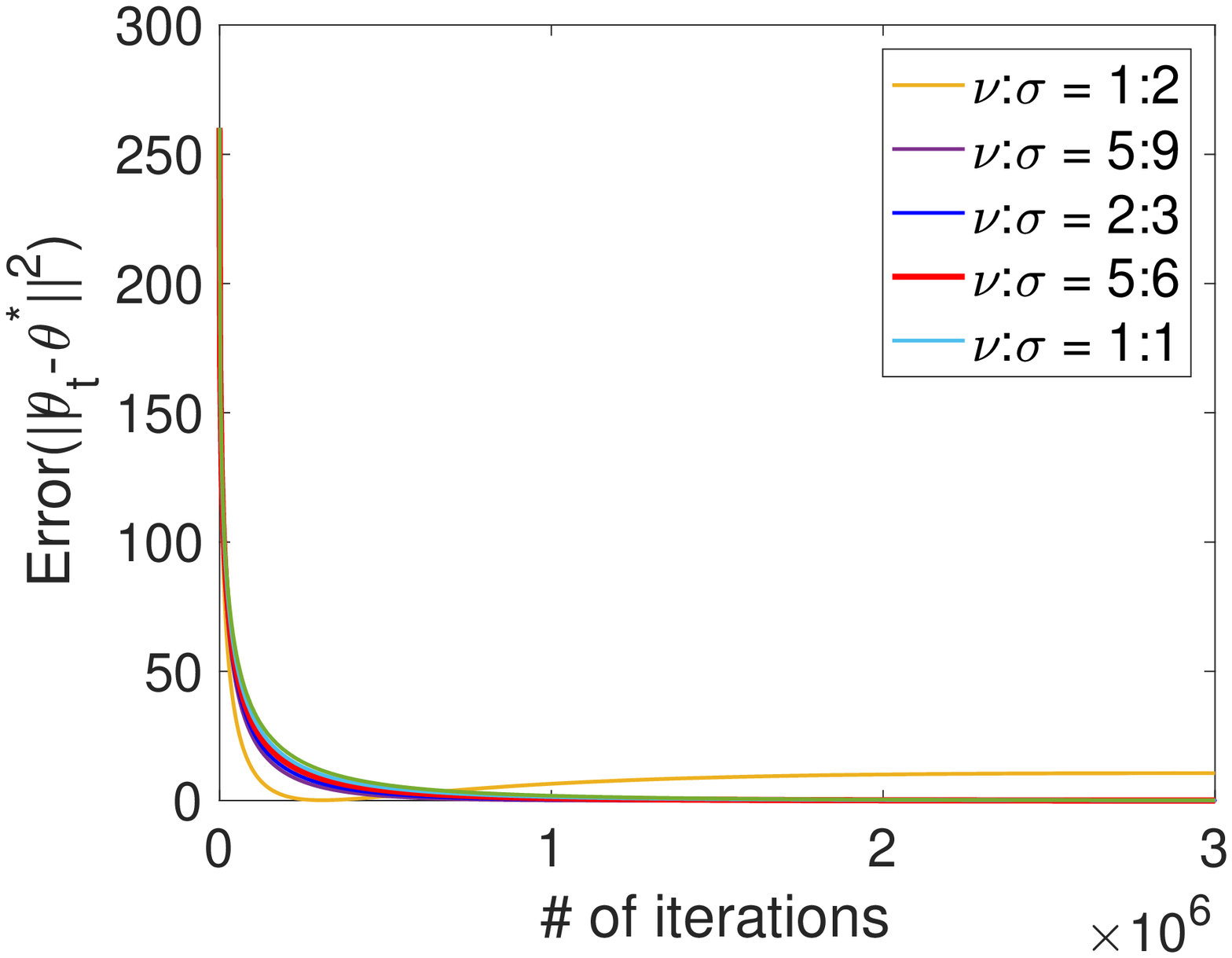}\includegraphics[width=1.3in]{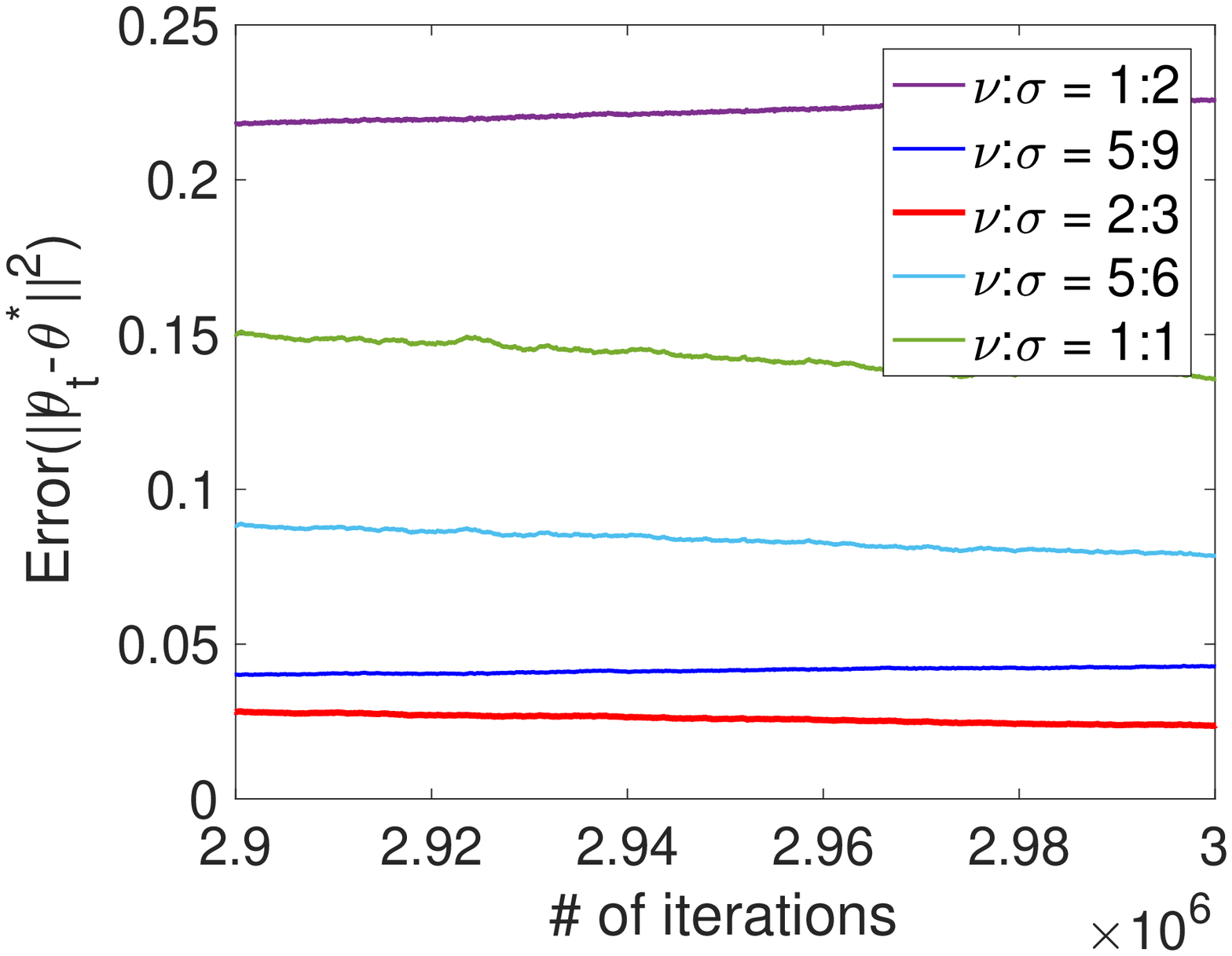}}
	\caption{Comparison among diminishing stepsize settings. For settings $\sigma=0.45$ and $\sigma=0.6$, the case $\nu:\sigma=1:3$ has much larger training error than others and is not included in the tail figures.} 
	\label{fig: 1}
	\vspace{-0.6cm}
\end{figure}

\subsection{Constant Stepsize vs Diminishing Stepsize}\label{exp: sub2}
In this subsection, we compare the error decay of TDC under diminishing stepsize with that of TDC under four different constant stepsizes. For diminishing stepsize, we set $c_\alpha=c_\beta$ and $\sigma=\frac{3}{2}\nu$, and tune their values to the best, which are given by $c_\alpha=c_\beta=1.8$, $\sigma=\frac{3}{2}\nu=0.45$. For the four constant-stepsize cases, we fix $\alpha$ for each case, and tune $\beta$ to the best. The resulting parameter settings are respectively as follows: $\alpha_t=0.01,\,\beta_t=0.006$; $\alpha_t=0.02,\,\beta_t=0.008$; $\alpha_t=0.05,\,\beta_t=0.02$; and $\alpha_t=0.1,\,\beta_t=0.02$. 
The results are reported in \Cref{fig: 2}, in which for both the training and tracking errors, the left plot illustrates the overall iteration process and the right plot illustrates the corresponding zoomed error tails. The results suggest that although some large constant stepsizes ($\alpha_t=0.05,\,\beta_t=0.02$ and $\alpha_t=0.1,\,\beta_t=0.02$) yield initially faster convergence than diminishing stepsize, they eventually oscillate around a large neighborhood of $\theta^*$ due to the large tracking error. Small constant stepsize ($\alpha_t=0.02,\,\beta_t=0.008$ and $\alpha_t=0.01,\,\beta_t=0.006$) can have almost the same asymptotic accuracy as that under diminishing stepsize, but has very slow convergence rate. We can also observe strong correlation between the training and tracking errors under constant stepsize, i.e., larger training error corresponds to larger tracking error, which corroborates \Cref{thm2} and suggests that the accuracy of TDC heavily depends on the decay of the tracking error $\ltwo{z_t}$.
\begin{figure}[h]
	\vspace{-0.5cm}
	\centering 
	\subfigure[Training error (left: full; right: tail)]{\includegraphics[width=1.3in]{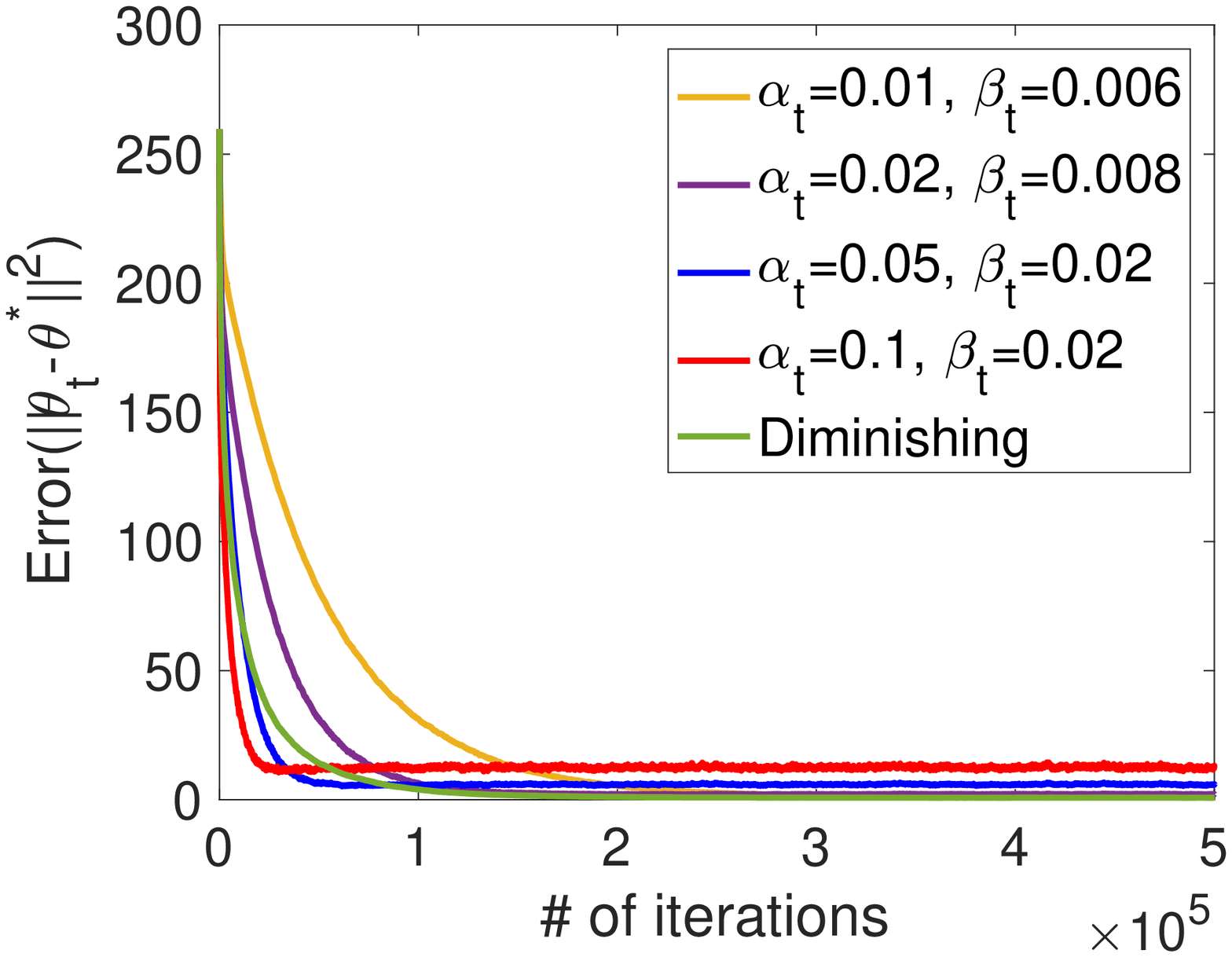}\includegraphics[width=1.3in]{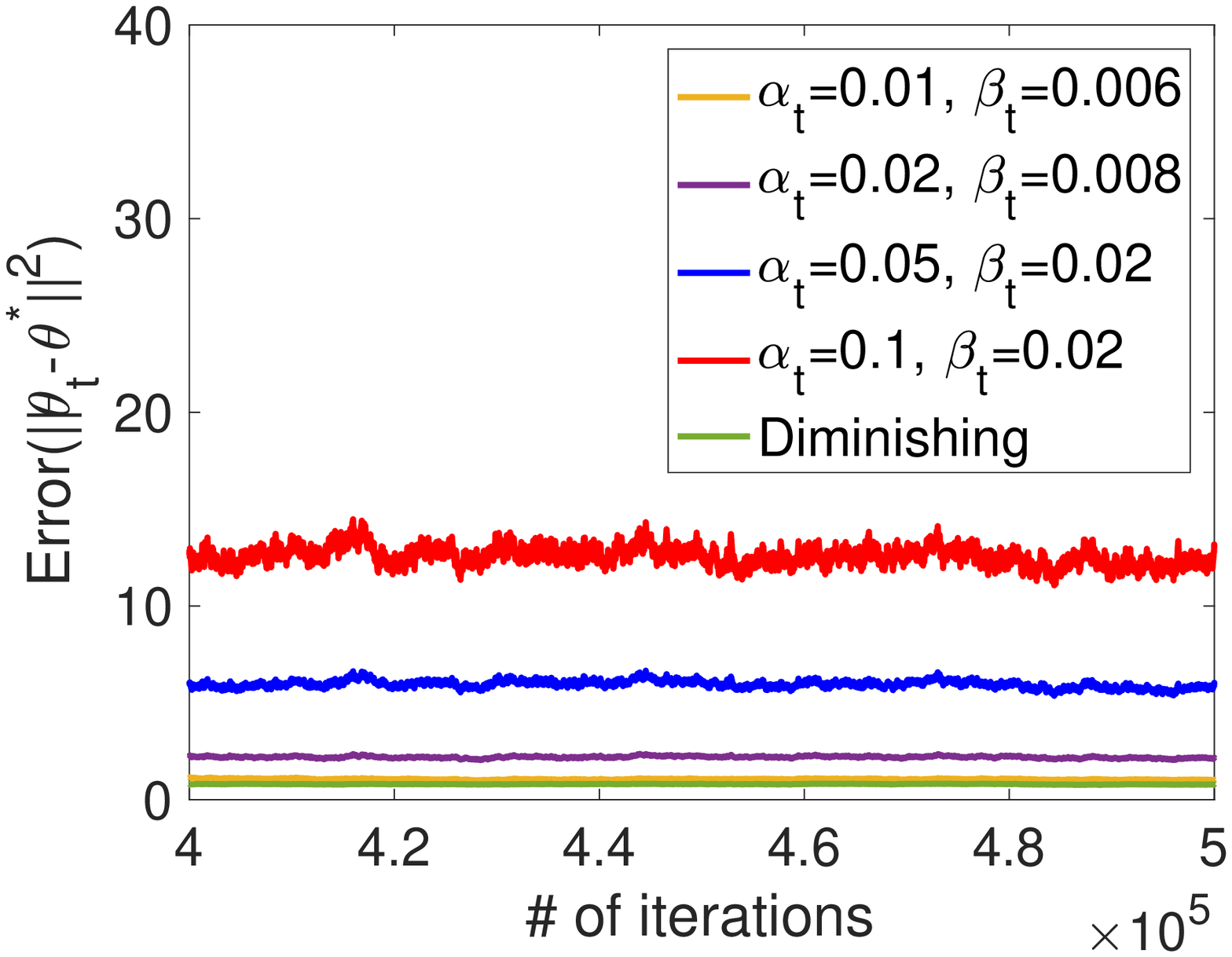}}
	\subfigure[Tracking error (left: full; right: tail)]{\includegraphics[width=1.3in]{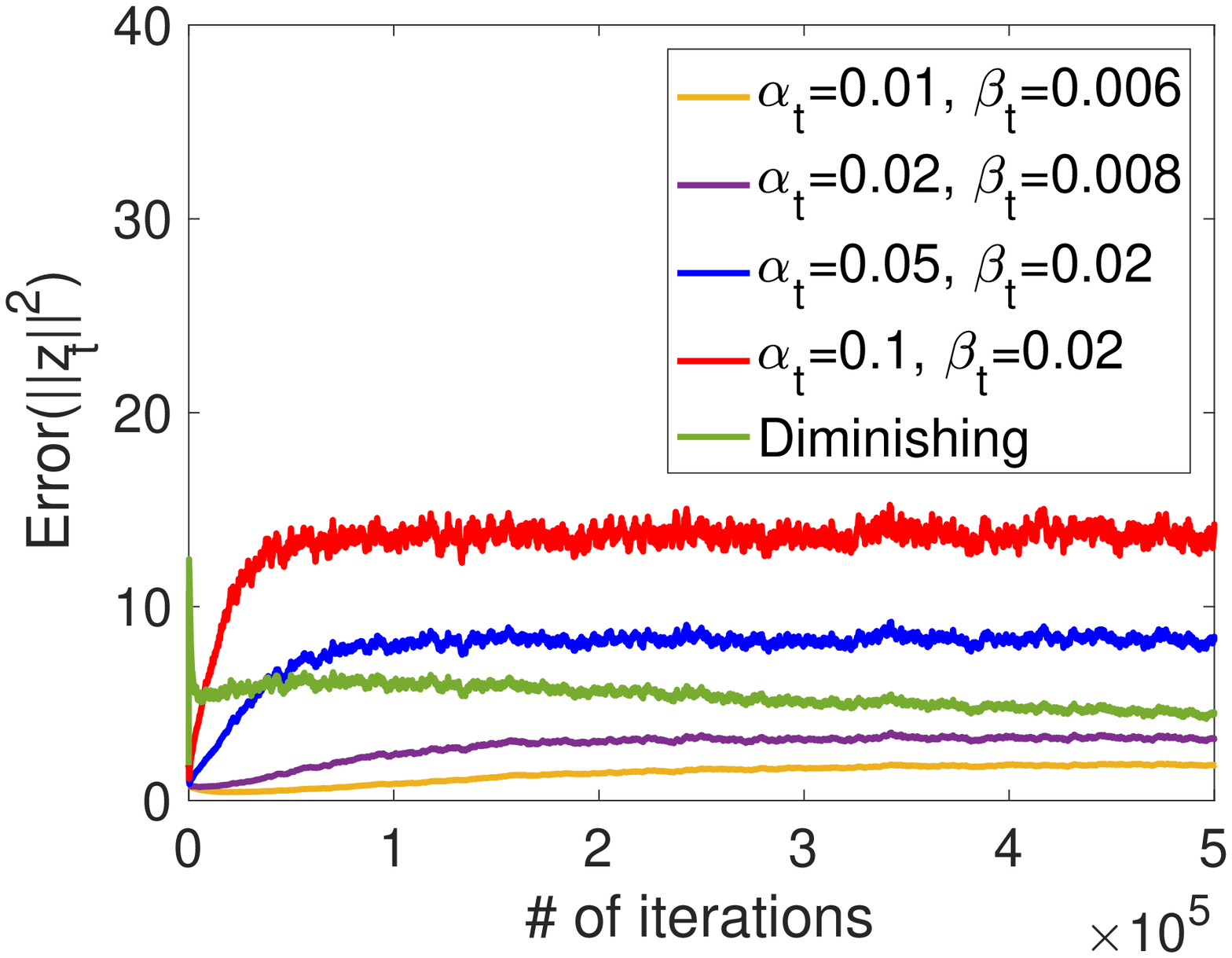}\includegraphics[width=1.3in]{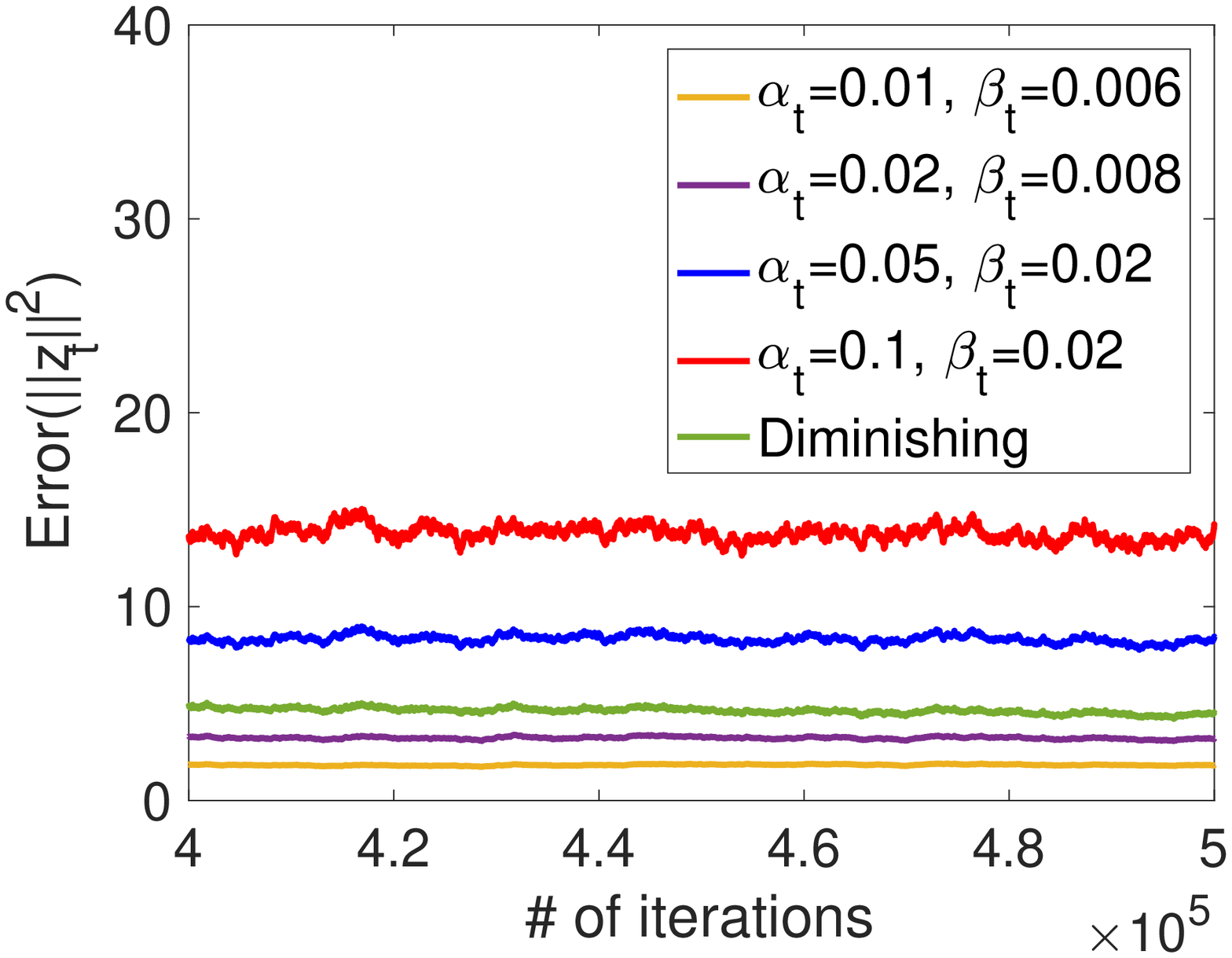}}
	\caption{Comparison between TDC updates under constant stepsizes and diminishing stepsize. } 
	\label{fig: 2}
	\vspace{-0.6cm}
\end{figure}
\subsection{Blockwise Diminishing Stepsize}\label{exp: sub3}
In this subsection, we compare the error decay of TDC under blockwise diminishing stepsize with that of TDC under diminishing stepsize and constant stepsize. We use the best tuned parameter settings as listed in Section \ref{exp: sub2} for the latter two algorithms, i.e., $c_\alpha=c_\beta=1.8$ and $\sigma=\frac{3}{2}\nu=0.45$ for diminishing stepsize, and $\alpha_t=0.1,\, \beta_t=0.02$ for constant stepsize. We report our results in Figure \ref{fig: 3}. It can be seen that TDC under blockwise diminishing stepsize 
converges faster than that under diminishing stepsize and almost as fast as that under constant stepsize. Furthermore, TDC under blockwise diminishing stepsize also has comparable training error as that under diminishing stepsize. Since the stepsize decreases geometrically blockwisely, the algorithm approaches to a very small neighborhood of $\theta^*$ in the later blocks. We can also observe that the tracking error under blockwise diminishing stepsize decreases rapidly blockwisely.
\begin{figure}[h]
	\vspace{-0.6cm}
	\centering 
	\subfigure[Training error (left: full; right: tail)]{\includegraphics[width=1.3in]{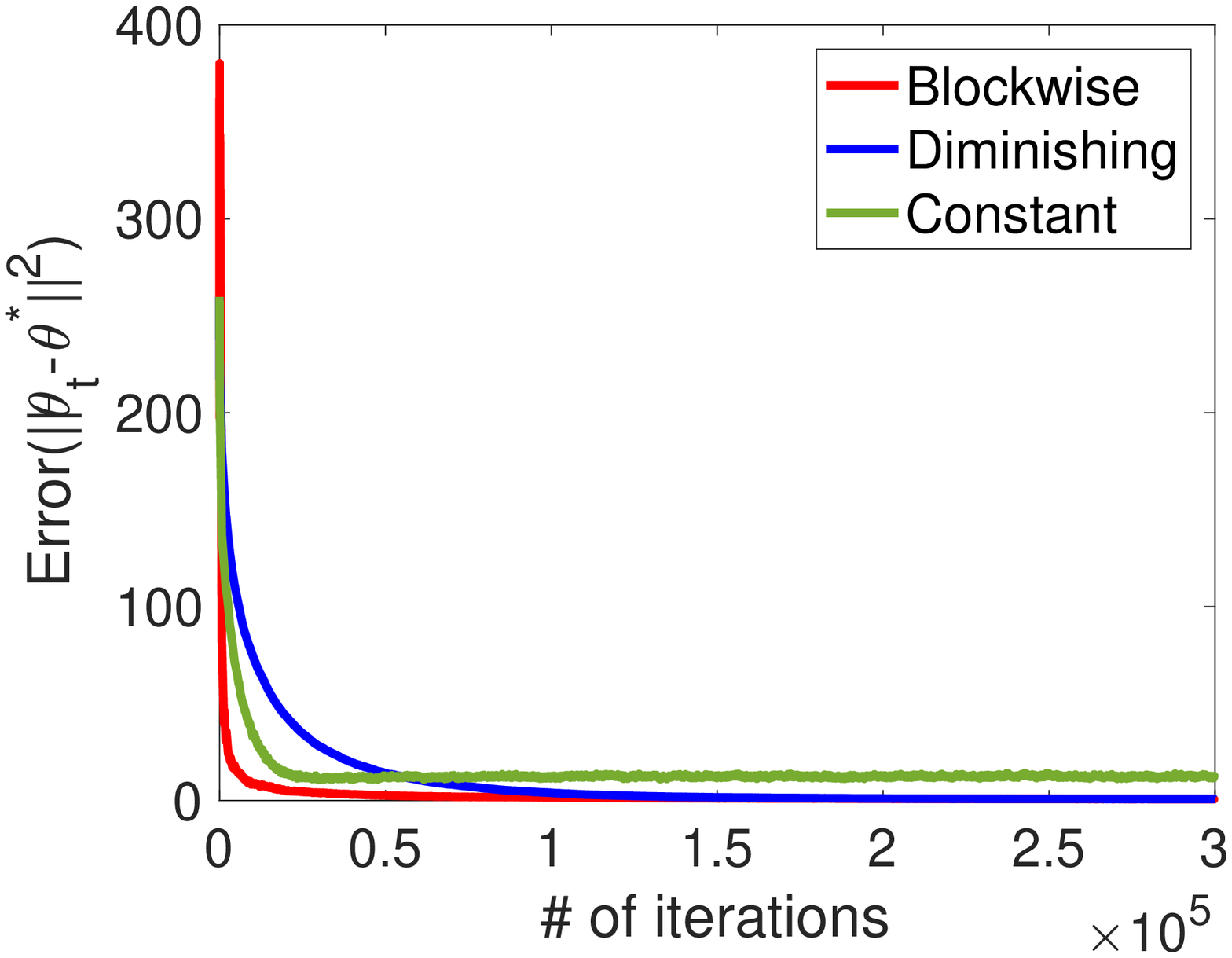}\includegraphics[width=1.3in]{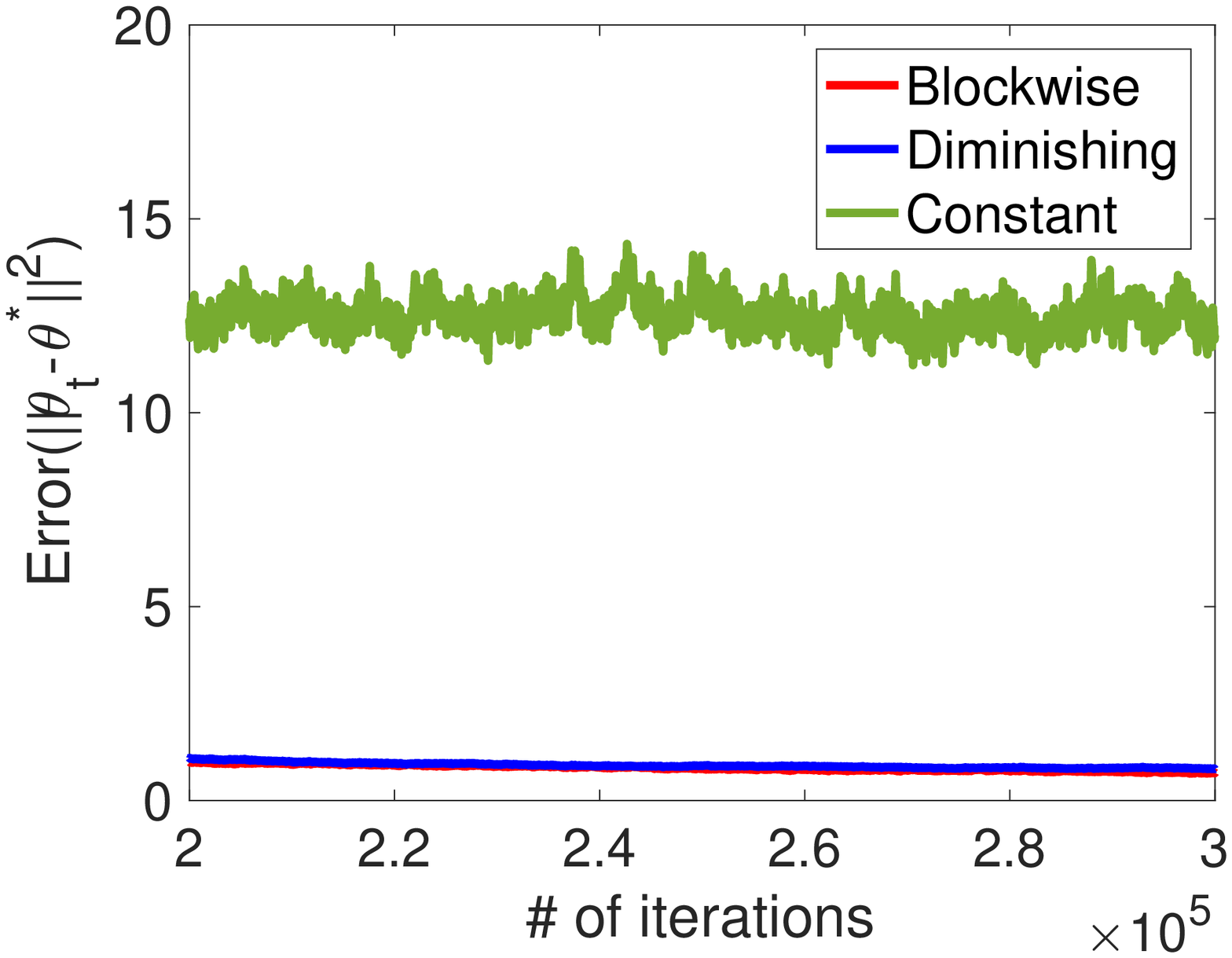}}
	\subfigure[Tracking error (left: full; right: tail)]{\includegraphics[width=1.3in]{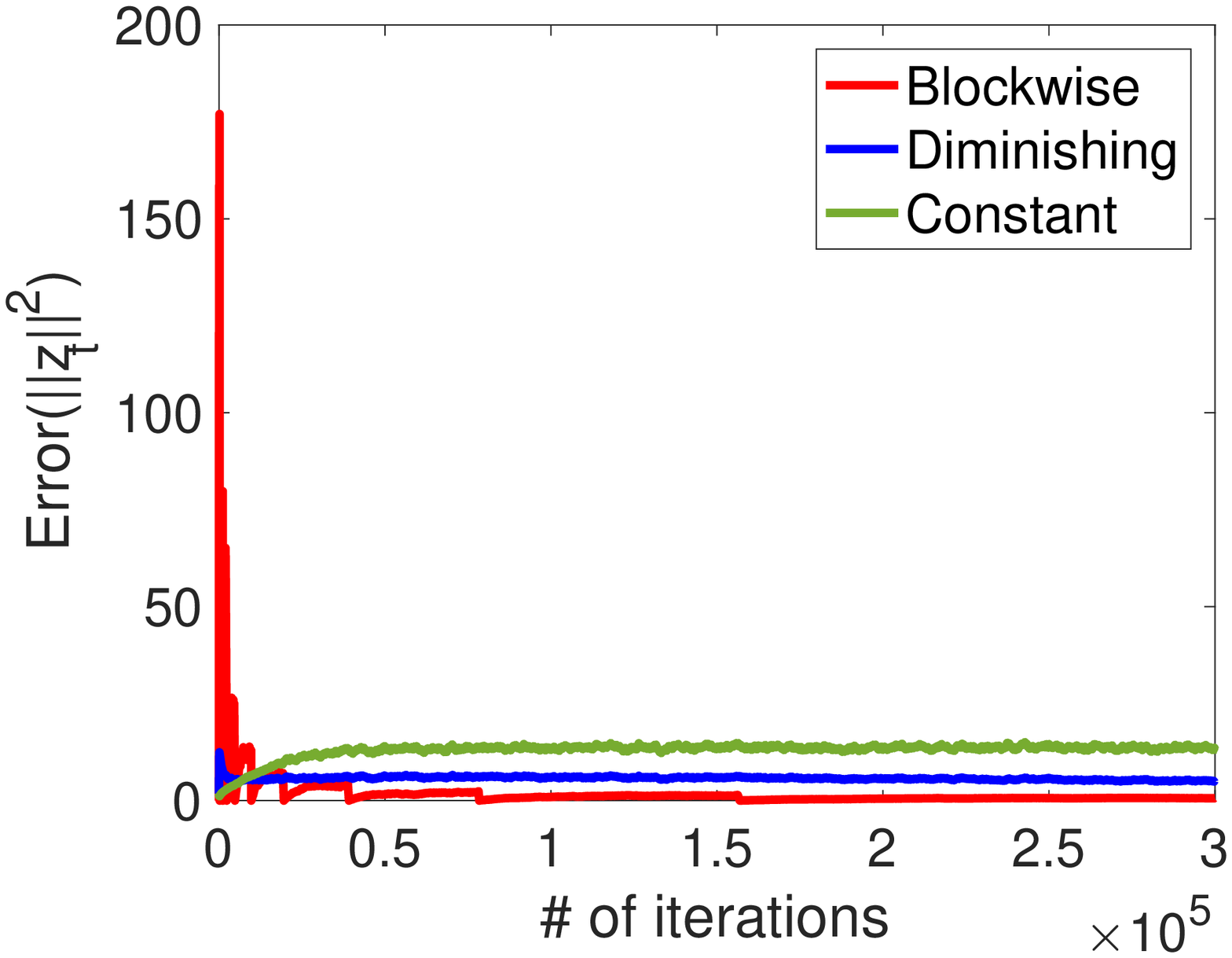}\includegraphics[width=1.3in]{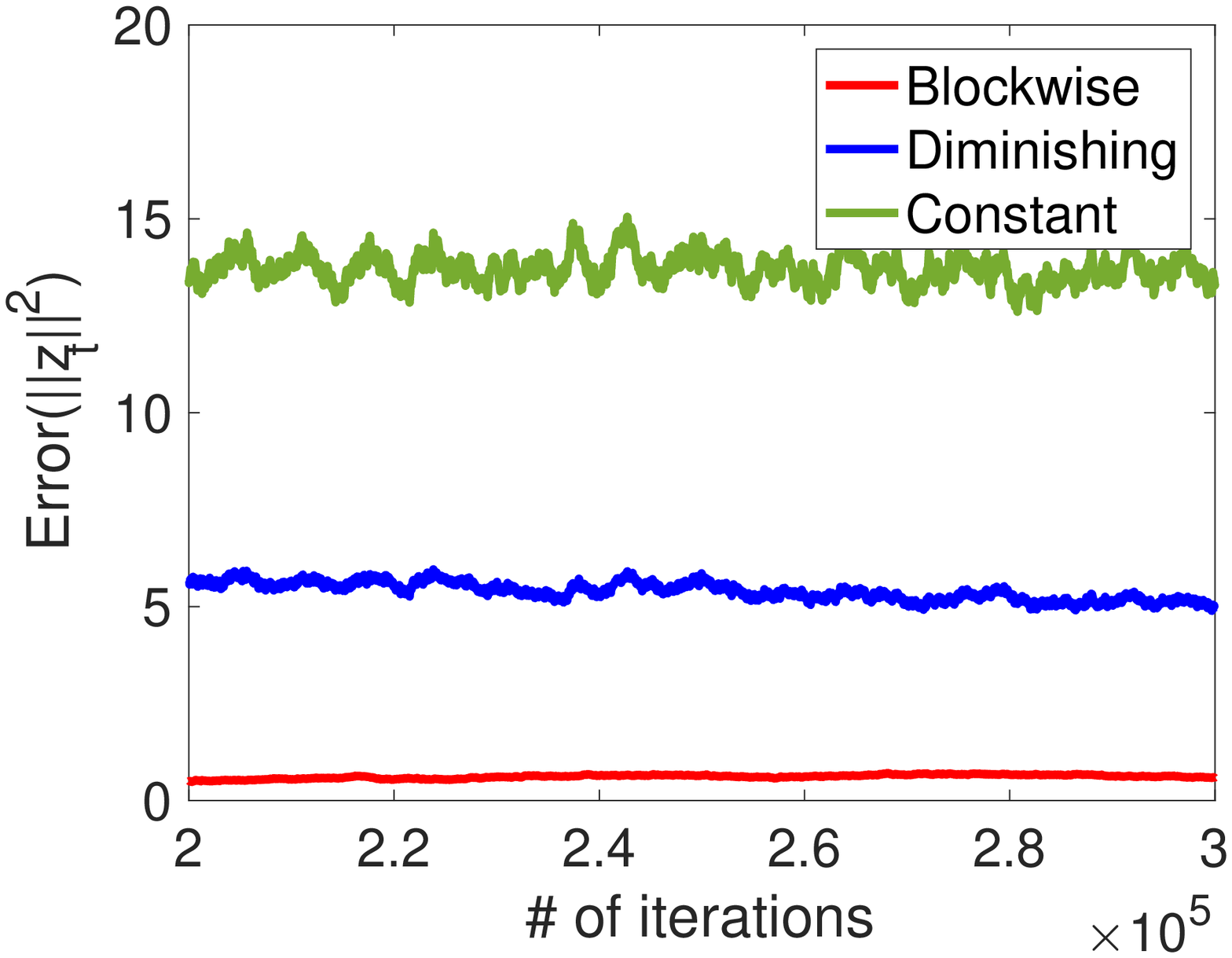}}
	\caption{Comparison between TDC updates under blockwise diminishing stepsizes, diminishing stepsize and constant stepsize} 
	\label{fig: 3}
	\vspace{-0.6cm}
\end{figure}
\vspace{-0.2cm}
\section{Conclusion}
\vspace{-0.1cm}
In this work, we provided the first non-asymptotic analysis for the two time-scale TDC algorithm over Markovian sample path. We developed a novel technique to handle the accumulative tracking error caused by the two time-scale update, using which we characterized the non-asymptotic convergence rate with general diminishing stepsize and constant stepsize. We also proposed a blockwise diminishing stepsize scheme for TDC and proved its convergence. Our experiments demonstrated the performance advantage of such an algorithm over both the diminishing and constant stepsize TDC algorithms. Our technique for non-asymptotic analysis of two time-scale algorithms can be applied to studying other off-policy algorithms such as actor-critic \cite{maei2018offpolicyac} and gradient Q-learning algorithms \cite{maei2010gq}.

\bibliographystyle{abbrv}
\bibliography{ref}

\begin{thebibliography}{10}

\bibitem{archibald1995generation}
T.~Archibald, K.~McKinnon, and L.~Thomas.
\newblock On the generation of {Markov} decision processes.
\newblock {\em Journal of the Operational Research Society}, 46(3):354--361,
  1995.

\bibitem{baird1995residual}
L.~Baird.
\newblock Residual algorithms: Reinforcement learning with function
  approximation.
\newblock In {\em Machine Learning Proceedings}, pages 30--37. Morgan Kaufmann,
  1995.

\bibitem{bhandari2018finite}
J.~Bhandari, D.~Russo, and R.~Singal.
\newblock A finite time analysis of temporal difference learning with linear
  function approximation.
\newblock In {\em Conference on Learning Theory (COLT)}, pages 1691--1692,
  2018.

\bibitem{borkar2009stochastic}
V.~S. Borkar.
\newblock {\em Stochastic approximation: a dynamical systems viewpoint},
  volume~48.
\newblock Springer, 2009.

\bibitem{borkar2000ode}
V.~S. Borkar and S.~P. Meyn.
\newblock The {ODE} method for convergence of stochastic approximation and
  reinforcement learning.
\newblock {\em Journal on Control and Optimization}, 38(2):447--469, 2000.

\bibitem{borkar2018concentration}
V.~S. Borkar and S.~Pattathil.
\newblock Concentration bounds for two time scale stochastic approximation.
\newblock In {\em Proc. Allerton Conference on Communication, Control, and
  Computing (Allerton)}, pages 504--511. IEEE, 2018.

\bibitem{dalal2018finite}
G.~Dalal, B.~Sz{\"o}r{\'e}nyi, G.~Thoppe, and S.~Mannor.
\newblock Finite sample analyses for {TD} (0) with function approximation.
\newblock In {\em Proc. AAAI Conference on Artificial Intelligence}, 2018.

\bibitem{dalal2017finite}
G.~Dalal, B.~Szorenyi, G.~Thoppe, and S.~Mannor.
\newblock Finite sample analysis of two-timescale stochastic approximation with
  applications to reinforcement learning.
\newblock In {\em Proc. Conference on Learning Theory (COLT)}, 2018.

\bibitem{dann2014policy}
C.~Dann, G.~Neumann, and J.~Peters.
\newblock Policy evaluation with temporal differences: A survey and comparison.
\newblock {\em The Journal of Machine Learning Research}, 15(1):809--883, 2014.

\bibitem{kamal2010onthe}
S.~Kamal.
\newblock On the convergence, lock-in probability and sample complexity of
  stochastic approximation.
\newblock {\em Journal on Control and Optimization}, 48(8):5178--5192, 2010.

\bibitem{karmakar2016dynamics}
P.~Karmakar and S.~Bhatnagar.
\newblock Dynamics of stochastic approximation with {Markov} iterate-dependent
  noise with the stability of the iterates not ensured.
\newblock {\em arXiv preprint arXiv:1601.02217}, 2016.

\bibitem{karmakar2017two}
P.~Karmakar and S.~Bhatnagar.
\newblock Two time-scale stochastic approximation with controlled {Markov}
  noise and off-policy temporal-difference learning.
\newblock {\em Mathematics of Operations Research}, 43(1):130--151, 2017.

\bibitem{konda2004convergence}
V.~R. Konda, J.~N. Tsitsiklis, et~al.
\newblock Convergence rate of linear two-time-scale stochastic approximation.
\newblock {\em The Annals of Applied Probability}, 14(2):796--819, 2004.

\bibitem{liu2015finite}
B.~Liu, J.~Liu, M.~Ghavamzadeh, S.~Mahadevan, and M.~Petrik.
\newblock Finite-sample analysis of proximal gradient td algorithms.
\newblock In {\em Proc. Uncertainty in Artificial Intelligence (UAI)}, pages
  504--513. AUAI Press, 2015.

\bibitem{maei2011gradient}
H.~R. Maei.
\newblock {\em Gradient temporal-difference learning algorithms}.
\newblock PhD thesis, University of Alberta, 2011.

\bibitem{maei2018offpolicyac}
H.~R. Maei.
\newblock Convergent actor-critic algorithms under off-policy training and
  function approximation.
\newblock {\em arXiv preprint arXiv:1802.07842}, 2018.

\bibitem{maei2010gq}
H.~R. Maei and R.~S. Sutton.
\newblock {GQ} (lambda): A general gradient algorithm for temporal-difference
  prediction learning with eligibility traces.
\newblock In {\em Proc. Artificial General Intelligence (AGI)}. Atlantis Press,
  2010.

\bibitem{mokkadem2006convergence}
A.~Mokkadem and M.~Pelletier.
\newblock Convergence rate and averaging of nonlinear two-time-scale stochastic
  approximation algorithms.
\newblock {\em The Annals of Applied Probability}, 16(3):1671--1702, 2006.

\bibitem{srikant2019finite}
L.~Y. R.~Srikant.
\newblock Finite-time error bounds for linear stochastic approximation and {TD}
  learning.
\newblock {\em arXiv preprint arXiv:1902.00923}, 2019.

\bibitem{ramaswamy2018stability}
A.~Ramaswamy and S.~Bhatnagar.
\newblock Stability of stochastic approximations with 'controlled {Markov}'
  noise and temporal difference learning.
\newblock {\em Transactions on Automatic Control}, 2018.

\bibitem{silver2014deterministic}
D.~Silver, G.~Lever, N.~Heess, T.~Degris, D.~Wierstra, and M.~Riedmiller.
\newblock Deterministic policy gradient algorithms.
\newblock In {\em Proc. International Conference on Machine Learning (ICML)},
  2014.

\bibitem{sutton1988learning}
R.~S. Sutton.
\newblock Learning to predict by the methods of temporal differences.
\newblock {\em Machine learning}, 3(1):9--44, 1988.

\bibitem{sutton2018reinforcement}
R.~S. Sutton and A.~G. Barto.
\newblock {\em Reinforcement learning: An introduction}.
\newblock MIT press, 2018.

\bibitem{sutton2009fast}
R.~S. Sutton, H.~R. Maei, D.~Precup, S.~Bhatnagar, D.~Silver,
  C.~Szepesv{\'a}ri, and E.~Wiewiora.
\newblock Fast gradient-descent methods for temporal-difference learning with
  linear function approximation.
\newblock In {\em Proc. International Conference on Machine Learning (ICML)},
  pages 993--1000, 2009.

\bibitem{sutton2008convergent}
R.~S. Sutton, C.~Szepesv{\'a}ri, and H.~R. Maei.
\newblock A convergent o(n) algorithm for off-policy temporal-difference
  learning with linear function approximation.
\newblock {\em Advances in Neural Information Processing Systems (NIPS)},
  21(21):1609--1616, 2008.

\bibitem{tadi2001td}
V.~Tadi{\'{c}}.
\newblock On the convergence of temporal-difference learning with linear
  function approximation.
\newblock {\em Machine Learning}, 42(3):241--267, Mar 2001.

\bibitem{tadic2004almost}
V.~B. Tadic.
\newblock Almost sure convergence of two time-scale stochastic approximation
  algorithms.
\newblock In {\em Proc. American Control Conference}, volume~4, pages
  3802--3807, 2004.

\bibitem{thoppe2015alekseev}
G.~Thoppe and V.~Borkar.
\newblock A concentration bound for stochastic approximation via {Alekseev's}
  formula.
\newblock {\em Stochastic Systems}, 2019.

\bibitem{tsitsiklis1996analysis}
J.~N. Tsitsiklis and B.~Van~Roy.
\newblock Analysis of temporal-diffference learning with function
  approximation.
\newblock In {\em Proc. Advances in Neural Information Processing Systems
  (NIPS)}, pages 1075--1081, 1997.

\bibitem{wang2017finite}
Y.~Wang, W.~Chen, Y.~Liu, Z.-M. Ma, and T.-Y. Liu.
\newblock Finite sample analysis of the {GTD} policy evaluation algorithms in
  {Markov} setting.
\newblock In {\em Proc. Advances in Neural Information Processing Systems
  (NIPS)}, pages 5504--5513, 2017.

\bibitem{watkins1992q}
C.~J. Watkins and P.~Dayan.
\newblock Q-learning.
\newblock {\em Machine Learning}, 8(3-4):279--292, 1992.

\bibitem{yaji2016stochastic}
V.~Yaji and S.~Bhatnagar.
\newblock Stochastic recursive inclusions in two timescales with non-additive
  iterate dependent {Markov} noise.
\newblock {\em arXiv preprint arXiv:1611.05961}, 2016.

\bibitem{yang2018does}
T.~Yang, Y.~Yan, Z.~Yuan, and R.~Jin.
\newblock Why does stagewise training accelerate convergence of testing error
  over {SGD}?
\newblock {\em arXiv preprint arXiv:1812.03934}, 2018.

\bibitem{yu2017convergence}
H.~Yu.
\newblock On convergence of some gradient-based temporal-differences algorithms
  for off-policy learning.
\newblock {\em arXiv preprint arXiv:1712.09652}, 2017.

\end{thebibliography}
\newpage
\appendix
\noindent {\Large \textbf{Supplementary Materials}}
\section{Technical Proofs for TDC under Decreasing Stepsize}\label{proof_thm1}
We present the proof of Theorem \ref{thm1} in four subsections.  Section \ref{proof_thm1_sketch} provides the proof sketch. Section \ref{proof_thm1_main} contains the main part of the proof. Section \ref{proof_thm1_fast} includes all technical lemmas for the convergence proof of fast time-scale iteration, and Section \ref{proof_thm1_slow} includes all the technical lemmas for the convergence proof of the slow time-scale iteration.
\subsection{Proof Sketch of Theorem \ref{thm1}}\label{proof_thm1_sketch}
\begin{proof}[Proof Sketch of Theorem \ref{thm1}]
The proof consists of four steps as we briefly describe here. The details are provided in Appendix \ref{proof_thm1_main}.

\textbf{Step 1.} \emph{Formulate training and tracking error updates.} In stead of investigating the convergence of $\{ \theta_t \}$ and $\{ w_t \}$ directly, we substitute $z_t$ into the TDC update \eqref{algorithm1_1}-\eqref{algorithm1_2} and analyze the update of TDC in terms of $\{ \theta_t \}$ and tracking error $\{ z_t \}$.

\textbf{Step 2.} \emph{Derive preliminary bound on $\mE\ltwo{z_t}^2$.} We decompose the mean square tracking error $\mE\ltwo{z_t}^2$ into an exponentially decaying term, a variance term, a bias term, and a slow drift term, and bound each term individually. We obtain a preliminary upper bound on $\mE\ltwo{z_t}^2$ with order $\mathcal{O}(1/t^{\sigma-\nu})$.

\textbf{Step 3.} \emph{Recursively refine bound on $\mE\ltwo{z_t}^2$.} By recursively substituting the preliminary bound of $\mE\ltwo{z_t}^2$ into the slow drift term, we obtain the refined decay rate $\mE\ltwo{z_t}^2=\mathcal{O}(h(\sigma,\nu))$.

\textbf{Step 4.} \emph{Derive bound on $\mE\ltwo{\theta_t-\theta^*}^2$.} We decompose the training error $\mE\ltwo{\theta_t-\theta^*}^2$ into an exponentially decaying term, a variance term, a bias term, and a tracking error term, and bound each term individually. We then recursively substitute the decay rate of $\mE\ltwo{z_t}^2$ and $\mE\ltwo{\theta_t-\theta^*}^2$ into the tracking error term to obtain an upper bound on the training error with order $\mathcal{O}(h(\sigma,\nu)^{1-\epsilon^\prime})$. Combining each term yields the final bound of $\mE\ltwo{\theta_t-\theta^*}^2$ in \eqref{thm1eq1_1}.
\end{proof}
\subsection{Proof of Theorem \ref{thm1}}\label{proof_thm1_main}
We provide the proof of Theorem \ref{thm1} following four steps.
	
	\textbf{Step 1.} \emph{Formulation of training error and tracking error update.} We define the tracking error vector $z_t = w_t + C^{-1}(b+A\theta_t)$. By substituting $z_t$ into \eqref{algorithm1_1}-\eqref{algorithm1_2}, we can rewrite the update rule of TDC in terms of $\theta_t$ and $z_t$ as follows:
	\begin{flalign}
	&\theta_{t+1}=\mcpi_{R_\theta}\left(\theta_t + \alpha_t(f_1(\theta_t, O_t)+g_1(z_t, O_t))\right),\label{algorithm2_1}\\
	&z_{t+1}=\mcpi_{R_w}\left( z_t + \beta_t(f_2(\theta_t, O_t)+g_2(z_t,O_t))-C^{-1}(b+A\theta_t) \right) + C^{-1}(b+A\theta_{t+1}),\label{algorithm2_2}
	\end{flalign}
	where 
	\begin{flalign*}
	&f_1(\theta_t, O_t) = (A_t-B_tC^{-1}A)\theta_t+(b_t-B_tC^{-1}b), \qquad g_1(z_t, O_t)=B_t z_t,\\
	&f_2(\theta_t, O_t)=(A_t-C_tC^{-1}A)\theta_t+(b_t-C_tC^{-1}b),\qquad  g_2(z_t,O_t) = C_t z_t,
	\end{flalign*}
	with $O_t=(s_t, a_t, r_t, s_{t+1})$ denoting the observation at time step $t$. We further define
	\begin{flalign*}
	\bar{f}_1(\theta_t) = (A-BC^{-1}A)\theta_t+(b-BC^{-1}b), \qquad \bar{g}_1(z_t)=B z_t,\qquad \bar{g}_2(z_t) = C z_t.
	\end{flalign*}
	
	\textbf{Step 2.} \emph{Derive preliminary bound on $\mE\ltwo{z_t}^2$.} We bound the recursion of the tracking error vector $z_t$ in \eqref{algorithm2_2} as follows. For any $t\geq 0$, we derive
	\begin{flalign*}
	\ltwo{z_{t+1}}^2&= \ltwo{\mcpi_{R_w}\left( z_t + \beta_t(f_2(\theta_t, O_t)+g_2(z_t,O_t))-C^{-1}(b+A\theta_t) \right) + C^{-1}(b+A\theta_{t+1})}^2\\
	&= \ltwo{\mcpi_{R_w}\left( z_t + \beta_t(f_2(\theta_t, O_t)+g_2(z_t,O_t))-C^{-1}(b+A\theta_t) \right) + \mcpi_{R_w}\left(C^{-1}(b+A\theta_{t+1})\right)}^2\\
	&\leq  \ltwo{z_t + \beta_t(f_2(\theta_t, O_t)+g_2(z_t,O_t))+C^{-1}A(\theta_{t+1}-\theta_t)} ^2\\
	&= \ltwo{z_t}^2 + 2\beta_t \langle f_2(\theta_t, O_t), z_t\rangle+2\beta_t \langle g_2(z_t, O_t), z_t \rangle + 2\langle C^{-1}A(\theta_{t+1}-\theta_t),z_t\rangle \\
	&\quad + \ltwo{ \beta_t f_2(\theta_t, O_t) + \beta_t g_2(z_t,O_t) + C^{-1}A(\theta_{t+1}-\theta_t)}^2\\
	&\leq \ltwo{z_t}^2 +2\beta_t \langle \bar{g}_2(z_t), z_t \rangle + 2\beta_t \langle f_2(\theta_t, O_t), z_t\rangle+2\beta_t \langle g_2(z_t, O_t) - \bar{g}_2(z_t), z_t \rangle\\
	&\quad + 2\langle C^{-1}A(\theta_{t+1}-\theta_t),z_t\rangle \\
	&\quad + 3\beta_t^2 \ltwo{f_2(\theta_t, O_t)}^2 + 3\beta_t^2\ltwo{g_2(z_t,O_t)}^2 + 3\ltwo{C^{-1}A(\theta_{t+1}-\theta_t)}^2\\
	&\leq \ltwo{z_t}^2 +2\beta_t \langle Cz_t, z_t \rangle + 2\beta_t \langle f_2(\theta_t, O_t), z_t\rangle+2\beta_t \langle g_2(z_t, O_t) - \bar{g}_2(z_t), z_t \rangle \\
	&\quad + 2\langle C^{-1}A(\theta_{t+1}-\theta_t),z_t\rangle + 3\beta_t^2\ltwo{f_2(\theta_t, O_t)}^2 + 3\beta_t^2\ltwo{g_2(z_t,O_t)}^2 \\
	&\quad + 3\alpha_t^2\ltwo{C^{-1}}^2\ltwo{A}^2\ltwo{ f_1(\theta_t,O_t) + g_1(z_t,O_t)}^2\\
	&\leq (1-\beta_t|\lambda_w|)\ltwo{z_t}^2 + 2\beta_t\zeta_{f_2}(\theta_t,z_t,O_t) + 2\beta_t\zeta_{g_2}(z_t,O_t)+ 2\langle C^{-1}A(\theta_{t+1}-\theta_t),z_t\rangle\\
	&\quad + 3\beta_t^2 K_{f_2}^2 + 3\beta_t^2 K_{g_2}^2 + 6\alpha_t^2\ltwo{C^{-1}}^2\ltwo{A}^2(K_{f_1}^2 + K_{g_1}^2),
	\end{flalign*}
	where $\lambda_{\max}(2C)\leq\lambda_w<0$, $\zeta_{f_2}(\theta_t,z_t,O_t) = \langle f_2(\theta_t, O_t), z_t\rangle$, $\zeta_{g_2}(z_t,O_t)=\langle g_2(z_t, O_t) - \bar{g}_2(z_t), z_t \rangle$. $K_{f_1}$ and $K_{g_1}$, $K_{f_2}$ and $K_{g_1}$ are positive constants, please refer to Lemma \ref{lemma: boundedf1}, \ref{lemma: boundedg1}, \ref{lemma: boundedf2} and \ref{lemma: boundedg2} for their definitions. Then, defining $K_{r_1}=\ltwo{C^{-1}}\ltwo{A}(K_{f_1} + K_{g_1})$ and taking the expectation over $\mathcal{F}_{t+1}$ (the filtration up to state $s_{t+1}$) on both sides, we have
	\begin{flalign}\label{eq: expzincremental}
	\mE\ltwo{z_{t+1}}^2&\leq (1-\beta_t|\lambda_w|)\mE\ltwo{z_t}^2 + 2\beta_t\mE[\zeta_{f_2}(\theta_t,z_t,O_t)] + 2\beta_t\mE[\zeta_{g_2}(z_t,O_t)] \nonumber\\
	&\quad + 2\mE\langle C^{-1}A(\theta_{t+1}-\theta_t),z_t\rangle + 3\beta_t^2 K_{f_2}^2 + 3\beta_t^2 K_{g_2}^2 + 6\alpha_t^2K^2_{r_1}.
	\end{flalign}
	From the definition of $\beta_t$, $|\beta_t|\leq c_\beta$ for all $t\geq 0$. If $c_\beta|\lambda_w|<1$, then we have $0<1-\beta_t|\lambda_w|<1$. Telescoping the above inequality yields that
	\begin{flalign}\label{eq: expzinter}
	\mE\ltwo{z_{t+1}}^2&\leq \left[ \prod_{i=0}^{t}(1-\beta_i|\lambda_w|) \right]\ltwo{z_0}^2 \nonumber\\
	&\quad + 2\sum_{i=0}^{t} \left[ \prod_{k=i+1}^{t}(1-\beta_k|\lambda_w|) \right]\beta_i[\zeta_{f_2}(\theta_i,z_i,O_i)] \nonumber\\
	&\quad + 2\sum_{i=0}^{t} \left[ \prod_{k=i+1}^{t}(1-\beta_k|\lambda_w|) \right]\beta_i[\zeta_{g_2}(z_i,O_i)]\nonumber\\
	&\quad + 2\sum_{i=0}^{t} \left[ \prod_{k=i+1}^{t}(1-\beta_k|\lambda_w|) \right] \mE\langle C^{-1}A(\theta_{i+1}-\theta_i),z_i\rangle\nonumber\\
	&\quad + 3(K^2_{f_2} + K^2_{g_2})\sum_{i=0}^{t} \left[ \prod_{k=i+1}^{t}(1-\beta_k|\lambda_w|) \right]\beta_i^2 + 6K^2_{r_1}\sum_{i=0}^{t} \left[ \prod_{k=i+1}^{t}(1-\beta_k|\lambda_w|) \right]\alpha_i^2.
	\end{flalign}
	Since $1-\beta_i|\lambda_w|\leq e^{-\beta_i|\lambda_w|}$ and using the fact that $(1+i)^{-\nu}\geq(1+i)^{-\sigma}$ for all $i\geq 0$, we have
	\begin{flalign}
	\mE\ltwo{z_{t+1}}^2&\leq e^{-|\lambda_w|\sum_{i=0}^{t}\beta_i}\ltwo{z_0}^2 \label{fast1z}\\ 
	&\quad + 2\sum_{i=0}^{t} e^{-|\lambda_w|\sum_{k=i+1}^{t}\beta_k} \beta_i\mE[\zeta_{f_2}(\theta_i,z_i,O_i)] \label{bias1z} \\
	&\quad + 2\sum_{i=0}^{t} e^{-|\lambda_w|\sum_{k=i+1}^{t}\beta_k}\beta_i\mE[\zeta_{g_2}(z_i,O_i)] \label{bias2z}\\
	&\quad + 2\sum_{i=0}^{t} e^{-|\lambda_w|\sum_{k=i+1}^{t}\beta_k}\mE\langle C^{-1}A(\theta_{i+1}-\theta_i),z_i\rangle \label{interac1}\\
	&\quad + 3\max\{1,\frac{c^2_\alpha}{c^2_\beta}\}(K^2_{f_2} + K^2_{g_2} + 3K^2_{r_1})\sum_{i=0}^{t} e^{-|\lambda_w|\sum_{k=i+1}^{t} \beta_k }\beta_i^2. \label{variance1z}
	\end{flalign}
	The first term \eqref{fast1z} captures how fast the tracking error vector $z_t$ converges to the neighborhood of zero, the second term \eqref{bias1z} and third term \eqref{bias2z} are the accumulative bias term induced by the biased gradient estimator $f_2(\theta_t,O_t)$ and $g_2(z_t,O_t)$ respectively, the forth term \eqref{interac1} is the accumulative error caused by the slow drift, and the last term \eqref{variance1z} is the accumulative variance. Combining Lemma \ref{lemma: boundaccumulatebiasf2}, Lemma \ref{lemma: boundaccumulatebiasg2}, Lemma \ref{lemma: 1thbdinter} and applying Lemma \ref{lemma:variance} to the above upper bound, we obtain
	\begin{flalign}\label{eq: firstboundz}
	&\mE\ltwo{z_{t+1}}^2\nonumber\\
	&\leq e^{\frac{-|\lambda_w| c_\beta}{1-\nu}[(1+t)^{1-\nu}-1]}\ltwo{z_0}^2 + 8(R_wK_{f_2} + 2R_wK_{g_2})\frac{e^{|\lambda_w| c_\beta}}{|\lambda_w|} \frac{c_\beta}{(1+t)^\nu}\nonumber\\
	&\quad + 2c_\beta[ K_{r_3} + L_{g_2,z}K_{r_2}] \tau_\beta \frac{e^{|\lambda_w| c_\beta}}{|\lambda_w|} e^{\frac{-|\lambda_w| c_\beta}{1-\nu}[ (1+t)^\nu - (1+\tau_\beta)^\nu ]}\nonumber \\
	&\quad + 4(K_{r_3} + L_{g_2,z}K_{r_2} )\tau_\beta \frac{e^{|\lambda_w| c_\beta/2}}{|\lambda_w|} (e^{\frac{-|\lambda_w| c_\beta}{2(1-\nu)}[(t+1)^{1-\nu}-1]}D_1 \mathbbm{1}_{\{ \tau_\beta+1< i_{d_1} \}}+\beta_{t-\tau_\beta})\nonumber\\
	&\quad + \frac{4c_\alpha(1+\gamma)\rho_{\max}}{c_\beta\lambda_{cm}}R_\theta R_w \frac{2e^{|\lambda_w| c_\beta/2}}{|\lambda_w|} \Big(e^{\frac{-|\lambda_w| c_\beta}{2(1-\nu)}[(1+t)^{1-\nu}-1]}D_2 + \frac{1}{(1+t)^{\sigma-\nu}} \Big)\nonumber\\
	&\quad + 3\max\left\{1,\frac{c^2_\alpha}{c^2_\beta}\right\}(K^2_{f_2} + K^2_{g_2} + K^2_{r_1}) \frac{2c_\beta e^{|\lambda_w| c_\beta/2}}{|\lambda_w|}(e^{\frac{-|\lambda_w| c_\beta}{2(1-\nu)}[(t+1)^{1-\nu}-1]}D_3 +\beta_t)
	\end{flalign}
	Where, $D_3= e^{(|\lambda_w| c_\beta/2)\sum_{k=0}^{i_{d_3}}\beta_k} $, with $i_{d_3} = (\frac{|\lambda_w| c_\beta}{2\nu})^{\frac{1}{1-\nu}}$, and $\tau_\beta = \min\{ i\in\mN|m\rho^i\leq \beta_t \}$.
	
	\textbf{Step 3.} \emph{Recursively refine bound on $\mE\ltwo{z_t}^2$.} By applying Lemma \ref{lemma: mixorderbound2}, we have 
	\begin{flalign*}
		\mE\ltwo{z_t}^2\leq \mathcal{O}\Big( \frac{\log t}{t^\nu} \Big) + \mathcal{O}(h(\sigma, \nu)),
	\end{flalign*}
	where
\begin{equation*}
h(\sigma, \nu)=\left\{
\begin{array}{lr}
\frac{1}{t^\nu}, & \sigma>1.5\nu, \\
\frac{1}{t^{2(\sigma-\nu)-\epsilon}}. &  \nu<\sigma\leq1.5\nu,
\end{array}
\right.
\end{equation*}
where $\epsilon\in (0,\sigma-\nu]$ can be any small constant.

    \textbf{Step 4.} \emph{Derive bound on $\mE\ltwo{\theta_t-\theta^*}^2$}. For the recursion of $\theta_t$ in \eqref{algorithm2_1}, for any $t\geq 0$,
	\begin{flalign}
	\ltwo{\theta_{t+1}-\theta^*}^2 &=\ltwo{\mcpi_{R_\theta}\left( \theta_t + \alpha_t(f_1(\theta_t, O_t)+g_1(z_t,O_t)) \right) - \theta^* }^2\nonumber\\
	&=\ltwo{\mcpi_{R_\theta}\left( \theta_t + \alpha_t(f_1(\theta_t, O_t)+g_1(z_t,O_t)) \right) - \mcpi_{R_\theta} \theta^* }^2\nonumber\\
	&\leq \ltwo{ \theta_t -\theta^*  + \alpha_t(f_1(\theta_t, O_t)+g_1(z_t,O_t)) }^2\nonumber\\
	&= \ltwo{\theta_t -\theta^*}^2 +  2\alpha_t \langle f_1(\theta_t, O_t), \theta_t-\theta^*\rangle+2\alpha_t \langle g_1(z_t, O_t), \theta_t - \theta^* \rangle \nonumber \\
	&\quad + \alpha_t^2 \ltwo{  f_1(\theta_t, O_t) +  g_1(z_t,O_t)}^2 \nonumber\\
	&\leq \ltwo{\theta_t -\theta^*}^2 + 2\alpha_t \langle \bar{f}_1(\theta_t), \theta_t-\theta^*\rangle +  2\alpha_t \langle f_1(\theta_t, O_t)-\bar{f}_1(\theta_t), \theta_t-\theta^*\rangle \nonumber\\
	&\quad +2\alpha_t \langle g_1(z_t, O_t), \theta_t-\theta^* \rangle + 2\alpha_t^2 \ltwo{  f_1(\theta_t, O_t) }^2 + 2\alpha_t^2 \ltwo{g_1(z_t,O_t)}^2 \nonumber\\
	&\leq \ltwo{\theta_t -\theta^*}^2 + 2\alpha_t \langle (A^\top C^{-1}A)(\theta_t-\theta^*), \theta_t-\theta^*\rangle +  2\alpha_t \langle f_1(\theta_t, O_t)-\bar{f}_1(\theta_t), \theta_t-\theta^*\rangle \nonumber \\
	&\quad +2\alpha_t \langle g_1(z_t, O_t), \theta_t-\theta^* \rangle + 2\alpha_t^2 \ltwo{  f_1(\theta_t, O_t) }^2 + 2\alpha_t^2 \ltwo{g_1(z_t,O_t)}^2 \nonumber\\
	&\leq (1-\alpha_t|\lambda_\theta|)\ltwo{\theta_t -\theta^*}^2 + 2\alpha_t\zeta_{f_1}(\theta_t,O_t) + 2\alpha_t\langle B_t z_t, \theta_t-\theta^* \rangle  \label{eq: thetasigmais1} \\
	&\quad + 2\alpha_t^2 \ltwo{  f_1(\theta_t, O_t) }^2 + 2\alpha_t^2 \ltwo{g_1(z_t,O_t)}^2, \nonumber
	\end{flalign}
	where $2\lambda_{\max}(A^\top C^{-1}A)\leq\lambda_\theta<0$ and $\zeta_{f_1}(\theta_t,O_t)=\langle f_1(\theta_t, O_t) - \bar{f}_1(\theta_t), \theta_t-\theta^* \rangle$.
	
	First consider the case when $0<\nu<\sigma<1$. Telescoping the above inequality and taking the expectation over $\mathcal{F}_{t+1}$ on both sides yield that
	\begin{flalign}\label{eq: expthetainter}
	\mE\ltwo{\theta_{t+1}-\theta^*}^2&\leq \left[ \prod_{i=0}^{t}(1-\alpha_i|\lambda_\theta|) \right]\ltwo{\theta_0-\theta^*}^2 \nonumber \\
	&\quad + 2\sum_{i=0}^{t} \left[ \prod_{k=i+1}^{t}(1-\alpha_i|\lambda_\theta|) \right]\alpha_i\mE\zeta_{f_1}(\theta_i,O_i)\nonumber\\
	&\quad + 2\sum_{i=0}^{t} \left[ \prod_{k=i+1}^{t}(1-\alpha_i|\lambda_\theta|) \right]\alpha_i\mE\langle B_iz_i,\theta_i-\theta^*\rangle\nonumber\\
	&\quad + 2(K^2_{f_1} + K^2_{g_1})\sum_{i=0}^{t} \left[ \prod_{k=i+1}^{t}(1-\alpha_i|\lambda_\theta|) \right]\alpha_i^2.
	\end{flalign}
	Then following steps that are similar to \eqref{bias1z}-\eqref{variance1z}, we obtain
	\begin{flalign}
	\mE\ltwo{\theta_{t+1}-\theta^*}^2&\leq e^{-|\lambda_\theta|\sum_{i=0}^{t}\alpha_i}\ltwo{\theta_0-\theta^*}^2 \label{theta_neig}\\ 
	&\quad + 2\sum_{i=0}^{t} e^{-|\lambda_\theta|\sum_{k=i+1}^{t}\alpha_k} \alpha_i\mE[\zeta_{f_1}(\theta_i,O_i)] \label{bias1theta} \\
	&\quad + 2\sum_{i=0}^{t} e^{-|\lambda_\theta|\sum_{k=i+1}^{t}\alpha_k}\alpha_i\mE[\langle B_i z_i, \theta_i-\theta^* \rangle] \label{errorz}\\
	&\quad + 2(K_{f_1}^2+K_{g_1}^2)\sum_{i=0}^{t} e^{-|\lambda_\theta|\sum_{k=i+1}^{t} \alpha_k }\alpha_i^2. \label{variance1theta}
	\end{flalign}
	Similarly, the first term \eqref{theta_neig} captures how fast $\theta_t$ converges to the neighborhood of $\theta^*$, the second term \eqref{bias1theta} represents the accumulative bias induced by the biased gradient estimator $f_1(\theta_t,O_t)$, the third term \eqref{errorz} represents the accumulative error caused by imperfect tracking of $w_t$, and the last term \eqref{variance1theta} captures the accumulative variance. Combining Lemma \ref{lemma: boundaccumulatebiasf1}, Lemma \ref{lemma: mixorderbound3} and applying Lemma \ref{lemma:variance} to the above upper bound, we obtain
	\begin{flalign}\label{thm1_1: explicit}
	\mE\ltwo{\theta_{t+1}-\theta^*}^2&\leq e^{\frac{-|\lambda_\theta| c_\alpha}{1-\sigma}[(1+t)^{1-\sigma}-1]}\ltwo{\theta_0-\theta^*}^2 \nonumber\\ 
	&\quad + 2c_\alpha L_{f_1,\theta}(K_{f_1}+K_{g_1})\tau_\alpha \frac{e^{|\lambda_\theta| c_\alpha}}{|\lambda_\theta|}e^{\frac{-|\lambda_\theta| c_\alpha}{1-\sigma}[ (1+t)^\sigma - (1+\tau_\sigma)^\sigma ]}  
	+ 16R_\theta K_{f_1}\frac{e^{|\lambda_\theta| c_\alpha}}{|\lambda_\theta|} \frac{c_\alpha}{(1+t)^\sigma}  \nonumber\\
	&\quad + 2L_{f_1,\theta}(K_{f_1}+K_{g_1})\tau_\alpha \frac{2e^{|\lambda_\theta| c_\alpha/2}}{|\lambda_\theta|} (e^{\frac{-|\lambda_\theta| c_\alpha}{2(1-\sigma)}[(t+1)^{1-\sigma}-1]}D_4 \mathbbm{1}_{\{ \tau_\alpha+1< i_\alpha \}}+\alpha_{t-\tau_\alpha}) \nonumber\\
	&\quad + 2(K_{f_1}^2+K_{g_1}^2)\frac{2c_\alpha e^{|\lambda_\theta| c_\alpha/2}}{|\lambda_\theta|}(e^{\frac{-|\lambda_\theta| c_\alpha}{2(1-\sigma)}[(t+1)^{1-\sigma}-1]}D_4 +\alpha_t) \nonumber\\
	&\quad+\mathcal{O}(\frac{\log t}{t^\nu}+h(\sigma, \nu))^{1-\epsilon^\prime},
	\end{flalign}
where $\epsilon^\prime\in(0,0.5]$ can be any small constant, $D_4= e^{(|\lambda_\theta| c_\alpha/2)\sum_{k=0}^{i_{d_4}}\alpha_k} $, $i_{d_4} = (\frac{|\lambda_\theta| c_\alpha}{2\sigma})^{\frac{1}{1-\sigma}}$ and $\tau_\alpha = \min\{ i\in\mN|m\rho^i\leq \alpha_t \}$.
	
	If $\sigma=1$, choosing the stepsize $\alpha_t=\frac{1}{|\lambda_\theta|(1+t)}$, starting from \eqref{eq: thetasigmais1} and applying Lemma \ref{lemma: boundedf1} and Lemma \ref{lemma: boundedg1}, we have
	\begin{flalign*}
	\ltwo{\theta_{t+1}-\theta^*}^2 &\leq (1-\frac{1}{1+t})\ltwo{\theta_t -\theta^*}^2 + \frac{2}{|\lambda_\theta|(1+t)}\zeta_{f_1}(\theta_t,O_t) + \frac{2}{|\lambda_\theta|(1+t)}\langle B_t z_t, \theta_t-\theta^* \rangle  \\
	&\quad + \frac{2}{\lambda_\theta^2(1+t)^2} (K_{f_1}^2+K_{g_1}^2),
	\end{flalign*}
	which further implies that
	\begin{flalign}\label{eq: thetasigmais2}
	&(1+t)\ltwo{\theta_{t+1}-\theta^*}^2-t\ltwo{\theta_t -\theta^*}^2 \nonumber\\
	&\quad \leq \frac{2}{|\lambda_\theta|}\zeta_{f_1}(\theta_t,O_t) + \frac{2}{|\lambda_\theta|}\langle B_t z_t, \theta_t-\theta^* \rangle + \frac{2(K_{f_1}^2+K_{g_1}^2)}{\lambda_\theta^2} \frac{1}{1+t}.
	\end{flalign}
	Applying \eqref{eq: thetasigmais2} recursively and taking the expectation over $\mathcal{F}_{t+1}$ on both sides yields that
	\begin{flalign}\label{eq: thetasigmais3}
	&\mE\ltwo{\theta_{t+1}-\theta^*}^2 \nonumber\\
	&\qquad \leq \frac{2}{|\lambda_\theta|(1+t)} \sum_{i=0}^{t}\mE\zeta_{f_1}(\theta_i,O_i) + \frac{2}{|\lambda_\theta|(1+t)} \sum_{i=0}^{t} \mE\langle B_i z_i, \theta_i-\theta^* \rangle +  \frac{2(K_{f_1}^2+K_{g_1}^2)}{\lambda_\theta^2(1+t)} \sum_{i=0}^{t}\frac{1}{1+i}.
	\end{flalign}
	Then applying Lemma \ref{lemma: boundaccumulatebiasf1_II} and Lemma \ref{lemma: mixorderbound5}, we obtain
	\begin{flalign}\label{thm1_2: explicit}
	\mE\ltwo{\theta_{t+1}-\theta^*}^2 &\leq \frac{4L_{f_1,\theta}(K_{f_1}+K_{g_1})}{\lambda_\theta^2}\frac{\tau_\alpha^2}{1+t} + \frac{16R_\theta K_{f_1}}{\lambda_\theta^2(1+t)} + \frac{2L_{f_1,\theta}(K_{f_1}+K_{g_1})}{|\lambda_\theta|}\frac{\tau_\alpha \log(1+t)}{1+t}\nonumber\\
	&\quad + \frac{2(K_{f_1}^2+K_{g_1}^2)}{\lambda_\theta^2}\frac{1+\log(1+t)}{1+t} +\mathcal{O}(\frac{\log t}{t^\nu}+h(1, \nu))^{1-\epsilon^\prime}.
	\end{flalign}

\subsection{Technical Lemmas for Convergence Proof of Fast Time-scale Iteration}\label{proof_thm1_fast}
\begin{lemma}\label{lemma:variance}
	Let $p<0$, $0<q<1$, then for every integer $t\geq 0$,
	\begin{flalign*}
	\sum_{i=0}^{t} e^{p\sum_{k=i+1}^{t}(1+k)^{-q}}\frac{1}{(1+i)^{2q}}\leq \frac{2e^{|p|/2}}{|p|}\left[  D_p e^{p/2\sum_{k=0}^{t}(1+k)^{-q}} + \frac{1}{(1+t)^q}  \right],
	\end{flalign*}
	where $D_p=e^{|p|/2\sum_{k=0}^{i_p}(1+k)^{-q}}$, with $i_p$ denoting a constant larger than $(|p|/2q)^{1/(1-q)}$.
\end{lemma}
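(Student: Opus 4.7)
My plan is to rewrite the summand as a square and reduce the problem to two simpler estimates. Setting $a_i := (1+i)^{-q}$, $S_i := \sum_{k=0}^{i} a_k$ (with the convention $S_{-1} := 0$), and $h(i) := e^{(p/2)(S_t - S_i)} a_i$, the left-hand side is exactly $T_t = \sum_{i=0}^{t} h(i)^2$. Using the elementary factorization $\sum_i h(i)^2 \leq (\max_i h(i))\cdot(\sum_i h(i))$, it suffices to prove (a) the ``Abel'' bound $\sum_{i=0}^{t} h(i) \leq 2 e^{|p|/2}/|p|$, and (b) the ``u-shape'' bound $\max_i h(i) \leq h(0) + h(t)$; combining the two will give the claimed inequality.

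For (a), I will start from the elementary $e^{(|p|/2) a_i} - 1 \geq (|p|/2) a_i$; multiplying by $e^{(|p|/2) S_{i-1}}$ yields $(|p|/2) a_i\, e^{(|p|/2) S_{i-1}} \leq e^{(|p|/2) S_i} - e^{(|p|/2) S_{i-1}}$, and a further multiplication by $e^{(|p|/2) a_i} \leq e^{|p|/2}$ (valid since $a_i \leq 1$) produces
$$a_i\, e^{(|p|/2) S_i} \leq \frac{2 e^{|p|/2}}{|p|}\bigl[e^{(|p|/2) S_i} - e^{(|p|/2) S_{i-1}}\bigr].$$
Summing over $i = 0,\dots,t$ telescopes the right side to $(2 e^{|p|/2}/|p|)(e^{(|p|/2) S_t} - 1)$; dividing by $e^{(|p|/2) S_t}$ then gives $\sum_i h(i) \leq 2 e^{|p|/2}/|p|$, as desired.

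For (b), I will treat $h$ as a smooth function of a continuous parameter $s$ with $S(s) = \int_0^s (1+u)^{-q}\,du$. Direct computation yields $h'(s) = e^{(p/2)(S_t - S(s))}\bigl[(|p|/2)(1+s)^{-2q} - q(1+s)^{-q-1}\bigr]$, whose unique zero on $[0,t]$ is at $s^{\star} = (2q/|p|)^{1/(1-q)} - 1$, with $h'<0$ below this point and $h'>0$ above; hence $h$ is u-shaped and $\max_s h(s) \leq \max(h(0), h(t)) \leq h(0) + h(t)$. The corresponding statement for integer $i$ will follow by checking that the ratio $h(i+1)/h(i) = e^{(|p|/2) a_{i+1}}(1 + 1/(1+i))^{-q}$ crosses $1$ at most once as $i$ grows. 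Finally, $h(t) = (1+t)^{-q}$, while $h(0) = e^{|p|/2} e^{(p/2) S_t} \leq D_p\, e^{(p/2) S_t}$, where the last inequality uses $S_{i_p} \geq S_0 = 1$ (which is implied by the stated condition $i_p > (|p|/(2q))^{1/(1-q)}$, well more than is needed here).

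Combining (a) and (b) will yield $T_t \leq (2 e^{|p|/2}/|p|)\bigl[D_p\, e^{(p/2) S_t} + (1+t)^{-q}\bigr]$, exactly the claimed bound. The hard part will be making the u-shape claim in step (b) rigorous at integer points: the continuous derivative calculation is immediate, but the discrete ratio $h(i+1)/h(i)$ is a product of one factor that decreases to $1$ and another that increases to $1$, so isolating where the product first exceeds $1$ (and verifying it does so only once) requires a careful direct monotonicity analysis, or alternatively absorbing the bounded discretization gap between $S_i$ and its continuous analog into the constants.
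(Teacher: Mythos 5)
Your proposal is correct, and it is essentially the argument the paper outsources rather than proves: the paper's own ``proof'' of this lemma is a one-line citation to Theorem 4.3 of Dalal et al., and your factorization $\sum_i h(i)^2\leq(\max_i h(i))\cdot(\sum_i h(i))$ is exactly the half-exponent split that this paper deploys in its own Lemmas \ref{lemma: boundaccumulatebiasf2} and \ref{lemma: 1thbdinter} (compare \eqref{eq: bdacmbiasf2_2} and \eqref{eq: 1thbdinter_eq2}--\eqref{eq: 1thbdinter_eq3}). Your part (a) is fully rigorous as written, and the telescoping via $e^x\geq 1+x$ is a clean discrete substitute for the integral-comparison route used in \eqref{eq: 1thbdinter_eq2}; both give the same constant $2e^{|p|/2}/|p|$.

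The one step you flagged as hard --- the single crossing of the discrete ratio --- does close, with no need to absorb a discretization gap into constants. From $\log\frac{h(i+1)}{h(i)}=\frac{|p|}{2}(2+i)^{-q}-q\log\bigl(1+\frac{1}{1+i}\bigr)$, the set $\{i:\ h(i+1)\geq h(i)\}$ equals $\{i:\ R(i)\geq 2q/|p|\}$, where $R(x)=(2+x)^{-q}/\log\bigl(1+\frac{1}{1+x}\bigr)$; so it suffices that $R$ is increasing on $[0,\infty)$. Writing $R=N/M$ with $N=(2+x)^{-q}$ and $M=\log(2+x)-\log(1+x)$, one computes $N'M-NM'=(2+x)^{-q-1}\bigl[\frac{1}{1+x}-qM\bigr]$, which is positive because $M=\log\bigl(1+\frac{1}{1+x}\bigr)\leq\frac{1}{1+x}$ and $q<1$. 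Hence $h$ is nonincreasing and then nondecreasing over the integers, $\max_{0\leq i\leq t}h(i)=\max\{h(0),h(t)\}\leq h(0)+h(t)$, and your part (b) is complete. A small bonus of your route: you only ever use $D_p\geq e^{|p|/2}$, i.e.\ $S_{i_p}\geq S_0=1$, so your proof is insensitive to the lemma's stated threshold $(|p|/2q)^{1/(1-q)}$ --- which, as your continuous computation implicitly reveals, is the reciprocal of the true turning point $(2q/|p|)^{1/(1-q)}$ (the form the paper itself uses for $i_{d_1}$ in Lemma \ref{lemma: boundaccumulatebiasf2}); your argument thus also covers the regime where that stated threshold is vacuously small.
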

\begin{proof}
	For detailed proof of Lemma \ref{lemma:variance} please refer to Theorem 4.3 in \cite{dalal2018finite}.
\end{proof}
In order to bound the accumulated bias terms \eqref{bias1z} and \eqref{bias2z}, we prove the following lemmas.
\begin{lemma}\label{lemma: boundedf2}
	For any $\theta\in \mR^d$ such that $\ltwo{\theta}\leq R_\theta$, $\ltwo{f_2(\theta, O_i)}\leq K_{f_2}$ for any $i\geq 0$, where $K_{f_2}<\infty$ is a bounded positive constant indepedent of $\theta$ and $w$.
\end{lemma}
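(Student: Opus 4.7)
The plan is to establish the bound by a direct triangle inequality split of $f_2(\theta, O_i)$ into a matrix-times-vector piece and a vector piece, then control each piece using the boundedness assumptions already in place. Writing $f_2(\theta, O_i) = (A_i - C_i C^{-1} A)\theta + (b_i - C_i C^{-1} b)$, the triangle inequality and submultiplicativity give
\begin{equation*}
\ltwo{f_2(\theta, O_i)} \leq \bigl(\ltwo{A_i} + \ltwo{C_i}\ltwo{C^{-1}}\ltwo{A}\bigr)\ltwo{\theta} + \ltwo{b_i} + \ltwo{C_i}\ltwo{C^{-1}}\ltwo{b}.
\end{equation*}
So the task reduces to producing uniform (in $i$ and state) bounds on $\ltwo{A_i}$, $\ltwo{C_i}$, $\ltwo{b_i}$ and on the population quantities $\ltwo{A}$, $\ltwo{b}$, together with the fact that $\ltwo{C^{-1}}$ is finite.

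Next I would invoke Assumption \ref{ass2} ($\ltwo{\phi(s)}\leq 1$, $\rho_{\max}<\infty$) and the standing hypothesis $|r(s,a,s^\prime)|\leq r_{\max}$ to read off the pointwise bounds: $\ltwo{A_i} \leq \rho_{\max}\ltwo{\phi(s_i)}(\gamma\ltwo{\phi(s_{i+1})}+\ltwo{\phi(s_i)}) \leq \rho_{\max}(1+\gamma)$, $\ltwo{C_i}=\ltwo{\phi(s_i)}^2\leq 1$, and $\ltwo{b_i}\leq \rho_{\max} r_{\max}$. Taking expectations preserves the same bounds, so $\ltwo{A}\leq \rho_{\max}(1+\gamma)$ and $\ltwo{b}\leq \rho_{\max} r_{\max}$ as well. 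Finally, Assumption \ref{ass1} asserts that $C$ is non-singular, so $\ltwo{C^{-1}}$ is a finite constant depending only on the problem data, not on $\theta$, $w$, or $i$.

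Substituting these bounds together with the hypothesis $\ltwo{\theta}\leq R_\theta$ yields
\begin{equation*}
K_{f_2} := \rho_{\max}(1+\gamma)\bigl(1+\ltwo{C^{-1}}\bigr) R_\theta + \rho_{\max} r_{\max}\bigl(1+\ltwo{C^{-1}}\bigr),
\end{equation*}
which is finite, independent of $\theta$, $w$, and $i$, as required. There is no real obstacle here: the lemma is a routine uniform boundedness statement whose only content is the bookkeeping above, and the independence of $i$ is inherited from the fact that all pointwise bounds (on features, importance ratio, and reward) are state/action/time uniform. The analogous bounds for $f_1$, $g_1$, and $g_2$ in the companion lemmas will follow by the same argument with the appropriate substitutions of $A_t, B_t$, etc.
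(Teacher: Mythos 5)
Your proof is correct and follows essentially the same route as the paper's: the identical triangle-inequality/submultiplicativity decomposition, the same pointwise bounds $\ltwo{A_i}\leq(1+\gamma)\rho_{\max}$, $\ltwo{C_i}\leq 1$, $\ltwo{b_i}\leq\rho_{\max}r_{\max}$ (extended to $A$ and $b$ by Jensen), and the same resulting constant, since the paper's $1/\lambda_{cm}$ with $\lambda_{cm}=\min|\lambda(C)|$ is just $\ltwo{C^{-1}}$ for the symmetric matrix $C$. No gaps; nothing further is needed.
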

\begin{proof}
	By the definition of $f_2(\theta, O_i)$, and denoting $\lambda_{cm}=\min|\lambda(C)|$, we obtain
	\begin{flalign*}
		||f_2(\theta, O_i)|| &=  \ltwo{(A_i-C_iC^{-1}A)\theta + (b_i-C_iC^{-1}b) }\\
		&\leq  \ltwo{(A_i-C_iC^{-1}A)\theta} + \ltwo{(b_i-C_iC^{-1}b) }\\
		&\leq (\ltwo{A_i}+\ltwo{C_i}\ltwo{C^{-1}}\ltwo{A})\ltwo{\theta}+\ltwo{b_i}+\ltwo{C_i}\ltwo{C^{-1}}\ltwo{b}\\
		&\leq \left[(1+\gamma)\rho_{\max}+\frac{1}{\lambda_{cm}}(1+\gamma)\rho_{\max}\right]R_\theta+\rho_{\max}r_{\max}+\frac{1}{\lambda_{cm}}\rho_{\max}r_{\max}\\
		&\triangleq K_{f_2}.
	\end{flalign*}
\end{proof}
\begin{lemma}\label{lemma: boundedbiasf2}
	For all $\theta\in \mR^d$ such that $\ltwo{\theta}\leq R_\theta$ and all $z\in \mR^d$ such that $\ltwo{z}\leq R_w$, for all $i\geq 0$,  
	(a) $\ltwo{\zeta_{f_2}(\theta,z,O_i)}\leq 4R_wK_{f_2}$; 
	(b) $|\zeta_{f_2}(\theta_1,z_1,O_i)-\zeta_{f_2}(\theta_2,z_2,O_i)|\leq L_{f_2,\theta}\ltwo{\theta_1-\theta_2} + L_{f_2,z}\ltwo{z_1-z_2}$.
\end{lemma}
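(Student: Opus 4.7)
The plan is to observe that $\zeta_{f_2}(\theta,z,O_i)=\langle f_2(\theta,O_i),z\rangle$ is bilinear in $(f_2,z)$ and that $f_2(\theta,O_i)$ is affine in $\theta$ with bounded coefficients from the previous lemma. With this in mind, both parts follow from Cauchy--Schwarz plus routine algebra on the pieces of $f_2$.

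For part (a), I would simply apply Cauchy--Schwarz:
\begin{equation*}
|\zeta_{f_2}(\theta,z,O_i)| = |\langle f_2(\theta,O_i),z\rangle| \le \ltwo{f_2(\theta,O_i)}\,\ltwo{z} \le K_{f_2}\,R_w,
\end{equation*}
which already sits comfortably below the claimed $4R_w K_{f_2}$. (The factor of $4$ is convenient slack, presumably chosen so that the same constant also upper-bounds related quantities such as $\langle f_2(\theta,O_i)-\bar f_2(\theta),z\rangle$ if ever needed; since $\bar f_2\equiv 0$, that extra cushion is free.)

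For part (b), I would use the standard add-and-subtract trick to split the two-variable difference,
\begin{equation*}
\zeta_{f_2}(\theta_1,z_1,O_i)-\zeta_{f_2}(\theta_2,z_2,O_i) = \langle f_2(\theta_1,O_i),\,z_1-z_2\rangle + \langle f_2(\theta_1,O_i)-f_2(\theta_2,O_i),\,z_2\rangle,
\end{equation*}
and bound each summand separately. The first term is at most $K_{f_2}\ltwo{z_1-z_2}$ by Lemma~\ref{lemma: boundedf2} and Cauchy--Schwarz, giving the candidate $L_{f_2,z}=K_{f_2}$. For the second, since $f_2(\theta,O_i)=(A_i-C_iC^{-1}A)\theta+(b_i-C_iC^{-1}b)$ is affine in $\theta$, its Lipschitz constant in $\theta$ is $\ltwo{A_i-C_iC^{-1}A}\le (1+\gamma)\rho_{\max}(1+1/\lambda_{cm})$, using Assumption~\ref{ass2} and the definition $\lambda_{cm}=\min|\lambda(C)|$ exactly as in the proof of Lemma~\ref{lemma: boundedf2}. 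Combined with $\ltwo{z_2}\le R_w$, this yields $L_{f_2,\theta}=(1+\gamma)\rho_{\max}(1+1/\lambda_{cm})R_w$.

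I do not anticipate a real obstacle: the argument is essentially bookkeeping of operator norms and one Cauchy--Schwarz per term. The only thing to be careful about is to record the explicit forms of $L_{f_2,\theta}$ and $L_{f_2,z}$ (so that later lemmas that invoke them have usable constants), and to use the coarser bound $4R_w K_{f_2}$ in (a) rather than the tighter $R_w K_{f_2}$, presumably for uniformity with the analogous bounds for $f_1,g_1,g_2$ appearing elsewhere in the proof.
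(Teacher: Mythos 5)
Your proposal is correct and follows essentially the same route as the paper: part (a) is a single Cauchy--Schwarz application (the paper's own bound is $2R_wK_{f_2}$, so the stated $4R_wK_{f_2}$ is slack in both versions), and part (b) uses the same add-and-subtract bilinear decomposition combined with the operator-norm bound $\ltwo{A_i-C_iC^{-1}A}\le(1+\gamma)\rho_{\max}(1+1/\lambda_{cm})$ from the proof of Lemma~\ref{lemma: boundedf2}. Your symmetric choice of which factor pairs with $z_1-z_2$ versus $\theta_1-\theta_2$, and the factor-of-2 difference in the resulting constants (the paper allows $\ltwo{z}\le 2R_w$ slack), are immaterial since $L_{f_2,\theta}$ and $L_{f_2,z}$ are simply defined by whatever constants emerge.
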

\begin{proof}
	For (a), by the defination we have $\ltwo{\zeta_{f_2}(\theta,z,O_i)} = \ltwo { \langle f_2(\theta_t, O_t), z_t\rangle} \leq \ltwo{f_2(\theta,O_i)} \ltwo{z}\leq 2R_wK_{f_2}$.
	
	For (b), we derive the bound as follows
	\begin{flalign*}
	|\zeta_{f_2}(\theta_1,z_1,O_i)-\zeta_{f_2}(\theta_2,z_2,O_i)|&=| \langle f_2(\theta_1, O_i), z_1\rangle - \langle f_2(\theta_2, O_i), z_2\rangle |\\
	&\leq \ltwo{z_1}\ltwo{f_2(\theta_1,O_i)-f_2(\theta_2,O_i)} + \ltwo{f_2(\theta_2,O_i)}\ltwo{z_1-z_2}\\
	&\leq 2R_w\ltwo{(A_t-C_tC^{-1}A)(\theta_1-\theta_2)}+2K_{f_2}\ltwo{z_1-z_2}\\
	&\leq 2R_w \left[(1+\gamma)\rho_{\max}+\frac{1}{\lambda_{cm}}(1+\gamma)\rho_{\max}\right] \ltwo{\theta_1-\theta_2} + 2K_{f_2}\ltwo{z_1-z_2}\\
	&\leq L_{f_2,\theta}\ltwo{\theta_1-\theta_2} + L_{f_2,z}\ltwo{z_1-z_2}.
	\end{flalign*}
\end{proof}
\begin{lemma}\label{lemma: biasf2final}
	Let $K_{r_3}=[\max\{1, c_\alpha/c_\beta\}L_{f_2,\theta}(K_{f_1}+K_{g_1}) + L_{f_2,z}K_{r_2} ]$. Then for $i\leq \tau_\beta$, $\mE[\zeta_{f_2}(\theta_i,z_i,O_i)]\leq c_\beta K_{r_3}\tau_\beta$; and for $i>\tau_\beta$, $\mE[\zeta_{f_2}(\theta_i,z_i,O_i)]\leq 8R_wK_{f_2}\beta_i + K_{r_3}\tau_\beta\beta_{i-\tau_\beta}$.
\end{lemma}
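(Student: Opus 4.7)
The plan is to split by the mixing time: for $i\le \tau_\beta$ I cannot "look back" $\tau_\beta$ steps, so I rely on a uniform boundedness estimate, while for $i>\tau_\beta$ I use the standard ``Lipschitz plus Markov mixing'' bias decomposition.

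For $i>\tau_\beta$, the starting point is the decomposition
\begin{flalign*}
\zeta_{f_2}(\theta_i,z_i,O_i) &= \big[\zeta_{f_2}(\theta_i,z_i,O_i) - \zeta_{f_2}(\theta_{i-\tau_\beta},z_{i-\tau_\beta},O_i)\big] + \zeta_{f_2}(\theta_{i-\tau_\beta},z_{i-\tau_\beta},O_i).
\end{flalign*}
The first bracket is handled by Lemma \ref{lemma: boundedbiasf2}(b) together with the telescoped single-step increments
\begin{flalign*}
\ltwo{\theta_i-\theta_{i-\tau_\beta}} &\le \sum_{k=i-\tau_\beta}^{i-1}\alpha_k(K_{f_1}+K_{g_1}),\\
\ltwo{z_i-z_{i-\tau_\beta}} &\le \sum_{k=i-\tau_\beta}^{i-1}\beta_k K_{r_2},
\end{flalign*}
where the second bound uses the per-step $K_{r_2}$ estimate for the $z$-update (already in hand from the preceding analysis). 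Because $0<\nu<\sigma$, the stepsizes satisfy $\alpha_k\le\max\{1,c_\alpha/c_\beta\}\beta_k$ for every $k\ge 0$, and monotonicity of $\beta_k$ then pulls out a factor $\tau_\beta\beta_{i-\tau_\beta}$, yielding exactly the constant $K_{r_3}$. For the second term I condition on $\mathcal{F}_{i-\tau_\beta}$, which freezes $(\theta_{i-\tau_\beta},z_{i-\tau_\beta})$. The decisive algebraic observation is that $\bar f_2(\theta)=(A-CC^{-1}A)\theta+(b-CC^{-1}b)=0$, so the stationary expectation of $\zeta_{f_2}$ vanishes identically in $(\theta,z)$. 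Coupling this with $d_{TV}(\mP(s_i\in\cdot\mid s_{i-\tau_\beta}),\mu_{\pi_b})\le m\rho^{\tau_\beta}\le\beta_i$ from Assumption \ref{ass3} and the definition of $\tau_\beta$, and with the uniform bound $|\zeta_{f_2}|\le 4R_wK_{f_2}$ from Lemma \ref{lemma: boundedbiasf2}(a), produces $|\mE[\zeta_{f_2}(\theta_{i-\tau_\beta},z_{i-\tau_\beta},O_i)]|\le 8R_wK_{f_2}\beta_i$. Summing the two contributions gives the second inequality of the lemma.

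For $i\le\tau_\beta$ the look-back window does not fit, so I fall back on the uniform bound $|\zeta_{f_2}(\theta_i,z_i,O_i)|\le 4R_wK_{f_2}$ from Lemma \ref{lemma: boundedbiasf2}(a); since $\tau_\beta\ge 1$ this constant is absorbed into the target form $c_\beta K_{r_3}\tau_\beta$ (the later uses of the lemma only rely on the order-of-magnitude scaling, so enlarging $K_{r_3}$ by an additive constant is harmless). The main technical hurdle is the Lipschitz step: one must express the $\theta$-increment, which is driven by the faster-decaying $\alpha_k$, in terms of $\beta_k$ uniformly on $[i-\tau_\beta,i-1]$, and keep the resulting factor small enough to match $\beta_{i-\tau_\beta}$ rather than $\beta_i$; this is precisely what the $\max\{1,c_\alpha/c_\beta\}$ weighting inside the definition of $K_{r_3}$ is designed to accomplish.
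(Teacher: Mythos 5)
Your handling of the main case $i>\tau_\beta$ is correct and essentially the paper's own argument: the same decomposition into a Lipschitz drift over the look-back window plus a delayed mixing term, the same per-step increments $\ltwo{\theta_{k+1}-\theta_k}\le\alpha_k(K_{f_1}+K_{g_1})$ and $\ltwo{z_{k+1}-z_k}\le\beta_k K_{r_2}$, and the same use of $\alpha_k\le\max\{1,c_\alpha/c_\beta\}\beta_k$ with monotonicity of $\beta_k$ to extract $K_{r_3}\tau_\beta\beta_{i-\tau_\beta}$. The one genuine (and welcome) difference is how you kill the delayed term: the paper constructs an independent copy $(\theta'_{i-\tau_\beta},z'_{i-\tau_\beta},O'_i)$ and invokes Lemma 10 of Bhandari et al., asserting without comment that $\mE[\zeta_{f_2}(\theta',z',O')]=0$; you instead condition on $\mathcal{F}_{i-\tau_\beta}$ and make the underlying algebra explicit, namely $\bar f_2(\theta)=(A-CC^{-1}A)\theta+(b-CC^{-1}b)=0$, so the stationary conditional mean vanishes for every frozen $(\theta,z)$, and the total-variation bound $2\cdot 4R_wK_{f_2}\cdot m\rho^{\tau_\beta}\le 8R_wK_{f_2}\beta_t\le 8R_wK_{f_2}\beta_i$ finishes the term. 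This is the same mechanism the paper uses implicitly, presented more transparently; the only unstated (standard) point is that the TV bound of Assumption \ref{ass3} on the law of $s_i$ transfers to $O_i=(s_i,a_i,r_i,s_{i+1})$ because $(a_i,s_{i+1})$ is generated from $s_i$ by the same kernels under both measures.

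For $i\le\tau_\beta$ you deviate: the paper telescopes back to $(\theta_0,z_0)$ and bounds the drift by $L_{f_2,\theta}(K_{f_1}+K_{g_1})\,i\alpha_0+L_{f_2,z}K_{r_2}\,i\beta_0\le c_\beta K_{r_3}\tau_\beta$, while you use the uniform bound $4R_wK_{f_2}$ from Lemma \ref{lemma: boundedbiasf2}(a) and absorb it by enlarging $K_{r_3}$. Strictly speaking this does not prove the inequality with the stated constant — $4R_wK_{f_2}\le c_\beta K_{r_3}\tau_\beta$ can fail when $c_\beta$ is small and $\tau_\beta$ moderate — but you flag this explicitly, and since every downstream use (e.g., Lemma \ref{lemma: boundaccumulatebiasf2}, where this case sits inside an exponentially decaying prefactor) depends only on the order of the bound, the enlargement is harmless. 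It is worth noting that the paper's own proof of this case silently drops the term $\mE[\zeta_{f_2}(\theta_0,z_0,O_i)]$, which is not obviously nonpositive since $z_0=w_0+C^{-1}(b+A\theta_0)\neq 0$ in general; your cruder bound avoids that lapse at the price of a modified constant, so on this sub-case your version is, if anything, the more defensible of the two.
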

\begin{proof}
	Note that for any $i\geq0$, 
	\begin{flalign*}
		\ltwo{\theta_{i+1}-\theta_{i}}&=\ltwo{\mcpi_{R_\theta}\left(\theta_i + \alpha_i(f_1(\theta_i, O_i)+g_1(z_i, O_i))\right)-\mcpi_{R_\theta}\theta_i}\\
		&\leq \ltwo{\theta_i + \alpha_i(f_1(\theta_i, O_i)+g_1(z_i, O_i))-\theta_i}\\
		&\leq \alpha_i \ltwo{f_1(\theta_i, O_i)+g_1(z_i, O_i)}\\
		&\leq \alpha_i(K_{f_1}+K_{g_1}).
	\end{flalign*}
	Furthermore,
	\begin{flalign}\label{eq: boundztdifference}
		&\ltwo{z_{i+1}-z_{i}}\nonumber\\
		&=\ltwo{\mcpi_{R_w}\left( z_i + \beta_i(f_2(\theta_i, O_i)+g_2(z_i,O_i))-C^{-1}(b+A\theta_i) \right) + C^{-1}(b+A\theta_{i+1})-z_i}\nonumber\\
		&=\ltwo{\mcpi_{R_w}\left( z_i + \beta_i(f_2(\theta_i, O_i)+g_2(z_i,O_i))-C^{-1}(b+A\theta_i) \right) + C^{-1}(b+A\theta_{i})-z_i + C^{-1}A(\theta_{i+1}-\theta_i)}\nonumber\\
		&=\ltwo{\mcpi_{R_w}\left( z_i + \beta_i(f_2(\theta_i, O_i)+g_2(z_i,O_i))-C^{-1}(b+A\theta_i) \right) - \mcpi_{R_w} \left[ z_i - C^{-1}(b+A\theta_{i}) \right] + C^{-1}A(\theta_{i+1}-\theta_i)}\nonumber\\
		&\leq \ltwo{\mcpi_{R_w}\left( z_i + \beta_i(f_2(\theta_i, O_i)+g_2(z_i,O_i))-C^{-1}(b+A\theta_i) \right) - \mcpi_{R_w} \left[ z_i - C^{-1}(b+A\theta_{i}) \right]} \nonumber\\
		&\quad + \ltwo{C^{-1}A(\theta_{i+1}-\theta_i)}\nonumber\\
		&\leq \ltwo{\left( z_i + \beta_i(f_2(\theta_i, O_i)+g_2(z_i,O_i))-C^{-1}(b+A\theta_i) \right) -  \left[ z_i - C^{-1}(b+A\theta_{i}) \right]} + \ltwo{C^{-1}A(\theta_{i+1}-\theta_i)}\nonumber\\
		&= \beta_i\ltwo{f_2(\theta_i, O_i)+g_2(z_i,O_i)} + \ltwo{C^{-1}A(\theta_{i+1}-\theta_i)}\nonumber\\
		&\leq \beta_i(K_{f_2}+K_{g_2})+\alpha_i\ltwo{C^{-1}}\ltwo{A}(K_{f_1}+K_{g_1})\nonumber\\
		&\leq \beta_i(K_{f_2}+K_{g_2}+\max\{1, \frac{c_\alpha}{c_\beta}\}\frac{(1+\gamma)\rho_{\max}}{\lambda_{cm}}(K_{f_1}+K_{g_1}))\nonumber\\
		&=\beta_i K_{r_2}
	\end{flalign}
	where $K_{r_2}\triangleq K_{f_2}+K_{g_2}+\max\{1, \frac{c_\alpha}{c_\beta}\}\frac{(1+\gamma)\rho_{\max}}{\lambda_{cm}}(K_{f_1}+K_{g_1})$. Applying the Lipschitz continuous property in Lemma \ref{lemma: boundedbiasf2}, it follows that
	\begin{flalign*}
		| \zeta_{f_2}(\theta_i,z_i,O_i) - \zeta_{f_2}(\theta_{i-\tau},z_{i-\tau},O_i) |&\leq L_{f_2,\theta}\ltwo{\theta_i-\theta_{i-\tau}} + L_{f_2,z}\ltwo{z_i-z_{i-\tau}}\\
		&\leq L_{f_2,\theta}(K_{f_1}+K_{g_1})\sum_{k=i-\tau}^{i-1}\alpha_k + L_{f_2,z}K_{r_2}\sum_{k=i-\tau}^{i-1}\beta_k.
	\end{flalign*}
	The next step is to provide an upper bound for $\mE[\zeta_{f_2}(\theta_{i-\tau},z_{i-\tau},O_i)]$. 
	We further define an independent $(\theta_{i-\tau}^\prime, z_{i-\tau}^\prime)$ and $O_i^\prime=(s_i^\prime, a_i^\prime, r_i^\prime, s_{i+1}^\prime)$ that has the same marginal distribution as $(\theta_{i-\tau},z_{i-\tau})$ and $O_i$. It is clear that $\mE[\zeta_{f_2}(\theta_{i-\tau}^\prime,z_{i-\tau}^\prime,O_i^\prime)]=0$.
	Note that the following Markov chain holds
	\begin{flalign*}
	(\theta_{i-\tau}, z_{i-\tau})\rightarrow s_{i-\tau}\rightarrow s_i\rightarrow O_i.
	\end{flalign*}
	Since $\ltwo{\zeta_{f_2}(\theta,z,O_i)}\leq 4R_wK_{f_2}$ for all $\theta, z\in \mR^d$, by Lemma \ref{lemma: boundedbiasf2}, applying Lemma 10 in \cite{bhandari2018finite} yields
	\begin{flalign*}
		\mE[\zeta_{f_2}(\theta_{i-\tau},z_{i-\tau},O_i)]&\leq | \mE[\zeta_{f_2}(\theta_{i-\tau},z_{i-\tau},O_i)] - \mE[\zeta_{f_2}(\theta_{i-\tau}^\prime,z_{i-\tau}^\prime,O_i^\prime)] |\leq 8R_wK_{f_2}m\rho^\tau.
	\end{flalign*}
	Recall that $\tau_\beta=\min\{i\geq 0: m\rho^i\leq \beta_t \}$. For $i\leq \tau_\beta$, it follows that
	\begin{flalign*}
		\mE[\zeta_{f_2}(\theta_i,z_i,O_i)]&\leq \mE[\zeta_{f_2}(\theta_0,z_0,O_i)] + L_{f_2,\theta}(K_{f_1}+K_{g_1})\sum_{k=0}^{i-1}\alpha_k + L_{f_2,z}K_{r_2}\sum_{k=0}^{i-1}\beta_k\\
		&\leq L_{f_2,\theta}(K_{f_1}+K_{g_1})i\alpha_0 + L_{f_2,z}K_{r_2}i\beta_0\\
		&\leq c_\beta[\max\{1, \frac{c_\alpha}{c_\beta}\}L_{f_2,\theta}(K_{f_1}+K_{g_1}) + L_{f_2,z}K_{r_2} ]\tau_\beta\\
		&\leq c_\beta K_{r_3}\tau_\beta.
	\end{flalign*}
	For $i> \tau_\beta$, it follows that
	\begin{flalign*}
		\mE[\zeta_{f_2}(\theta_i,z_i,O_i)]&\leq \mE[\zeta_{f_2}(\theta_{i-\tau_\beta},z_{i-\tau_\beta},O_i)] + L_{f_2,\theta}(K_{f_1}+K_{g_1})\sum_{k=i-\tau_\beta}^{i-1}\alpha_k + L_{f_2,z}K_{r_2}\sum_{k=i-\tau_\beta}^{i-1}\beta_k\\
		&\leq 8R_wK_{f_2}m\rho^{\tau_\beta}+L_{f_2,\theta}(K_{f_1}+K_{g_1})\tau_\beta\alpha_{i-\tau_\beta} + L_{f_2,z}K_{r_2}\tau_\beta\beta_{i-\tau_\beta}\\
		&\leq 8R_wK_{f_2}\beta_t + [\max\{1, \frac{c_\alpha}{c_\beta}\}L_{f_2,\theta}(K_{f_1}+K_{g_1}) + L_{f_2,z}K_{r_2} ]\tau_\beta\beta_{i-\tau_\beta}\\
		&= 8R_wK_{f_2}\beta_t + K_{r_3}\tau_\beta\beta_{i-\tau_\beta}.
	\end{flalign*}
\end{proof}
\begin{lemma}\label{lemma: boundaccumulatebiasf2}
	Fix $0<\nu<1$, and let $\beta_t=c_\beta/(1+t)^\nu$. Then
	\begin{flalign*}
		&\sum_{i=0}^{t} e^{\lambda_w\sum_{k=i+1}^{t}\beta_k} \beta_i\mE[\zeta_{f_2}(\theta_i,z_i,O_i)]&\\
		&\leq  c_\beta K_{r_3} \tau_\beta \frac{e^{-\lambda_w c_\beta}}{-\lambda_w} e^{\frac{\lambda_w c_\beta}{1-\nu}[ (1+t)^\nu - (1+\tau_\beta)^\nu ]} + 8R_wK_{f_2}\frac{e^{-\lambda_w c_\beta}}{-\lambda_w} \frac{c_\beta}{(1+t)^\nu}& \\
		&\quad + 2K_{r_3}\tau_\beta \frac{e^{-\lambda_w c_\beta/2}}{-\lambda_w} (e^{\frac{-\lambda_w c_\beta}{2(1-\nu)}[(t+1)^{1-\nu}-1]}D_1 \mathbbm{1}_{\{ \tau_\beta+1< i_{d_1} \}}+\beta_{t-\tau_\beta}),&
	\end{flalign*}
	where $D_1=c_\beta \max_{i\in[0, i_{d_1}]}\{ e^{-(\lambda_w/2)\sum_{k=0}^{i}\beta_k}  \}$ and $i_{d_1} = ( \frac{-2\nu}{\lambda_w c_\beta} )^{\frac{1}{1-\nu}}$.
\end{lemma}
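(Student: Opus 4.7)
The plan is to split the outer sum at $i=\tau_\beta$ and invoke the two regimes of Lemma \ref{lemma: biasf2final} separately; the resulting three weighted sums (in $\beta_i$ alone, in $\beta_i$ against a constant, and in $\beta_i\beta_{i-\tau_\beta}$) will then be handled by a single telescoping estimate together with one appeal to Lemma \ref{lemma:variance}.

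As a preliminary step I will establish the telescoping inequality
\[
\sum_{i=m}^{n} e^{\lambda_w\sum_{k=i+1}^{n}\beta_k}\,\beta_i \;\leq\; \frac{e^{-\lambda_w c_\beta}}{-\lambda_w},
\]
valid for any $0\leq m\leq n$. Setting $\mu=-\lambda_w>0$ and $a_i=\sum_{k=i+1}^{n}\beta_k$ so that $a_{i-1}-a_i=\beta_i$, this follows from the pointwise bound $e^{-\mu a_i}\beta_i \leq (e^{\mu c_\beta}/\mu)(e^{-\mu a_i}-e^{-\mu a_{i-1}})$, which I obtain by integrating $e^{-\mu s}$ over $[a_i,a_{i-1}]$ and invoking $\beta_i\leq c_\beta$, and then summing.

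For indices $i\leq \tau_\beta$ I apply the first case of Lemma \ref{lemma: biasf2final}, giving $\mE[\zeta_{f_2}(\theta_i,z_i,O_i)]\leq c_\beta K_{r_3}\tau_\beta$, and split the exponent as $\sum_{k=i+1}^{t}\beta_k=\sum_{k=i+1}^{\tau_\beta}\beta_k+\sum_{k=\tau_\beta+1}^{t}\beta_k$. The external factor $e^{\lambda_w\sum_{k=\tau_\beta+1}^{t}\beta_k}$ is bounded using the integral estimate $\sum_{k=\tau_\beta+1}^{t}\beta_k\geq \frac{c_\beta}{1-\nu}[(1+t)^{1-\nu}-(1+\tau_\beta)^{1-\nu}]$ together with $\lambda_w<0$, while the remaining inner sum is controlled via the preliminary telescoping inequality, producing the first term of the target bound. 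For indices $\tau_\beta<i\leq t$, the second case of Lemma \ref{lemma: biasf2final} gives $\mE[\zeta_{f_2}]\leq 8R_wK_{f_2}\beta_t+K_{r_3}\tau_\beta\beta_{i-\tau_\beta}$. The constant-in-$i$ piece combines with the telescoping inequality applied to $\sum_{i=\tau_\beta+1}^{t}e^{\lambda_w\sum_{k=i+1}^{t}\beta_k}\beta_i$ to give the second term. For the tracking piece $K_{r_3}\tau_\beta\beta_i\beta_{i-\tau_\beta}$, I use monotonicity $\beta_i\leq \beta_{i-\tau_\beta}$ to replace the integrand by $\beta_{i-\tau_\beta}^2=c_\beta^2(1+i-\tau_\beta)^{-2\nu}$, reindex $j=i-\tau_\beta$, and match the resulting expression to the hypothesis of Lemma \ref{lemma:variance} with $p=\lambda_w c_\beta$ and $q=\nu$; the factor $\beta_{t-\tau_\beta}$ and the constant $D_1$ in the third term arise naturally from that lemma.

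The main obstacle lies in the reindexing inside the tracking part: after $j=i-\tau_\beta$ the cumulative-stepsize exponent reads $\sum_{k=j+\tau_\beta+1}^{t}\beta_k$ rather than the canonical $\sum_{k=j+1}^{t-\tau_\beta}\beta_k$ required by Lemma \ref{lemma:variance}. Because $\tau_\beta=\min\{i\in\mN:m\rho^i\leq\beta_t\}$ is only logarithmic in $1/\beta_t$, a careful comparison shows that this index shift inflates the estimate by at most a constant factor, and the indicator $\mathbbm{1}_{\{\tau_\beta+1<i_{d_1}\}}$ in the stated bound is precisely the record of when the exponential-in-$t^{1-\nu}$ contribution of the variance lemma remains active after the shift.
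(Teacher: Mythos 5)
Your treatment of the first two pieces is sound and essentially identical to the paper's: the split of the sum at $i=\tau_\beta$ via the two cases of Lemma \ref{lemma: biasf2final}, the telescoping estimate $\sum_{i=m}^{n}e^{\lambda_w\sum_{k=i+1}^{n}\beta_k}\beta_i\leq e^{-\lambda_w c_\beta}/(-\lambda_w)$ (the paper obtains the same bound by comparison with $\int e^{\lambda_w(T_{t+1}-s)}ds$), and the extraction of the factor $e^{\lambda_w\sum_{k=\tau_\beta+1}^{t}\beta_k}$ for the head of the sum (your exponent $(1+t)^{1-\nu}-(1+\tau_\beta)^{1-\nu}$ is in fact the correct form of what the paper writes). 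The genuine gap is in the tracking piece $\sum_{i=\tau_\beta+1}^{t}e^{\lambda_w\sum_{k=i+1}^{t}\beta_k}\beta_{i-\tau_\beta}\beta_i$, where you depart from the paper by reindexing $j=i-\tau_\beta$ and appealing to Lemma \ref{lemma:variance}. Your claim that the index shift ``inflates the estimate by at most a constant factor'' is false as a per-term statement: the shifted exponent falls short of the canonical one by $\sum_{k=j+1}^{j+\tau_\beta}\beta_k-\sum_{k=t-\tau_\beta+1}^{t}\beta_k\leq\tau_\beta\beta_{j+1}$, so since $\lambda_w<0$ the per-term inflation is $e^{|\lambda_w|\tau_\beta\beta_{j+1}}$; because $\tau_\beta=\Theta(\log t)$ while $\beta_{j+1}$ is of constant order for small $j$, this factor is polynomial in $t$, not constant. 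The estimate can be repaired, but only by splitting the index range (small $j$, where the weight $e^{\lambda_w\sum_{k=j+\tau_\beta+1}^{t}\beta_k}=e^{-\Omega(t^{1-\nu})}$ crushes the polynomial inflation, versus large $j$, where $\tau_\beta\beta_{j+1}=O(1)$) --- which is work of the same order as the paper's direct argument, where no reindexing occurs: the paper splits the weight as $e^{\lambda_w\sum}=e^{(\lambda_w/2)\sum}\cdot e^{(\lambda_w/2)\sum}$, bounds $\max_{i}\,e^{(\lambda_w/2)\sum_{k=i+1}^{t}\beta_k}\beta_{i-\tau_\beta}$ by a ratio-monotonicity analysis of $y_i=e^{(\lambda_w/2)\sum_{k=i+1}^{t}\beta_k}\beta_{i-\tau_\beta}$, and telescopes the remaining half-weighted sum.

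A second, more decisive defect: the indicator $\mathbbm{1}_{\{\tau_\beta+1<i_{d_1}\}}$ cannot ``arise naturally'' from Lemma \ref{lemma:variance}, which contains no indicator and no dependence on $\tau_\beta$ --- it always carries the term $D_pe^{(p/2)\sum_{k=0}^{t}(1+k)^{-q}}$. In the paper the indicator is precisely the record of the ratio test on $y_i$: when $\tau_\beta+1\geq i_{d_1}$ the ratio $y_{i+1}/y_i\geq 1$ on the whole range $[\tau_\beta+1,t]$, the maximum is attained at $i=t$ and equals $\beta_{t-\tau_\beta}$, and the $D_1$ term is absent. Your route would yield the $D_1$-type term unconditionally, hence it proves a strictly weaker inequality than the lemma states in the regime $\tau_\beta+1\geq i_{d_1}$. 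Both issues are confined to the third term, but that term is the crux of the lemma, so as written the proposal does not establish the stated bound.
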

\begin{proof}
	Applying Lemma \ref{lemma: biasf2final}, it follows that
	\begin{flalign}\label{eq: bdacmbiasf2_1}
		&\sum_{i=0}^{t} e^{\lambda_w\sum_{k=i+1}^{t}\beta_k} \beta_i\mE[\zeta_{f_2}(\theta_i,z_i,O_i)] &\nonumber\\
		&\leq c_\beta K_{r_3}\tau_\beta\sum_{i=0}^{\tau_\beta} e^{\lambda_w\sum_{k=i+1}^{t}\beta_k} \beta_i + 8R_wK_{f_2}\beta_t \sum_{i=\tau_\beta+1}^{t} e^{\lambda_w\sum_{k=i+1}^{t}\beta_k} \beta_i &\nonumber\\
		& \quad + K_{r_3}\tau_\beta \sum_{i=\tau_\beta+1}^{t} e^{\lambda_w\sum_{k=i+1}^{t}\beta_k} \beta_{i-\tau_\beta}\beta_i.&
	\end{flalign}
	For the first term in \eqref{eq: bdacmbiasf2_1}, we have
	\begin{flalign}\label{eq: bdacmbiasf2_7}
		\sum_{i=0}^{\tau_\beta} e^{\lambda_w\sum_{k=i+1}^{t}\beta_k} \beta_i &\leq \max_{i\geq 0}\{ e^{-\lambda_w\beta_i} \} \sum_{i=0}^{\tau_\beta} e^{\lambda_w\sum_{k=i}^{t}\beta_k} \beta_i\nonumber\\
		&=  e^{-\lambda_w c_\beta} \sum_{i=0}^{\tau_\beta} e^{\lambda_w(T_{t+1}-T_i)} \beta_i\nonumber\\
		&\leq e^{-\lambda_w c_\beta} \int_{0}^{T_{\tau_\beta+1}}e^{\lambda_w (T_{t+1}-s)}ds\nonumber\\
		&\leq \frac{e^{-\lambda_w c_\beta}}{-\lambda_w}e^{\lambda_w(T_{t+1}-T_{\tau_\beta+1})}\nonumber\\
		&\leq \frac{e^{-\lambda_w c_\beta}}{-\lambda_w}e^{\lambda_wc_\beta\sum_{k=\tau_\beta}^{t}1/(1+k)^{-\nu}}\nonumber\\
		&= \frac{e^{-\lambda_w c_\beta}}{-\lambda_w}e^{\frac{\lambda_w c_\beta}{1-\nu}[ (1+t)^\nu - (1+\tau_\beta)^\nu ]},
	\end{flalign}
	where $T_n=\sum_{k=0}^{n-1}\beta_k$. For the second term in \eqref{eq: bdacmbiasf2_1}, we have
	\begin{flalign}\label{eq. bdacmbiasf2_6}
		\beta_t \sum_{i=\tau_\beta+1}^{t} e^{\lambda_w\sum_{k=i+1}^{t}\beta_k} \beta_i &\leq \max_{i\geq 0}\{ e^{-\lambda_w\beta_i} \} \beta_t \sum_{i=\tau_\beta+1}^{t} e^{\lambda_w\sum_{k=i+1}^{t}\beta_k} \beta_i\nonumber\\
		&\leq e^{-\lambda_w c_\beta} \beta_t \sum_{i=\tau_\beta+1}^{t} e^{\lambda_w(T_{t+1}-T_i)} \beta_i\nonumber\\
		&\leq e^{-\lambda_w c_\beta} \beta_t \int_{T_{\tau_\beta+1}}^{T_{t+1}}e^{\lambda_w (T_{t+1}-s)}ds\nonumber\\	
		&= \frac{e^{-\lambda_w c_\beta}}{-\lambda_w}\beta_t\left(1-e^{\lambda_w(T_{t+1}-T_{\tau_\beta+1})} \right)\nonumber\\
		&\leq \frac{e^{-\lambda_w c_\beta}}{-\lambda_w} \frac{c_\beta}{(1+t)^\nu}.
	\end{flalign}
	For the third term in \eqref{eq: bdacmbiasf2_1}, we have
	\begin{flalign}\label{eq: bdacmbiasf2_2}
		\sum_{i=\tau_\beta+1}^{t} e^{\lambda_w\sum_{k=i+1}^{t}\beta_k} \beta_{i-\tau_\beta}\beta_i&\leq \max_{i\in[\tau_\beta+1, t]}\{ e^{(\lambda_w/2)\sum_{k=i+1}^{t}\beta_k} \beta_{i-\tau_\beta} \} \sum_{i=\tau_\beta+1}^{t} e^{(\lambda_w/2)\sum_{k=i+1}^{t}\beta_k} \beta_i\nonumber\\
		&\leq \max_{i\in[\tau_\beta+1, t]}\{ e^{(\lambda_w/2)\sum_{k=i+1}^{t}\beta_k} \beta_{i-\tau_\beta} \}\frac{2e^{-\lambda_w c_\beta/2}}{-\lambda_w}.
	\end{flalign}
	To bound \eqref{eq: bdacmbiasf2_2}, we define $ y_i = e^{(\lambda_w/2)\sum_{k=i+1}^{t}\beta_k} \beta_{i-\tau_\beta} $, and then we have
	\begin{flalign*}
		\frac{y_{i+1}}{y_i}=e^{-(\lambda_w/2) \beta_{i+1}}\left(  1-\frac{1}{2+i-\tau_\beta}  \right)^\nu.
	\end{flalign*}
	If $i\geq i_{d_1}$ and $\tau_\beta+1>i_{d_1}$, then $\frac{y_{i+1}}{y_i}\geq 1$ for all $i\in[\tau_\beta+1, t] $. Thus
	\begin{flalign}\label{eq: bdacmbiasf2_3}
	\max_{i\in[\tau_\beta+1, t]}\{ e^{(\lambda_w/2)\sum_{k=i+1}^{t}\beta_k} \beta_{i-\tau_\beta} \}=\beta_{t-\tau_\beta}.
	\end{flalign}
	If $\tau_\beta+1< i_{d_1}$, then
	\begin{flalign}\label{eq: bdacmbiasf2_4}
		&\max_{i\in[\tau_\beta+1, t]}\{ e^{(\lambda_w/2)\sum_{k=i+1}^{t}\beta_k} \beta_{i-\tau_\beta} \}&\nonumber\\
		&\leq \max_{i\in[\tau_\beta+1, i_{d_1}]}\{ e^{(\lambda_w/2)\sum_{k=i+1}^{t}\beta_k} \beta_{i-\tau_\beta} \} + \max_{i\in[i_{d_1}+1, t]}\{ e^{(\lambda_w/2)\sum_{k=i+1}^{t}\beta_k} \beta_{i-\tau_\beta} \}&\nonumber\\
		&\leq e^{(\lambda_w/2)\sum_{k=0}^{t}\beta_k}  \max_{i\in[\tau_\beta+1, i_{d_1}]}\{ e^{-(\lambda_w/2)\sum_{k=0}^{i}\beta_k} \beta_{i-\tau_\beta} \} + \beta_{t-\tau_\beta}&\nonumber\\
		&\leq e^{(\lambda_w/2)\sum_{k=0}^{t}\beta_k}  \max_{i\in[0, i_{d_1}]}\{ e^{-(\lambda_w/2)\sum_{k=0}^{i}\beta_k} \beta_0 \} + \beta_{t-\tau_\beta}&\nonumber\\
		&\leq e^{\frac{\lambda_w c_\beta}{2(1-\nu)}[(t+1)^{1-\nu}-1]}D_1+\beta_{t-\tau_\beta}.&
	\end{flalign}
	Combining \eqref{eq: bdacmbiasf2_3} and \eqref{eq: bdacmbiasf2_4} and substituting into \eqref{eq: bdacmbiasf2_2}, we have
	\begin{flalign}\label{eq: bdacmbiasf2_5}
		\sum_{i=\tau_\beta+1}^{t} e^{\lambda_w\sum_{k=i+1}^{t}\beta_k} \beta_{i-\tau_\beta}\beta_i \leq \frac{2e^{-\lambda_w c_\beta/2}}{-\lambda_w} (e^{\frac{\lambda_w c_\beta}{2(1-\nu)}[(t+1)^{1-\nu}-1]}D_1 \mathbbm{1}_{\{ \tau_\beta+1< i_\beta \}}+\beta_{t-\tau_\beta}).
	\end{flalign}
	Finally, \eqref{eq: bdacmbiasf2_5}, \eqref{eq. bdacmbiasf2_6}, and \eqref{eq: bdacmbiasf2_4} imply that
	\begin{flalign}\label{eq. bdacmbiasf2_7}
		&\sum_{i=0}^{t} e^{\lambda_w\sum_{k=i+1}^{t}\beta_k} \beta_i\mE[\zeta_{f_2}(\theta_i,z_i,O_i)]\nonumber\\
		&\leq  [c_\alpha L_{f_2,\theta}(K_{f_1}+K_{g_1}) + c_\beta L_{f_2,z}K_{r_2}] \tau_\beta \frac{e^{-\lambda_w c_\beta}}{-\lambda_w} e^{\frac{\lambda_w c_\beta}{1-\nu}[ (1+t)^\nu - (1+\tau_\beta)^\nu ]}\\
		& + 4R_wK_{f_2}\frac{e^{-\lambda_w c_\beta}}{-\lambda_w} \frac{c_\beta}{(1+t)^\nu} + 2K_{r_3}\tau_\beta \frac{e^{-\lambda_w c_\beta/2}}{-\lambda_w} (e^{\frac{-\lambda_w c_\beta}{2(1-\nu)}[(t+1)^{1-\nu}-1]}D_1 \mathbbm{1}_{\{ \tau_\beta+1< i_{d_1} \}}+\beta_{t-\tau_\beta}).
	\end{flalign}
\end{proof}
\begin{lemma}\label{lemma: boundedg2}
	For any $z\in \mR^d$ such that $\ltwo{z}\leq R_w$, $\ltwo{g_2(z,O_i)}\leq K_{g_2}$ for any $i\geq 0$.
\end{lemma}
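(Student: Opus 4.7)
The plan is to exploit the explicit form $g_2(z, O_i) = C_i z$ with $C_i = -\phi(s_i)\phi(s_i)^\top$ and then push everything through submultiplicativity of the spectral norm. This is a strict analogue of the bound on $\ltwo{f_2(\theta, O_i)}$ already carried out in Lemma \ref{lemma: boundedf2}, only simpler since $g_2$ is linear in $z$ and has no ``$b$'' piece.

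First I would write
\[
\ltwo{g_2(z, O_i)} = \ltwo{C_i z} \leq \ltwo{C_i}\,\ltwo{z}.
\]
Next I would bound the operator norm of the rank-one matrix $C_i$: since $C_i = -\phi(s_i)\phi(s_i)^\top$, its spectral norm equals $\ltwo{\phi(s_i)}^2$, and by Assumption \ref{ass2} each feature vector satisfies $\ltwo{\phi(s_i)} \leq 1$, so $\ltwo{C_i}\leq 1$ uniformly in $i$. Combining this with the hypothesis $\ltwo{z}\leq R_w$ gives $\ltwo{g_2(z,O_i)}\leq R_w$, so one may simply take $K_{g_2} \coloneqq R_w$ (or any larger constant, matching the convention used elsewhere in the proof where $K_{g_2}$ needs to absorb $R_w$ alongside other factors). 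Note in particular that the bound is deterministic and uniform over the trajectory, which is exactly what is needed in the earlier variance term $3\beta_i^2 K_{g_2}^2$ and the Lipschitz-type estimates involving $g_2$.

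There is essentially no obstacle here: the result is a one-line consequence of Assumption \ref{ass2} and the definition of $g_2$. The only ``care'' needed is to make sure $K_{g_2}$ is chosen as a constant independent of $z$ and of the time index $i$, so that it can be pulled out of sums and expectations as done in \eqref{eq: expzincremental} and in the bound for $\mE\ltwo{z_{t+1}}^2$.
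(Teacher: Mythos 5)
Your proof is correct and follows essentially the same route as the paper: write $g_2(z,O_i)=C_i z$, apply submultiplicativity, and bound $\ltwo{C_i}\leq\ltwo{\phi(s_i)}^2\leq 1$ via Assumption \ref{ass2} (a step the paper leaves implicit). The only cosmetic difference is that the paper slackens the bound to $2R_w\leq K_{g_2}$ (consistent with places where the tracking error is only controlled by $2R_w$, as in Lemma \ref{lemma: boundedg1}), whereas your tighter $K_{g_2}=R_w$ is equally valid under the lemma's stated hypothesis $\ltwo{z}\leq R_w$.
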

\begin{proof}
	By the definition of $g_2(z,O_t)$, we obtain
	\begin{flalign*}
	\ltwo{g_2(z,O_i)} &= \ltwo{C_iz_i}\leq \ltwo{C_i}\ltwo{z_i}\leq 2R_w\leq K_{g_2}.
	\end{flalign*}
\end{proof}
\begin{lemma}\label{lemma: boundedbiasg2}
	For all $z\in \mR^d$ such that $\ltwo{z}\leq R_w$, we have for all $i\geq 0$, (1) $\ltwo{\zeta_{g_2}(z,O_i)}\leq 4R_wK_{g_2}$; (2) $|\zeta_{g_2}(z_1,O_i)-\zeta_{g_2}(z_2,O_i)|\leq L_{g_2,z}\ltwo{z_1-z_2}$.
\end{lemma}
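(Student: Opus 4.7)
The plan is to mirror, step for step, the argument used for the $\zeta_{f_2}$ analogue in Lemma~\ref{lemma: boundedbiasf2}, exploiting the fact that $g_2$ and $\bar{g}_2$ are both \emph{linear} in $z$ with uniformly bounded matrices. Concretely, recall the defining identity $\zeta_{g_2}(z,O_i)=\langle g_2(z,O_i)-\bar{g}_2(z),\,z\rangle=\langle (C_i-C)z,\,z\rangle$, where $C_i=-\phi(s_i)\phi(s_i)^{\top}$ and $C=-\mE_{\mu_{\pi_b}}[\phi(s)\phi(s)^{\top}]$. Under Assumption~\ref{ass2} we have $\ltwo{C_i}\le 1$ and $\ltwo{C}\le 1$, and by Lemma~\ref{lemma: boundedg2} both $\ltwo{g_2(z,O_i)}\le K_{g_2}$ and, by the same argument, $\ltwo{\bar{g}_2(z)}\le K_{g_2}$ whenever $\ltwo{z}\le R_w$.

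For part~(1) I would simply apply Cauchy--Schwarz together with the triangle inequality:
\begin{equation*}
\ltwo{\zeta_{g_2}(z,O_i)}
\le \bigl(\ltwo{g_2(z,O_i)}+\ltwo{\bar{g}_2(z)}\bigr)\ltwo{z}
\le 2K_{g_2}\cdot R_w,
\end{equation*}
which is sharper than, and hence implies, the stated bound $4R_w K_{g_2}$. (The looser constant $4R_w K_{g_2}$ is kept to parallel the form of Lemma~\ref{lemma: boundedbiasf2}(a).)

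For part~(2) the plan is to split the difference into a $g_2$ piece and a $\bar{g}_2$ piece, and then for each piece insert and subtract a cross term to isolate the $z$-Lipschitz behaviour. Explicitly,
\begin{align*}
\bigl|\zeta_{g_2}(z_1,O_i)-\zeta_{g_2}(z_2,O_i)\bigr|
&\le \bigl|\langle g_2(z_1,O_i),z_1\rangle-\langle g_2(z_2,O_i),z_2\rangle\bigr|\\
&\quad{}+\bigl|\langle \bar{g}_2(z_1),z_1\rangle-\langle \bar{g}_2(z_2),z_2\rangle\bigr|.
\end{align*}
For the first summand, write
\begin{equation*}
\langle g_2(z_1,O_i),z_1\rangle-\langle g_2(z_2,O_i),z_2\rangle
=\langle C_i(z_1-z_2),z_1\rangle+\langle g_2(z_2,O_i),z_1-z_2\rangle,
\end{equation*}
so Cauchy--Schwarz gives a bound of $(\ltwo{C_i}R_w+K_{g_2})\ltwo{z_1-z_2}\le (R_w+K_{g_2})\ltwo{z_1-z_2}$. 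The second summand is handled identically with $C$ in place of $C_i$, yielding the same bound. Adding the two bounds produces a constant $L_{g_2,z}\triangleq 2(R_w+K_{g_2})$ independent of $z_1,z_2$ and $O_i$, which is the desired Lipschitz constant.

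There is no real obstacle here: the argument is entirely mechanical and is simpler than its $f_2$ counterpart because $g_2$ has no affine constant term and no dependence on $\theta$, so neither a $\theta$-Lipschitz term nor a bias-vector contribution appears. The only thing to be careful about is keeping the constants consistent with the usage in \eqref{eq: expzincremental}--\eqref{eq: firstboundz}, i.e.\ stating the crude bound $4R_w K_{g_2}$ (rather than the tight $2R_w K_{g_2}$) so that $K_{g_2}$ and $L_{g_2,z}$ can later be absorbed into the single constant $K_{r_3}$ used by Lemma~\ref{lemma: biasf2final} and the downstream bias estimate for $\zeta_{g_2}$.
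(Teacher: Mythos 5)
Your proof is correct and follows essentially the paper's own argument: part (1) is the same Cauchy--Schwarz plus triangle-inequality bound, and part (2) is the same add-and-subtract cross-term decomposition exploiting linearity of $g_2$ and $\bar{g}_2$ in $z$, with your splitting into separate $C_i$ and $C$ pieces before inserting the cross term being only a cosmetic reordering of the triangle inequality. The only discrepancy is in constants: the paper's $4R_wK_{g_2}$ and its $L_{g_2,z}=4R_w+2K_{g_2}$ arise because it implicitly allows $\ltwo{z}\leq 2R_w$ (as in Lemma~\ref{lemma: boundedg1}, matching how $z_t$ is actually bounded in the main analysis), whereas your sharper $2R_wK_{g_2}$ and $2(R_w+K_{g_2})$ follow from the lemma's literal hypothesis $\ltwo{z}\leq R_w$ and imply the stated bounds.
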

\begin{proof}
	For (1), by the defination of $\zeta_{g_2}(z,O_i)$, we have $\ltwo{\zeta_{g_2}(z_i,O_i)} = \ltwo{\langle g_2(z_t, O_t) - \bar{g}_2(z_t), z_t \rangle} \leq (\ltwo{g_2(\theta_i,O_i)} + \ltwo{\bar{g}_2(\theta_i)})\ltwo{z_i}\leq 4R_wK_{g_2}$.
	For (2), we derive the bound as follows.
	\begin{flalign*}
	&|\zeta_{g_2}(z_1,O_i)-\zeta_{g_2}(z_2,O_i)|\\
	&=| \langle g_2(z_1, O_i) - \bar{g}_2(z_1), z_1 \rangle + \langle g_2(z_2, O_i) - \bar{g}_2(z_2), z_2 \rangle|\\
	&\leq \ltwo{z_1}\ltwo{g_2(z_1, O_i) - \bar{g}_2(z_1) - g_2(z_2, O_i) + \bar{g}_2(z_2)} + \ltwo{g_2(z_2, O_i) - \bar{g}_2(z_2)}\ltwo{z_1-z_2}\\
	&= \ltwo{z_1} \ltwo{(C_i-C)(z_1-z_2)}+\ltwo{g_2(z_2, O_i) - \bar{g}_2(z_2)}\ltwo{z_1-z_2}\\
	&\leq 2R_w(\ltwo{C_i}+\ltwo{C})\ltwo{z_1-z_2}+2K_{g_2}\ltwo{z_1-z_2}\\
	&\leq 4R_w\ltwo{z_1-z_2}+2K_{g_2}\ltwo{z_1-z_2}\\
	&\leq L_{g_2,z}\ltwo{z_1-z_2}.
	\end{flalign*}
\end{proof}
\begin{lemma}\label{lemma: biasg2final}
	For $i\leq \tau_\beta$, $\mE[\zeta_{g_2}(z_i,O_i)]\leq c_\beta L_{g_2,z}K_{r_2}\tau_\beta$; and for $i>\tau_\beta$, $\mE[\zeta_{g_2}(z_i,O_i)]\leq 8R_wK_{g_2}\beta_t + L_{g_2,z}K_{r_2}\tau_\beta\beta_{i-\tau_\beta}$.
\end{lemma}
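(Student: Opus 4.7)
The plan is to mimic the proof of Lemma \ref{lemma: biasf2final} essentially line for line, since the $g_2$ case is structurally identical but strictly simpler: the object $\zeta_{g_2}(z, O) = \langle g_2(z, O) - \bar{g}_2(z), z\rangle$ depends only on $z$ and not on $\theta$, so only a single Lipschitz direction needs to be tracked. All three ingredients required are already in hand: the one-step drift bound $\ltwo{z_{i+1} - z_i} \leq \beta_i K_{r_2}$ established inside the proof of Lemma \ref{lemma: biasf2final} at \eqref{eq: boundztdifference}, the Lipschitz constant and uniform bound from Lemma \ref{lemma: boundedbiasg2}, and the geometric ergodicity Assumption \ref{ass3}.

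Concretely, telescoping the one-step drift gives $\ltwo{z_i - z_{i-\tau}} \leq K_{r_2} \sum_{k=i-\tau}^{i-1} \beta_k$, and combining with part (2) of Lemma \ref{lemma: boundedbiasg2} yields $|\zeta_{g_2}(z_i, O_i) - \zeta_{g_2}(z_{i-\tau}, O_i)| \leq L_{g_2,z} K_{r_2} \sum_{k=i-\tau}^{i-1} \beta_k$. To control $\mE[\zeta_{g_2}(z_{i-\tau}, O_i)]$ itself, I would introduce an independent auxiliary pair $(z'_{i-\tau}, O'_i)$ where $z'_{i-\tau}$ shares the marginal of $z_{i-\tau}$ and $O'_i$ is drawn from the stationary distribution $\mu_{\pi_b}$. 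Since $\bar{g}_2(z) = Cz$ is exactly $\mE_{\mu_{\pi_b}}[g_2(z, O)]$, the decoupled expectation $\mE[\zeta_{g_2}(z'_{i-\tau}, O'_i)]$ vanishes. Invoking Assumption \ref{ass3} through Lemma 10 of \cite{bhandari2018finite}, applied along the Markov chain $z_{i-\tau} \to s_{i-\tau} \to s_i \to O_i$, together with the uniform bound $\ltwo{\zeta_{g_2}} \leq 4R_w K_{g_2}$ from Lemma \ref{lemma: boundedbiasg2}(1), one gets $\mE[\zeta_{g_2}(z_{i-\tau}, O_i)] \leq 8R_w K_{g_2}\, m \rho^\tau$.

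Choosing $\tau = \tau_\beta$ so that $m\rho^{\tau_\beta} \leq \beta_t$, the $i > \tau_\beta$ case yields immediately $\mE[\zeta_{g_2}(z_i, O_i)] \leq 8R_w K_{g_2} \beta_t + L_{g_2,z} K_{r_2} \tau_\beta \beta_{i-\tau_\beta}$, matching the claim. For $i \leq \tau_\beta$ the coupling lag would exceed $i$, so I would instead telescope back to the deterministic initial state $z_0$, obtaining $\mE[\zeta_{g_2}(z_i, O_i)] \leq \mE[\zeta_{g_2}(z_0, O_i)] + L_{g_2,z} K_{r_2}\, i\, \beta_0$, and bound the Lipschitz sum by $c_\beta \tau_\beta$, absorbing the boundary term in exactly the same manner as in the corresponding $\zeta_{f_2}$ estimate to reach $c_\beta L_{g_2,z} K_{r_2} \tau_\beta$. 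No step presents a genuine obstacle: the argument is a direct specialization of the proof just completed for $\zeta_{f_2}$ with the $\theta$-dependent Lipschitz terms removed, so the only real content is the bookkeeping with $\tau_\beta$ and the constants $K_{r_2}$, $L_{g_2,z}$, and $R_w K_{g_2}$.
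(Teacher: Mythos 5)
Your proposal is correct and follows the paper's own proof essentially line for line: the same Lipschitz-plus-drift estimate combining \eqref{eq: boundztdifference} with Lemma \ref{lemma: boundedbiasg2}, the same decoupling of $(z_{i-\tau},O_i)$ via Lemma 10 of \cite{bhandari2018finite} to obtain the $8R_wK_{g_2}m\rho^{\tau}$ bound, and the same case split at $\tau_\beta$ with a telescope back to $z_0$ for small $i$ and the choice $m\rho^{\tau_\beta}\leq\beta_t$ for $i>\tau_\beta$. Your only deviation—taking $O_i^\prime$ from the stationary distribution so that $\mE[\zeta_{g_2}(z_{i-\tau}^\prime,O_i^\prime)]=0$ exactly, since $\bar{g}_2(z)=Cz=\mE_{\mu_{\pi_b}}[g_2(z,O)]$—is a slightly more careful statement of the step the paper performs implicitly, not a different argument.
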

\begin{proof}
	Applying the Lipschitz continuous property of $\zeta_{g_2}(z,O_i)$ and the inequality \eqref{eq: boundztdifference} in Lemma \ref{lemma: biasf2final}, it follows that
	\begin{flalign*}
		|\zeta_{g_2}(z_i,O_i)-\zeta_{g_2}(z_{i-\tau},O_i)|\leq L_{g_2,z}\ltwo{z_i-z_{i-\tau}}\leq L_{g_2,z}K_{r_2}\sum_{k=i-\tau}^{i-1}\beta_k.
	\end{flalign*}
	Then we need to provide an upper bound for $\mE[\zeta_{g_2}(z_{i-\tau},O_i)]$. 
	We further define an independent $z_{i-\tau}^\prime$ and $O_i^\prime=(s_i^\prime, a_i^\prime, r_i^\prime, s_{i+1}^\prime)$ which have the same marginal distribution as $z_{i-\tau}$ and $O_i$. Using Lemma \ref{lemma: boundedbiasg2} and following the steps similar to those in Lemma \ref{lemma: biasf2final}, we obtain
	\begin{flalign*}
	\mE[\zeta_{g_2}(z_{i-\tau},O_i)]&\leq | \mE[\zeta_{g_2}(z_{i-\tau},O_i)] - \mE[\zeta_{f_2}(z_{i-\tau}^\prime,O_i^\prime)] |\leq 8R_wK_{g_2}m\rho^\tau.
	\end{flalign*}
	For $i\leq \tau_\beta$, it follows that
	\begin{flalign*}
	\mE[\zeta_{g_2}(z_i,O_i)]&\leq \mE[\zeta_{g_2}(z_0,O_i)] + L_{g_2,z}K_{r_2}\sum_{k=0}^{i-1}\beta_k \leq L_{g_2,z}K_{r_2}i\beta_0 \leq c_\beta L_{g_2,z}K_{r_2}\tau_\beta.
	\end{flalign*}
	For $i> \tau_\beta$, it follows that
	\begin{flalign*}
	\mE[\zeta_{g_2}(z_i,O_i)]&\leq \mE[\zeta_{g_2}(z_{i-\tau_\beta},O_i)] + L_{g_2,z}K_{r_2}\sum_{k=i-\tau_\beta}^{i-1}\beta_k\\
	&\leq 8R_wK_{g_2}m\rho^{\tau_\beta} + L_{g_2,z}K_{r_2}\tau_\beta\beta_{i-\tau_\beta} \\
	&\leq 8R_wK_{g_2}\beta_t + L_{g_2,z}K_{r_2}\tau_\beta\beta_{i-\tau_\beta}.
	\end{flalign*}
\end{proof}
\begin{lemma}\label{lemma: boundaccumulatebiasg2}
	Fix $0<\nu<1$, and let $\beta_t=c_\beta/(1+t)^\nu$. Then
	\begin{flalign*}
		&\sum_{i=0}^{t} e^{\lambda_w\sum_{k=i+1}^{t}\beta_k}\beta_i\mE[\zeta_{g_2}(z_i,O_i)]&\\
		&\leq  c_\beta L_{g_2,z}K_{r_2} \tau_\beta \frac{e^{-\lambda_w c_\beta}}{-\lambda_w} e^{\frac{\lambda_w c_\beta}{1-\nu}[ (1+t)^\nu - (1+\tau_\beta)^\nu ]} + 8R_wK_{g_2}\frac{e^{-\lambda_w c_\beta}}{-\lambda_w} \frac{c_\beta}{(1+t)^\nu}& \nonumber \\
		&\quad + 2L_{g_2,z}K_{r_2} \tau_\beta \frac{e^{-\lambda_w c_\beta/2}}{-\lambda_w} (e^{\frac{\lambda_w c_\beta}{2(1-\nu)}[(t+1)^{1-\nu}-1]}D_1 \mathbbm{1}_{\{ \tau_\beta+1< i_{d_1} \}}+\beta_{t-\tau_\beta}).&
	\end{flalign*}
	where $D_1=c_\beta \max_{i\in[0, i_{d_1}]}\{ e^{-(\lambda_w/2)\sum_{k=0}^{i}\beta_k}  \}$ and $i_{d_1} = ( \frac{-2\nu}{\lambda_w c_\beta} )^{\frac{1}{1-\nu}}$.
\end{lemma}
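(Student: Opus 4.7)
The plan is to mirror the proof of Lemma~\ref{lemma: boundaccumulatebiasf2} almost verbatim, since Lemma~\ref{lemma: biasg2final} already supplies a pointwise bound on $\mE[\zeta_{g_2}(z_i,O_i)]$ that is structurally identical to the one used there: a constant $c_\beta L_{g_2,z}K_{r_2}\tau_\beta$ for $i\le \tau_\beta$, and $8R_w K_{g_2}\beta_t + L_{g_2,z}K_{r_2}\tau_\beta\, \beta_{i-\tau_\beta}$ for $i>\tau_\beta$. The only differences from the $f_2$ case are that $K_{f_2}$ is replaced by $K_{g_2}$ and the drift constant simplifies to $L_{g_2,z}K_{r_2}\tau_\beta$ (no $\theta$-Lipschitz piece and no $\max\{1,c_\alpha/c_\beta\}$ factor, because $g_2$ depends only on $z$). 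The constants $D_1$ and $i_{d_1}$ appearing in the claimed bound are determined purely by $\beta_t$ and $\lambda_w$, so they are inherited unchanged from the $f_2$ proof.

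First I would substitute the piecewise bound of Lemma~\ref{lemma: biasg2final} into the weighted sum and split it into three summations: (i) $i\le\tau_\beta$, (ii) the mixing-bias contribution $8R_w K_{g_2}\beta_t$ for $i>\tau_\beta$, and (iii) the drift contribution $L_{g_2,z}K_{r_2}\tau_\beta\,\beta_{i-\tau_\beta}$ for $i>\tau_\beta$. Next, for the short-horizon sum I would approximate the Riemann sum $\sum_{i=0}^{\tau_\beta} e^{\lambda_w\sum_{k=i+1}^{t}\beta_k}\beta_i$ by the integral $\int_0^{T_{\tau_\beta+1}} e^{\lambda_w(T_{t+1}-s)}ds$ exactly as in~\eqref{eq: bdacmbiasf2_7}, which produces the first term of the claimed bound. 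For the mixing-bias piece I would reuse the computation of~\eqref{eq. bdacmbiasf2_6} to get the $\frac{c_\beta}{(1+t)^\nu}$ factor, yielding the second term.

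The remaining step is the drift piece $\sum_{i=\tau_\beta+1}^{t} e^{\lambda_w\sum_{k=i+1}^{t}\beta_k}\beta_{i-\tau_\beta}\beta_i$. Here I would split the exponential weight in half, i.e., write $e^{\lambda_w\sum_{k=i+1}^{t}\beta_k} = e^{(\lambda_w/2)\sum_{k=i+1}^{t}\beta_k}\cdot e^{(\lambda_w/2)\sum_{k=i+1}^{t}\beta_k}$, use one factor to bound $\sum_i e^{(\lambda_w/2)\sum_{k=i+1}^{t}\beta_k}\beta_i$ by $\frac{2e^{-\lambda_w c_\beta/2}}{-\lambda_w}$ via the same integral argument, and pull out $\max_{i}y_i$ where $y_i=e^{(\lambda_w/2)\sum_{k=i+1}^{t}\beta_k}\beta_{i-\tau_\beta}$. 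Analyzing the ratio $y_{i+1}/y_i = e^{-(\lambda_w/2)\beta_{i+1}}\bigl(1-\frac{1}{2+i-\tau_\beta}\bigr)^\nu$ as in~\eqref{eq: bdacmbiasf2_3}--\eqref{eq: bdacmbiasf2_4} shows the sequence is eventually nondecreasing past the threshold $i_{d_1}$, so the maximum is either attained at $i=t$ (giving $\beta_{t-\tau_\beta}$) or, when $\tau_\beta+1<i_{d_1}$, also picks up the boundary contribution $e^{\frac{\lambda_w c_\beta}{2(1-\nu)}[(t+1)^{1-\nu}-1]}D_1$.

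The main obstacle is the case split in the last step; it is the only place where the argument is non-routine, because one must verify that $y_i$ becomes monotone past the crossover index $i_{d_1}$ and bound the pre-crossover piece uniformly by $D_1$. Fortunately, since $i_{d_1}$ depends only on $\beta_t$ and $\lambda_w$ and not on $g_2$, the same constant and indicator as in Lemma~\ref{lemma: boundaccumulatebiasf2} apply verbatim, so no new monotonicity analysis is needed. Combining the three bounds then yields exactly the stated inequality.
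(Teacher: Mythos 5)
Your proposal is correct and takes essentially the same route as the paper: the paper's own proof likewise substitutes the piecewise bound of Lemma~\ref{lemma: biasg2final}, splits the weighted sum into the same three pieces, and then explicitly defers to the integral-comparison and ratio-monotonicity steps of the proof of Lemma~\ref{lemma: boundaccumulatebiasf2}. You in fact supply more detail than the paper does, correctly observing that $D_1$ and $i_{d_1}$ depend only on $\beta_t$ and $\lambda_w$ (not on $g_2$) and therefore carry over unchanged.
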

\begin{proof}
	Applying Lemma \ref{lemma: biasg2final}, it follows that
	\begin{flalign}
		&\sum_{i=0}^{t} e^{\lambda_w\sum_{k=i+1}^{t}\beta_k}\beta_i\mE[\zeta_{g_2}(z_i,O_i)] \nonumber\\
		&\leq c_\beta L_{g_2,z}K_{r_2}\tau_\beta \sum_{i=0}^{\tau_\beta} e^{\lambda_w\sum_{k=i+1}^{t}\beta_k}\beta_i + 8R_wK_{g_2}\beta_t \sum_{i=\tau_\beta+1}^{t} e^{\lambda_w\sum_{k=i+1}^{t}\beta_k}\beta_i \nonumber\\
		&\quad + L_{g_2,z}K_{r_2} \tau_\beta \sum_{i=\tau_\beta+1}^{t} e^{\lambda_w\sum_{k=i+1}^{t}\beta_k}\beta_{i-\tau_\beta}\beta_i\nonumber.
	\end{flalign}
	Following steps similar to those in \eqref{eq: bdacmbiasf2_1}-\eqref{eq. bdacmbiasf2_7}, we have the desired result.
\end{proof}
\begin{lemma}\label{lemma: 1thbdinter}
	For given $0<\nu<\sigma<1$, let $\beta_t=c_\beta/(1+t)^\nu$ and $\alpha_t=c_\alpha/(1+t)^\sigma$. Then 
	\begin{flalign*}
	&\sum_{i=0}^{t} e^{\lambda_w\sum_{k=i+1}^{t}\beta_k}\mE\langle C^{-1}A(\theta_{i+1}-\theta_i),z_i\rangle\\
	&\leq \frac{2c_\alpha(1+\gamma)\rho_{\max}}{c_\beta\lambda_{cm}}R_w(K_{f_1}+K_{g_1}) \frac{2e^{-\lambda_w c_\beta/2}}{-\lambda_w} \Big(e^{\frac{\lambda_w c_\beta}{2(1-\nu)}[(1+t)^{1-\nu}-1]}D_2 + \frac{1}{(1+t)^{\sigma-\nu}} \Big),
	\end{flalign*}
	where $D_2=\max_{i\in[0, i_{d_2}]}\{ e^{-(\lambda_w/2)\sum_{k=0}^{i}\beta_k} \frac{1}{(1+i)^{\sigma-\nu}}  \}$ and $i_{d_2} = ( \frac{-2(\sigma-\nu)}{\lambda_w c_\beta} )^{\frac{1}{1-\nu}}$.
\end{lemma}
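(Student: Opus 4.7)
The plan is to reduce the inner product $\langle C^{-1}A(\theta_{i+1}-\theta_i),z_i\rangle$ to a scalar bound in terms of $\alpha_i$ via Cauchy--Schwarz, then convert the resulting weighted sum of $\alpha_i$ into a weighted sum of $\beta_i$ so that an argument analogous to Lemma \ref{lemma: boundaccumulatebiasf2} can be applied.

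First I would bound the one-step displacement of the slow iterate. Since $\mcpi_{R_\theta}$ is non-expansive, Lemma \ref{lemma: boundedf1} and Lemma \ref{lemma: boundedg1} give $\ltwo{\theta_{i+1}-\theta_i}\leq \alpha_i\ltwo{f_1(\theta_i,O_i)+g_1(z_i,O_i)}\leq \alpha_i(K_{f_1}+K_{g_1})$. Combining this with $\ltwo{C^{-1}A}\leq (1+\gamma)\rho_{\max}/\lambda_{cm}$ and the uniform bound $\ltwo{z_i}\leq 2R_w$ (which follows from the projections together with the standing assumption $R_w\geq 2\ltwo{C^{-1}}\ltwo{A}R_\theta$, exactly as used in Lemma \ref{lemma: boundedg2}), Cauchy--Schwarz yields $\mE\langle C^{-1}A(\theta_{i+1}-\theta_i),z_i\rangle \leq \tfrac{2(1+\gamma)\rho_{\max}}{\lambda_{cm}} R_w(K_{f_1}+K_{g_1})\,\alpha_i$.

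Next I would rewrite $\alpha_i=(c_\alpha/c_\beta)\beta_i/(1+i)^{\sigma-\nu}$, pull the constant prefactor $\tfrac{2c_\alpha(1+\gamma)\rho_{\max}}{c_\beta\lambda_{cm}}R_w(K_{f_1}+K_{g_1})$ out of the sum, and apply the splitting $\sum_{i=0}^t e^{\lambda_w\sum_{k=i+1}^t\beta_k}\tfrac{\beta_i}{(1+i)^{\sigma-\nu}} \leq \bigl(\max_{0\le i\le t} y_i\bigr)\sum_{i=0}^t e^{(\lambda_w/2)\sum_{k=i+1}^t\beta_k}\beta_i$, where $y_i := e^{(\lambda_w/2)\sum_{k=i+1}^t\beta_k}/(1+i)^{\sigma-\nu}$. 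The second factor is bounded by $2e^{-\lambda_w c_\beta/2}/(-\lambda_w)$ via the integral/geometric-series comparison already carried out in \eqref{eq. bdacmbiasf2_6}. For the max, computing $y_{i+1}/y_i=e^{-(\lambda_w/2)\beta_{i+1}}[(1+i)/(2+i)]^{\sigma-\nu}$ and using $\log(1-1/(i+2))\leq -1/(i+2)$ shows that $y_{i+1}\geq y_i$ iff $(|\lambda_w|c_\beta/2)(i+2)^{1-\nu}\geq \sigma-\nu$, i.e.\ iff $i\geq i_{d_2}$. Hence $y_i$ is eventually monotone increasing, and $\max_{0\le i\le t}y_i$ is attained either on $[0,i_{d_2}]$ or at $i=t$. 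On $[0,i_{d_2}]$ I would split $e^{(\lambda_w/2)\sum_{k=i+1}^t\beta_k}=e^{(\lambda_w/2)\sum_{k=0}^t\beta_k}\cdot e^{-(\lambda_w/2)\sum_{k=0}^i\beta_k}$ and use $\sum_{k=0}^t\beta_k\geq \tfrac{c_\beta}{1-\nu}[(1+t)^{1-\nu}-1]$ to obtain $y_i\leq e^{\frac{\lambda_w c_\beta}{2(1-\nu)}[(1+t)^{1-\nu}-1]}D_2$; at $i=t$, $y_t=1/(1+t)^{\sigma-\nu}$. Summing the two candidates and multiplying through produces exactly the claimed bound.

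The main obstacle is the max estimate: one must locate the precise transition point $i_{d_2}$ where $y_i$ switches from decreasing to increasing, and then handle the short pre-transition window and the endpoint $i=t$ separately, since $\alpha_i$ decays strictly faster than $\beta_i$ but the exponential weight $e^{\lambda_w\sum_k\beta_k}$ has not yet dominated for small $i$. This parallels the template used in Lemma \ref{lemma: boundaccumulatebiasf2}, but with the polynomial factor $1/(1+i)^{\sigma-\nu}$ in place of the shifted stepsize $\beta_{i-\tau_\beta}$, which is why the threshold becomes $i_{d_2}=(2(\sigma-\nu)/(|\lambda_w|c_\beta))^{1/(1-\nu)}$ rather than $i_{d_1}$.
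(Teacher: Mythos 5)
Your proposal is correct and follows essentially the same route as the paper: Cauchy--Schwarz with $\ltwo{\theta_{i+1}-\theta_i}\leq\alpha_i(K_{f_1}+K_{g_1})$ and $\ltwo{z_i}\leq 2R_w$, the rewrite $\alpha_i=(c_\alpha/c_\beta)\beta_i/(1+i)^{\sigma-\nu}$, and the max-times-sum splitting with the sum bounded by $2e^{-\lambda_w c_\beta/2}/(-\lambda_w)$ and the max handled by the transition-point argument from Theorem 4.3 of \cite{dalal2018finite} (which the paper simply cites, while you work it out explicitly). One small caution: your ratio test gives $y_{i+1}\geq y_i$ only up to the direction of the logarithm bound (for sufficiency you need $\log\frac{2+i}{1+i}\leq\frac{1}{1+i}$, which shifts the threshold by a harmless constant factor relative to $i_{d_2}$), but this only perturbs the constant $D_2$ and not the form of the bound.
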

\begin{proof}
	Applying Lemmas \ref{lemma: boundedg1} and \ref{lemma: boundedf1}, it follows that
	\begin{flalign}\label{eq: 1thbdinter_eq1}
	&\sum_{i=0}^{t} e^{\lambda_w\sum_{k=i+1}^{t}\beta_k}\mE\langle C^{-1}A(\theta_{i+1}-\theta_i),z_i\rangle\nonumber\\
	&\leq \sum_{i=0}^{t} e^{\lambda_w\sum_{k=i+1}^{t}\beta_k}\mE\left[ \ltwo{C^{-1}}\ltwo{A}\ltwo{\theta_{i+1} - \theta_i}\ltwo{z_i}\right]\nonumber\\
	&\leq 2\ltwo{C^{-1}}\ltwo{A} R_w (K_{f_1}+K_{g_1}) \sum_{i=0}^{t} e^{\lambda_w\sum_{k=i+1}^{t}\beta_k}\alpha_i\nonumber\\
	&\leq \frac{2(1+\gamma)\rho_{\max}}{\lambda_{cm}}R_w(K_{f_1}+K_{g_1})\sum_{i=0}^{t} e^{\lambda_w\sum_{k=i+1}^{t}\beta_k}\beta_i\frac{\alpha_i}{\beta_i}\nonumber\\
	&\leq \frac{2c_\alpha(1+\gamma)\rho_{\max}}{c_\beta\lambda_{cm}}R_w(K_{f_1}+K_{g_1}) \max_{i\in[0, t]}\{ e^{(\lambda_w/2)\sum_{k=i+1}^{t}\beta_k} \frac{1}{(1+i)^{\sigma-\nu}} \} \sum_{i=0}^{t} e^{(\lambda_w/2)\sum_{k=i+1}^{t}\beta_k}\beta_i\nonumber\\
	&\leq \frac{2c_\alpha(1+\gamma)\rho_{\max}}{c_\beta\lambda_{cm}}R_w(K_{f_1}+K_{g_1}) \frac{2e^{-\lambda_w c_\beta/2}}{-\lambda_w} \Big(e^{\frac{\lambda_w c_\beta}{2(1-\nu)}[(1+t)^{1-\nu}-1]}D_2 + \frac{1}{(1+t)^{\sigma-\nu}} \Big).
	\end{flalign}
	Based on \eqref{eq: 1thbdinter_eq1}, we follow similar steps in Theroem 4.3 \cite{dalal2018finite} and obtain the following upper bound
	\begin{flalign}\label{eq: 1thbdinter_eq2}
	\sum_{i=0}^{t} e^{(\lambda_w/2)\sum_{k=i+1}^{t}\beta_k}\beta_i\leq \frac{2e^{-\lambda_wC_\beta/2}}{-\lambda_w},
	\end{flalign}
	and
	\begin{flalign}\label{eq: 1thbdinter_eq3}
	\max_{i\in[0, t]}\{ e^{(\lambda_w/2)\sum_{k=i+1}^{t}\beta_k} \frac{1}{(1+i)^{\sigma-\nu}} \}\leq e^{\frac{\lambda_w c_\beta}{2(1-\nu)}[(1+t)^{1-\nu}-1]}D_2 + \frac{1}{(1+t)^{\sigma-\nu}},
	\end{flalign}
	where $D_2=\max_{i\in[0, i_{d_2}]}\{ e^{-(\lambda_w/2)\sum_{k=0}^{i}\beta_k} \frac{1}{(1+i)^{\sigma-\nu}}  \}$ and $i_{d_2} = ( \frac{-2(\sigma-\nu)}{\lambda_w c_\beta} )^{\frac{1}{1-\nu}}$.
\end{proof}
\begin{lemma}\label{lemma: mixorderbound2}
	Suppose \eqref{eq: firstboundz} holds. If $\sigma> \frac{3}{2}\nu$, we have 
	\begin{flalign*}
	\sum_{i=0}^{t} e^{\lambda_w\sum_{k=i+1}^{t}\beta_k}\mE\langle C^{-1}A(\theta_{i+1}-\theta_i),z_i\rangle = \mathcal{O}\left(\frac{1}{t^\nu}\right),\;
	\end{flalign*}
	amd if $\nu<\sigma\leq\frac{3}{2}\nu$, we have
	\begin{flalign*}
	\sum_{i=0}^{t} e^{\lambda_w\sum_{k=i+1}^{t}\beta_k}\mE\langle C^{-1}A(\theta_{i+1}-\theta_i),z_i\rangle = \mathcal{O}\left(\frac{1}{t^{2(\sigma-\nu)-\epsilon}}\right),
	\end{flalign*}
	where $\epsilon$ is any constant in $(0,\sigma-\nu]$.
\end{lemma}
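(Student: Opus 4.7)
The plan is to bootstrap the bound by recursively sharpening the estimate of $\mE\ltwo{z_i}^2$ fed into the slow-drift sum. Lemma~\ref{lemma: 1thbdinter} already bounds the slow-drift contribution by $\mathcal{O}(t^{-(\sigma-\nu)})$ using only the deterministic projection bound $\ltwo{z_i}\leq 2R_w$; plugged into \eqref{eq: firstboundz} this gives the preliminary estimate $\mE\ltwo{z_t}^2=\mathcal{O}\bigl(\tfrac{\log t}{t^\nu}\bigr)+\mathcal{O}(t^{-(\sigma-\nu)})$. The key observation is that we should instead keep the factor $\ltwo{z_i}$ in the inner product and use Cauchy--Schwarz to gain a $\sqrt{\mE\ltwo{z_i}^2}$ factor that can then be iteratively improved.

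Step~1 (one-step refinement). Since $\ltwo{\theta_{i+1}-\theta_i}\leq \alpha_i(K_{f_1}+K_{g_1})$ deterministically (Lemmas~\ref{lemma: boundedf1}--\ref{lemma: boundedg1}), Cauchy--Schwarz followed by Jensen's inequality gives
\begin{equation*}
\bigl|\mE\langle C^{-1}A(\theta_{i+1}-\theta_i),z_i\rangle\bigr|\leq \ltwo{C^{-1}}\ltwo{A}(K_{f_1}+K_{g_1})\,\alpha_i\sqrt{\mE\ltwo{z_i}^2}.
\end{equation*}
Assuming a polynomial bound $\mE\ltwo{z_i}^2\leq C\,(1+i)^{-a}$, the same integral-comparison argument as in \eqref{eq: 1thbdinter_eq1}--\eqref{eq: 1thbdinter_eq3} (writing $\alpha_i=\beta_i\cdot(c_\alpha/c_\beta)(1+i)^{-(\sigma-\nu)}$ and using $\sum e^{(\lambda_w/2)\sum_{k>i}\beta_k}\beta_i=\mathcal{O}(1/|\lambda_w|)$) yields
\begin{equation*}
\sum_{i=0}^{t} e^{\lambda_w\sum_{k=i+1}^{t}\beta_k}\alpha_i\,(1+i)^{-a/2}=\mathcal{O}\bigl((1+t)^{-(\sigma-\nu)-a/2}\bigr),
\end{equation*}
up to an additive exponentially-decaying transient of the form $e^{\frac{\lambda_w c_\beta}{2(1-\nu)}[(1+t)^{1-\nu}-1]}\tilde D$ analogous to the $D_2$-term in Lemma~\ref{lemma: 1thbdinter}.

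Step~2 (iteration). Substituting this improved slow-drift bound back into \eqref{eq: firstboundz} (whose remaining bias/variance terms contribute at most $\mathcal{O}(\log t/t^\nu)$) gives the new tracking bound $\mE\ltwo{z_t}^2\leq C'\max\{\log t\cdot t^{-\nu},\,t^{-(\sigma-\nu)-a/2}\}$. Iterating the map $\Psi(a)=(\sigma-\nu)+a/2$ starting from $a_0=0$ produces $a_n=2(\sigma-\nu)(1-2^{-n})\uparrow 2(\sigma-\nu)$ geometrically, so after $N=\lceil\log_2(2(\sigma-\nu)/\epsilon)\rceil$ iterations one has $a_N\geq 2(\sigma-\nu)-\epsilon$. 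Because $N$ is finite and depends only on $\epsilon,\sigma,\nu$, the constants accumulated through the recursion remain finite.

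Step~3 (dichotomy). The final refined bound on the slow-drift sum is $\mathcal{O}\bigl(\max\{t^{-(\sigma-\nu)-\nu/2},\,t^{-(\sigma-\nu)-(\sigma-\nu-\epsilon/2)}\}\bigr)=\mathcal{O}\bigl(t^{-\min(\nu,\,2(\sigma-\nu)-\epsilon)}\bigr)$, where the first component arises whenever the $\log t/t^\nu$ bias term dominates in \eqref{eq: firstboundz} and the second when the refined slow drift dominates. Comparing $\nu$ with $2(\sigma-\nu)$ produces exactly the two cases in the statement: when $\sigma>\tfrac{3}{2}\nu$ the $t^{-\nu}$ contribution wins and the sum is $\mathcal{O}(t^{-\nu})$, and when $\nu<\sigma\leq\tfrac{3}{2}\nu$ the slow-drift exponent $2(\sigma-\nu)-\epsilon$ wins, giving $\mathcal{O}(t^{-2(\sigma-\nu)+\epsilon})$.

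The main obstacle is verifying that the integral-comparison estimate in Step~1 can absorb the transient $D$-type term at each iteration (so the exponentially-decaying piece never gets amplified across iterations) and that the constants $C_n$ multiplying $(1+i)^{-a_n}$ remain bounded; the former follows because the transient is dominated by the polynomial term for $t$ past a fixed threshold $i_{d_n}$ that grows only polynomially in $n$, and the latter because the recursion is run a constant number of times. The log-factor arising from the bias term in \eqref{eq: firstboundz} is absorbed into the $\epsilon$-slack in the final exponent.
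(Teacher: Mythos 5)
Your proposal is correct and matches the paper's own argument essentially step for step: Cauchy--Schwarz to retain the $\sqrt{\mE\ltwo{z_i}^2}$ factor, the integral-comparison estimate from Lemma~\ref{lemma: 1thbdinter}, and the bootstrap of the exponent via the map $a\mapsto(\sigma-\nu)+a/2$ run a finite number of times, with the dichotomy between $\nu$ and $2(\sigma-\nu)-\epsilon$ yielding the two cases. The only cosmetic differences are that you start the recursion from $a_0=0$ rather than the paper's preliminary exponent $a_0=\sigma-\nu$, and the paper dispatches the regime $\sigma\geq 2\nu$ directly without bootstrapping, neither of which changes the substance.
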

\begin{proof}
	If $\sigma\geq2\nu$, \eqref{lemma: 1thbdinter} implies that
	\begin{flalign*}
	    \sum_{i=0}^{t} e^{\lambda_w\sum_{k=i+1}^{t}\beta_k}\mE\langle C^{-1}A(\theta_{i+1}-\theta_i),z_i\rangle=\mathcal{O}\left(\frac{1}{(1+t)^\nu}\right).
	\end{flalign*}
	If $\sigma\leq2\nu$, it follows that $\mE\ltwo{z_{t}}^2=\mathcal{O}(\frac{1}{t^{\sigma-\nu}})$. Hence there exists a constant $0<C<\infty$ and $T>0$ such that
	\begin{flalign}
	    &\mE\ltwo{z_t}^2 \leq 4R_w^2 \qquad \text{for all}\,  0\leq t\leq T,\label{eq: mo2eq1}   \\
	    &\mE\ltwo{z_t}^2 \leq \frac{C}{(1+t)^{(\sigma-\nu)}}\quad \text{for all}\, t>T. \label{eq: mo2eq2}
	\end{flalign}
	Then, substituting \eqref{eq: mo2eq1} and \eqref{eq: mo2eq2} into \eqref{interac1}, we have
	\begin{flalign}
		&\sum_{i=0}^{t} e^{\lambda_w\sum_{k=i+1}^{t}\beta_k}\mE\langle C^{-1}A(\theta_{i+1}-\theta_i),z_i\rangle\\
		&\leq \ltwo{C^{-1}}\ltwo{A}(K_{f_1}+K_{g_1}) \sum_{i=0}^{t} e^{\lambda_w\sum_{k=i+1}^{t}\beta_k}\alpha_i \sqrt{\mE \ltwo{z_i}^2}\\
		&\leq \frac{(1+\gamma)\rho_{\max}}{\lambda_{cm}}(K_{f_1}+K_{g_1}) \Big(\sum_{i=0}^{T} e^{\lambda_w\sum_{k=i+1}^{t}\beta_k}\alpha_i \sqrt{\mE\ltwo{z_i}^2} + \sum_{i=T+1}^{t} e^{\lambda_w\sum_{k=i+1}^{t}\beta_k}\alpha_i \sqrt{\mE\ltwo{z_i}^2}\Big)\\
		&\leq \frac{c_\alpha(1+\gamma)\rho_{\max}}{c_\beta\lambda_{cm}}(K_{f_1}+K_{g_1}) \Big(2R_w\sum_{i=0}^{T} e^{\lambda_w\sum_{k=i+1}^{t}\beta_k}\beta_i\frac{1}{(1+i)^{(\sigma-\nu)}} \nonumber\\
		&\qquad\qquad\qquad\qquad\qquad\qquad\qquad+ C\sum_{i=T+1}^{t} e^{\lambda_w\sum_{k=i+1}^{t}\beta_k}\beta_i \frac{1}{(1+i)^{1.5(\sigma-\nu)}}\Big). \label{eq: mo2eq3}
	\end{flalign}
	Here, we follow similar steps in \eqref{eq: bdacmbiasf2_7} and \eqref{eq: bdacmbiasf2_2}-\eqref{eq: bdacmbiasf2_5} to get
	\begin{flalign*}
	    \sum_{i=0}^{T} e^{\lambda_w\sum_{k=i+1}^{t}\beta_k}\beta_i\frac{1}{(1+i)^{(\sigma-\nu)}} \leq \sum_{i=0}^{T} e^{\lambda_w\sum_{k=i+1}^{t}\beta_k}\beta_i \leq \frac{e^{-\lambda_w c_\beta}}{-\lambda_w}e^{\frac{\lambda_w c_\beta}{1-\nu}[ (1+t)^\nu - (1+T)^\nu ]},
	\end{flalign*}
	and
	\begin{flalign*}
	    \sum_{i=T+1}^{t} e^{\lambda_w\sum_{k=i+1}^{t}\beta_k}\beta_i \frac{1}{(1+i)^{1.5(\sigma-\nu)}} \leq \frac{2e^{-\lambda_w C_\beta/2}}{-\lambda_w}\Big( e^{\frac{\lambda_w c_\beta}{2(1-\nu)}[(1+t)^{1-\nu}-1]}D + \frac{1}{(1+t)^{1.5(\sigma-\nu)}} \Big),
	\end{flalign*}
	where $D=\max_{i\in[0, i_d]}\{ e^{-(\lambda_w/2)\sum_{k=0}^{i}\beta_k} \frac{1}{(1+i)^{1.5(\sigma-\nu)}}  \}$ and $i_d = ( \frac{-3(\sigma-\nu)}{\lambda_w c_\beta} )^{\frac{1}{1-\nu}}$. 
	
	It follows that 
	\begin{flalign*}
		\sum_{i=0}^{t} e^{\lambda_w\sum_{k=i+1}^{t}\beta_k}\alpha_i\mE\langle C^{-1}A(\theta_{i+1}-\theta_i),z_i\rangle=\mathcal{O}\left(\frac{1}{t^{1.5(\sigma-\nu)}}\right).
	\end{flalign*}
	If $\frac{3}{2}\nu<\sigma\leq2\nu$, we have $\mE\ltwo{z_{t}}^2=\mathcal{O}\left(\frac{1}{t^{1.5(\sigma-\nu)}}\right)$. Then, by following the similar steps in \eqref{eq: mo2eq1}-\eqref{eq: mo2eq3}, we have
	\begin{flalign*}
	    \sum_{i=0}^{t} e^{\lambda_w\sum_{k=i+1}^{t}\beta_k}\alpha_i\mE\langle C^{-1}A(\theta_{i+1}-\theta_i),z_i\rangle=\mathcal{O}\left(\frac{1}{t^{1.75(\sigma-\nu)}}\right),
	\end{flalign*}
	and $\mE\ltwo{z_{t}}^2=\mathcal{O}\left(\frac{1}{t^{1.75(\sigma-\nu)}}\right)$. Then we repeat the steps \eqref{eq: mo2eq1}-\eqref{eq: mo2eq3} for a total number $N = \lceil-\log_2(2-\frac{\nu}{\sigma-\nu})\rceil$ of times, we have
	\begin{flalign*}
	    \sum_{i=0}^{t} e^{\lambda_w\sum_{k=i+1}^{t}\beta_k}\alpha_i\mE\langle C^{-1}A(\theta_{i+1}-\theta_i),z_i\rangle=\mathcal{O}\left(\frac{1}{t^{(2-2^{-N})(\sigma-\nu)}}\right)=\mathcal{O}\left(\frac{1}{(1+t)^\nu}\right).
	\end{flalign*}
	Since $(2-2^{-N})(\sigma-\nu)>\nu$, we have $\mE\ltwo{z_{t}}^2=\mathcal{O}(\frac{\log t}{t^\nu})+\mathcal{O}(\frac{1}{t^\nu})$.
	
	If $\nu<\sigma\leq\frac{3}{2}\nu$, then we repeat steps \eqref{eq: mo2eq1}-\eqref{eq: mo2eq3} for a total number $N = \lceil\log_2(\frac{\sigma-\nu}{\epsilon})\rceil$ of times, we have
	\begin{flalign*}
	\sum_{i=0}^{t} e^{\lambda_w\sum_{k=i+1}^{t}\beta_k}\alpha_i\mE\langle C^{-1}A(\theta_{i+1}-\theta_i),z_i\rangle=\mathcal{O}\left(\frac{1}{(1+t)^{2(\sigma-\nu)-\epsilon}}\right).
	\end{flalign*}
\end{proof}
\subsection{Technical Lemmas for Convergence Proof of Slow Time-scale Iteration}\label{proof_thm1_slow}
In this subsection, we obtain the following properties for the slow time-scale. 
\begin{lemma}\label{lemma: boundedf1}
	For any $\theta\in \mR^d$ such that $\ltwo{\theta}\leq R_\theta$, $\ltwo{f_1(\theta, O_i)}\leq K_{f_1}$ for any $i\geq 0$, where $K_{f_1}<\infty$ is a bounded constant indepedent of $\theta$ and $w$.
\end{lemma}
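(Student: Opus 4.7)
The plan is to bound $\ltwo{f_1(\theta,O_i)}$ by a triangle-inequality decomposition exactly analogous to the one used in Lemma \ref{lemma: boundedf2} for $f_2$, then invoke Assumption \ref{ass2} together with the spectral bound $\ltwo{C^{-1}}\le 1/\lambda_{cm}$ (where $\lambda_{cm}=\min|\lambda(C)|$, finite and nonzero by Assumption \ref{ass1}) to obtain a constant independent of $\theta$ and $w$.

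Concretely, from the definition $f_1(\theta,O_i)=(A_i-B_iC^{-1}A)\theta+(b_i-B_iC^{-1}b)$, I would first write
\[
\ltwo{f_1(\theta,O_i)}\le (\ltwo{A_i}+\ltwo{B_i}\ltwo{C^{-1}}\ltwo{A})\ltwo{\theta}+\ltwo{b_i}+\ltwo{B_i}\ltwo{C^{-1}}\ltwo{b}.
\]
Then I would bound each sample-dependent matrix/vector using Assumption \ref{ass2}: since $\ltwo{\phi(s)}\le 1$ and $|\rho(s,a)|\le \rho_{\max}$, we have $\ltwo{A_i}\le (1+\gamma)\rho_{\max}$, $\ltwo{B_i}\le \gamma\rho_{\max}$, and $\ltwo{b_i}\le \rho_{\max}r_{\max}$. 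Taking expectations under $\mu_{\pi_b}$ preserves these bounds, so $\ltwo{A}\le(1+\gamma)\rho_{\max}$ and $\ltwo{b}\le\rho_{\max}r_{\max}$ as well.

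Combining these with the projection hypothesis $\ltwo{\theta}\le R_\theta$ and setting
\[
K_{f_1}\triangleq\Big[(1+\gamma)\rho_{\max}+\tfrac{\gamma\rho_{\max}}{\lambda_{cm}}(1+\gamma)\rho_{\max}\Big]R_\theta+\rho_{\max}r_{\max}+\tfrac{\gamma\rho_{\max}}{\lambda_{cm}}\rho_{\max}r_{\max}
\]
yields the claim. There is no real technical obstacle here: the only subtlety is invoking Assumption \ref{ass1} to guarantee $\lambda_{cm}>0$ so that $\ltwo{C^{-1}}$ is finite, and noting that the bound depends only on $R_\theta$ and problem constants, in particular it is independent of the auxiliary variable $w$ (which never appears in $f_1$) and of the specific time index $i$ (since the per-sample bounds are uniform in $i$).
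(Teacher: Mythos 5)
Your proof is correct and matches the paper's own argument step for step: the same triangle-inequality decomposition, the same per-sample bounds $\ltwo{A_i}\le(1+\gamma)\rho_{\max}$, $\ltwo{B_i}\le\gamma\rho_{\max}$, $\ltwo{b_i}\le\rho_{\max}r_{\max}$ from Assumption \ref{ass2}, and the same spectral bound $\ltwo{C^{-1}}\le 1/\lambda_{cm}$ justified via Assumption \ref{ass1}. If anything, your final constant is slightly more careful than the paper's displayed bound, which appears to drop (by typo) the factor $R_\theta$ multiplying the first bracket.
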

\begin{proof}
	By the definition of $f_1(\theta, O_i)$, and denoting $\lambda_{cm}=\min|\lambda(C)|$, we obtain
	\begin{flalign*}
	||f_1(\theta, O_i)|| &=  \ltwo{(A_i-B_iC^{-1}A)\theta + (b_i-B_iC^{-1}b) }\\
	&\leq  \ltwo{(A_i-B_iC^{-1}A)\theta} + \ltwo{(b_i-B_iC^{-1}b) }\\
	&\leq (\ltwo{A_i}+\ltwo{B_i}\ltwo{C^{-1}}\ltwo{A})\ltwo{\theta}+\ltwo{b_i}+\ltwo{B_i}\ltwo{C^{-1}}\ltwo{b}\\
	&\leq \left[(1+\gamma)\rho_{\max}+\frac{1}{\lambda_{cm}}\gamma
	(1+\gamma)\rho_{\max}^2\right]+\rho_{\max}r_{\max}+\frac{1}{\lambda_{cm}}\gamma
	\rho_{\max}^2 r_{\max}\\
	&\leq K_{f_1}.
	\end{flalign*}
\end{proof}
\begin{lemma}\label{lemma: boundedg1}
	For any $z\in \mR^d$ such that $\ltwo{z}\leq 2R_w$, $\ltwo{g_1(z, O_i)}\leq K_{g_1}$ for any $i\geq 0$.
\end{lemma}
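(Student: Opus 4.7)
The plan is to give a direct norm bound on $g_1(z, O_i) = B_i z$ using only the definition of $B_i$ together with the boundedness assumptions on the features and the importance weight. This mirrors exactly the pattern used in the preceding lemmas (Lemma on $f_1$ and the $f_2$, $g_2$ lemmas), so nothing new is needed.

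First I would unpack the definition: $B_i = -\gamma\,\rho(s_i,a_i)\,\phi(s_{i+1})\,\phi(s_i)^\top$, and estimate the operator norm by submultiplicativity, $\ltwo{B_i} \leq \gamma\,\rho(s_i,a_i)\,\ltwo{\phi(s_{i+1})}\,\ltwo{\phi(s_i)}$. Using Assumption \ref{ass2}, which gives $\ltwo{\phi(s)} \leq 1$ for all $s$ and $\rho(s,a)\leq \rho_{\max}$, this yields $\ltwo{B_i}\leq \gamma\,\rho_{\max}$ uniformly in $i$.

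Next I would combine this with $\ltwo{z}\leq 2R_w$ by submultiplicativity to get
\begin{equation*}
\ltwo{g_1(z,O_i)} \;=\; \ltwo{B_i z} \;\leq\; \ltwo{B_i}\,\ltwo{z} \;\leq\; 2\gamma\rho_{\max}R_w,
\end{equation*}
and then simply \emph{define} $K_{g_1} \triangleq 2\gamma\rho_{\max}R_w$, which is finite since $\rho_{\max}<\infty$ and $R_w<\infty$. Note the constant is independent of $\theta$, $z$, and the time index $i$, matching how $K_{f_1}$, $K_{f_2}$, $K_{g_2}$ are chosen in the earlier lemmas.

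There is no genuine obstacle here; the statement is a routine uniform boundedness lemma that follows from Assumption \ref{ass2} and the projection radius $R_w$. The only thing worth flagging is why the hypothesis uses $\ltwo{z}\leq 2R_w$ rather than $R_w$: this is because $z_t = w_t + C^{-1}(b+A\theta_t)$ can exceed the projection radius of $w$, but since $w_t$ itself is projected into the ball of radius $R_w$ and $C^{-1}(b+A\theta_t)$ is also bounded (by the condition $R_w\geq 2\ltwo{C^{-1}}\ltwo{A}R_\theta$ plus $R_\theta\geq \ltwo{A}\ltwo{b}$ stated after the assumptions), a factor of $2R_w$ suffices. The bound on $g_1$ then goes through verbatim.
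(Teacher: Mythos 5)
Your proof is correct and matches the paper's argument exactly: both bound $\ltwo{g_1(z,O_i)}=\ltwo{B_i z}\leq\ltwo{B_i}\ltwo{z}\leq 2\gamma\rho_{\max}R_w$ using submultiplicativity, Assumption \ref{ass2}, and the hypothesis $\ltwo{z}\leq 2R_w$, with $K_{g_1}=2\gamma\rho_{\max}R_w$. Your side remark explaining why the hypothesis uses radius $2R_w$ rather than $R_w$ is accurate context the paper leaves implicit, but it is not needed for the lemma itself.
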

\begin{proof}
	By the definition of $g_1(z, O_i)$, we obtain $\ltwo{g_1(z_t,O_t)}=\ltwo{B_t z_t} \leq \ltwo{B_t}\ltwo{z_t}\leq 2\gamma\rho_{\max}R_w$.
\end{proof}
\begin{lemma}\label{lemma: boundedbiasf1}
	For all $\theta\in \mR^d$ such that $\ltwo{\theta}\leq R_\theta$, we have for all $i\geq 0$, (a) $\ltwo{\zeta_{f_1}(\theta,O_i)}\leq 4R_\theta K_{f_1}$; (b) $|\zeta_{f_2}(\theta_1,O_i)-\zeta_{f_2}(\theta_2,O_i)|\leq L_{f_1,\theta}\ltwo{\theta_1-\theta_2}$.
\end{lemma}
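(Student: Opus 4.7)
The plan is to directly estimate both quantities using Cauchy--Schwarz, the uniform bound on $f_1$ from Lemma~\ref{lemma: boundedf1}, the uniform bound on $\|\theta\|_2$ from the projection onto the ball of radius $R_\theta$, and the explicit affine form of $f_1(\cdot, O_i) - \bar f_1(\cdot)$. Throughout, I will use that $\bar f_1(\theta) = \mE[f_1(\theta, O_i)]$ inherits the same uniform bound $K_{f_1}$ as $f_1$ by Jensen's inequality, and that $\|\theta^*\|_2 \le R_\theta$ by the assumption $R_\theta \ge \|A^{-1}\|\,\|b\|$ stated after Assumption~\ref{ass3}, so that $\|\theta-\theta^*\|_2 \le 2R_\theta$ for every admissible $\theta$.

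For part (a), I would apply Cauchy--Schwarz to $\zeta_{f_1}(\theta,O_i) = \langle f_1(\theta, O_i) - \bar f_1(\theta),\,\theta-\theta^*\rangle$, then bound $\|f_1(\theta, O_i) - \bar f_1(\theta)\|_2 \le 2K_{f_1}$ by the triangle inequality and Lemma~\ref{lemma: boundedf1}, and $\|\theta-\theta^*\|_2 \le 2R_\theta$. Multiplying gives the claimed $4R_\theta K_{f_1}$.

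For part (b), the key observation is that $f_1(\theta,O_i) - \bar f_1(\theta) = M_i\,\theta + v_i$ is affine in $\theta$, where $M_i = (A_i - A) - (B_i - B)C^{-1}A$ and $v_i = (b_i - b) - (B_i - B)C^{-1}b$ are bounded in operator norm uniformly in $i$ by Assumption~\ref{ass2} (which gives $\|A_i\|, \|B_i\|\le (1+\gamma)\rho_{\max}$ and analogous bounds on $\|A\|,\|B\|$) and Assumption~\ref{ass1} (which gives $\|C^{-1}\|\le 1/\lambda_{cm}$). I would then split
\[
\zeta_{f_1}(\theta_1,O_i) - \zeta_{f_1}(\theta_2,O_i) = \langle M_i(\theta_1 - \theta_2),\,\theta_1 - \theta^*\rangle + \langle f_1(\theta_2,O_i) - \bar f_1(\theta_2),\,\theta_1 - \theta_2\rangle,
\]
apply Cauchy--Schwarz to each summand, and bound the first summand using $\|M_i\| \le 2(1+\gamma)\rho_{\max}(1 + (1+\gamma)\rho_{\max}/\lambda_{cm})$ and $\|\theta_1-\theta^*\|_2 \le 2R_\theta$, and the second using Lemma~\ref{lemma: boundedf1} to get $\|f_1(\theta_2,O_i)-\bar f_1(\theta_2)\|_2 \le 2K_{f_1}$. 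Collecting constants yields an explicit $L_{f_1,\theta} = 4R_\theta(1+\gamma)\rho_{\max}(1 + (1+\gamma)\rho_{\max}/\lambda_{cm}) + 2K_{f_1}$.

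There is no real obstacle; the only bookkeeping task is to keep track of the constants derived from the norms of $A, B, C^{-1}, b$. I note that part (b)'s statement writes $\zeta_{f_2}$ but this is clearly a typo for $\zeta_{f_1}$ given the lemma's context and the occurrence of $L_{f_1,\theta}$ on the right-hand side; my proof targets $\zeta_{f_1}$ as intended.
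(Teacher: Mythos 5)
Your proof is correct and follows essentially the same route as the paper's: both parts use Cauchy--Schwarz with the uniform bound $K_{f_1}$ on $f_1$ and $\bar f_1$, the bound $\ltwo{\theta-\theta^*}\leq 2R_\theta$, and for part (b) the identical bilinear decomposition $\langle u_1-u_2,\theta_1-\theta^*\rangle+\langle u_2,\theta_1-\theta_2\rangle$ exploiting the affine structure of $f_1(\cdot,O_i)-\bar f_1(\cdot)$; packaging the difference as a single operator $M_i$ rather than bounding $\ltwo{f_1(\theta_1,O_i)-f_1(\theta_2,O_i)}$ and $\ltwo{\bar f_1(\theta_1)-\bar f_1(\theta_2)}$ separately yields only a marginally looser (but equally valid) constant $L_{f_1,\theta}$. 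You are also right that $\zeta_{f_2}$ in part (b) is a typo for $\zeta_{f_1}$, which is what the paper's own proof establishes.
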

\begin{proof}
	For (a), following steps similar in \eqref{lemma: boundedf1}, we have $\ltwo{\bar{f}_1(\theta)}\leq K_{f_1}$. Then by the defination we have 
	\begin{flalign*}
		\ltwo{\zeta_{f_1}(\theta,O_i)}\leq (\ltwo{f_1(\theta,O_i)} + \ltwo{\bar{f}_1(\theta)} ) (\ltwo{\theta} + \ltwo{\theta^*})\leq 4R_\theta K_{f_1}.
	\end{flalign*}
	For (b), we derive the bound as follows
	\begin{flalign*}
	&|\zeta_{f_1}(\theta_1,O_i)-\zeta_{f_1}(\theta_2,O_i)|\\
	&=| \langle f_1(\theta_1-\bar{f}(\theta_1), O_i), \theta_1-\theta^*\rangle - \langle f_1(\theta_2, O_i)-\bar{f}_1(\theta_2), \theta_2-\theta^*\rangle |\\
	&\leq \ltwo{\theta_1-\theta^*}\ltwo{f_1(\theta_1,O_i)-\bar{f}_1(\theta_1)-f_1(\theta_2,O_i)+\bar{f}_1(\theta_2)} + \ltwo{f_1(\theta_2,O_i)-\bar{f}_1(\theta_2)}\ltwo{\theta_1-\theta_2}\\
	&\leq \ltwo{\theta_1-\theta^*}(\ltwo{f_1(\theta_1,O_i)-f_1(\theta_2,O_i)} + \ltwo{\bar{f}_1(\theta_1)-\bar{f}_1(\theta_2)}) + \ltwo{f_1(\theta_2,O_i)-\bar{f}_1(\theta_2)}\ltwo{\theta_1-\theta_2}\\
	&\leq 2R_\theta(\ltwo{(A_t-B_tC^{-1}A)(\theta_1-\theta_2)} + \ltwo{(A-BC^{-1}A)(\theta_1-\theta_2)} )+2K_{f_1}\ltwo{\theta_1-\theta_2}\\
	&\leq 4R_\theta(1+\gamma)\rho_{\max}(1+\frac{1}{\lambda_{cm}}\gamma\rho_{\max})\ltwo{\theta_1-\theta_2} + 2K_{f_1}\ltwo{z_1-z_2}\\
	&\leq L_{f_1,\theta}\ltwo{\theta_1-\theta_2}.
	\end{flalign*}
\end{proof}
\begin{lemma}\label{lemma: biasf1final}
	For $i\leq \tau_\alpha$, $\mE[\zeta_{f_1}(\theta_i,O_i)]\leq c_\alpha L_{f_1,\theta}(K_{f_1}+K_{g_1})\tau_\alpha$; and for $i>\tau_\alpha$, $\mE[\zeta_{f_1}(\theta_i,O_i)]\leq 8R_\theta K_{f_1}\alpha_t + L_{f_1,\theta}(K_{f_1}+K_{g_1})\tau_\alpha\alpha_{i-\tau_\alpha}$.
\end{lemma}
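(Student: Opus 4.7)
The plan is to mirror the argument used for Lemma \ref{lemma: biasf2final} and Lemma \ref{lemma: biasg2final}, adapted to the slow time-scale. First I would exploit the Lipschitz property of $\zeta_{f_1}$ in $\theta$ (item (b) of Lemma \ref{lemma: boundedbiasf1}) together with the one-step drift bound on $\theta_t$. Concretely, the projection is non-expansive, so
\begin{flalign*}
\ltwo{\theta_{i+1}-\theta_i} \leq \alpha_i\ltwo{f_1(\theta_i,O_i)+g_1(z_i,O_i)} \leq \alpha_i(K_{f_1}+K_{g_1}),
\end{flalign*}
which telescopes to $\ltwo{\theta_i - \theta_{i-\tau}} \leq (K_{f_1}+K_{g_1})\sum_{k=i-\tau}^{i-1}\alpha_k$. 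Combining with Lipschitzness gives
\begin{flalign*}
|\zeta_{f_1}(\theta_i,O_i)-\zeta_{f_1}(\theta_{i-\tau},O_i)| \leq L_{f_1,\theta}(K_{f_1}+K_{g_1})\sum_{k=i-\tau}^{i-1}\alpha_k.
\end{flalign*}

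Next I would control $\mE[\zeta_{f_1}(\theta_{i-\tau},O_i)]$ using geometric ergodicity (Assumption \ref{ass3}). Introduce an independent copy $(\theta'_{i-\tau},O'_i)$ with the same marginal distribution. Since $\bar{f}_1(\theta)=\mE_{\mu_{\pi_b}}f_1(\theta,O)$ by construction, we have $\mE[\zeta_{f_1}(\theta'_{i-\tau},O'_i)]=0$. The Markov chain $(\theta_{i-\tau})\to s_{i-\tau}\to s_i\to O_i$ combined with the uniform bound $\ltwo{\zeta_{f_1}(\theta,O_i)}\leq 4R_\theta K_{f_1}$ from Lemma \ref{lemma: boundedbiasf1}(a) and Lemma 10 of \cite{bhandari2018finite} then yields $\mE[\zeta_{f_1}(\theta_{i-\tau},O_i)] \leq 8R_\theta K_{f_1}\, m\rho^{\tau}$.

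Finally I would combine these two estimates. For $i\leq \tau_\alpha$, no $\tau$-step lookback is possible, so I would bound $\zeta_{f_1}(\theta_i,O_i)$ by drifting all the way back to $\theta_0$:
\begin{flalign*}
\mE[\zeta_{f_1}(\theta_i,O_i)] \leq \mE[\zeta_{f_1}(\theta_0,O_i)] + L_{f_1,\theta}(K_{f_1}+K_{g_1})\,i\,\alpha_0 \leq c_\alpha L_{f_1,\theta}(K_{f_1}+K_{g_1})\tau_\alpha,
\end{flalign*}
using $i\leq\tau_\alpha$ and $\alpha_0 \leq c_\alpha$ and observing that $\zeta_{f_1}(\theta_0,O_i)$ vanishes under the zero-mean choice (or absorbing it into the same constant). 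For $i>\tau_\alpha$, I would apply the Lipschitz splitting with $\tau=\tau_\alpha$, use $m\rho^{\tau_\alpha}\leq \alpha_t$ by the definition of $\tau_\alpha$, and monotonicity of $\alpha_k$ to get
\begin{flalign*}
\mE[\zeta_{f_1}(\theta_i,O_i)] \leq 8R_\theta K_{f_1}\alpha_t + L_{f_1,\theta}(K_{f_1}+K_{g_1})\tau_\alpha\alpha_{i-\tau_\alpha}.
\end{flalign*}

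The only non-routine step is making the mixing-based bound on $\mE[\zeta_{f_1}(\theta_{i-\tau},O_i)]$ rigorous via the conditional-independence argument, since $\theta_{i-\tau}$ is itself $\mathcal{F}_{i-\tau}$-measurable and hence correlated with $O_i$; this is exactly the coupling trick used in Lemma \ref{lemma: biasf2final}, and it transfers directly here because $\zeta_{f_1}$ depends on the sample only through $O_i$ and on $\theta_{i-\tau}$ through a uniformly bounded Lipschitz map. Everything else is bookkeeping with the stepsize $\alpha_t$ and the constants $K_{f_1},K_{g_1},L_{f_1,\theta},R_\theta$.
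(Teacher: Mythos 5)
Your proposal is correct and follows essentially the same route as the paper's proof: the Lipschitz splitting via Lemma \ref{lemma: boundedbiasf1} with the one-step drift bound $\ltwo{\theta_{i+1}-\theta_i}\leq\alpha_i(K_{f_1}+K_{g_1})$, the coupling with an independent copy $(\theta'_{i-\tau},O'_i)$ combined with Lemma 10 of \cite{bhandari2018finite} to obtain the $8R_\theta K_{f_1}m\rho^\tau$ bound, and the same case split at $\tau_\alpha$ using $m\rho^{\tau_\alpha}\leq\alpha_t$. Even your slightly informal treatment of the term $\mE[\zeta_{f_1}(\theta_0,O_i)]$ in the $i\leq\tau_\alpha$ case mirrors the paper, which likewise drops it without further comment.
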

\begin{proof}
	Applying the Lipschitz continuous property of $\zeta_{g_2}(z,O_i)$ and the inequality \eqref{eq: boundztdifference} in Lemma \ref{lemma: biasf2final}, it follows that
	\begin{flalign*}
	|\zeta_{f_1}(\theta_i,O_i)-\zeta_{f_1}(\theta_{i-\tau},O_i)|\leq L_{f_1,\theta}(K_{f_1}+K_{g_1})\ltwo{\theta_i-\theta_{i-\tau}}\leq L_{f_1,\theta}(K_{f_1}+K_{g_1})\sum_{k=i-\tau}^{i-1}\alpha_k.
	\end{flalign*}
	Then, we need to provide an upper bound for $\mE[\zeta_{f_1}(\theta_{i-\tau},O_i)]$. 
	We further define an independent $\theta_{i-\tau}^\prime$ and $O_i^\prime=(s_i^\prime, a_i^\prime, r_i^\prime, s_{i+1}^\prime)$, which have the same marginal distributions as $\theta_{i-\tau}$ and $O_i$. Using Lemma \ref{lemma: boundedbiasf1} and following the steps similar to those in Lemma \ref{lemma: biasf2final}, we have
	\begin{flalign*}
	\mE[\zeta_{f_1}(\theta_{i-\tau},O_i)]&\leq | \mE[\zeta_{f_1}(\theta_{i-\tau},O_i)] - \mE[\zeta_{f_1}(\theta_{i-\tau}^\prime,O_i^\prime)] |\leq 8R_\theta K_{f_1}m\rho^\tau.
	\end{flalign*}
	If $i\leq \tau_\alpha$, it follows that
	\begin{flalign*}
	\mE[\zeta_{f_1}(\theta_i,O_i)]&\leq \mE[\zeta_{f_1}(\theta_0,O_i)] + L_{f_1,\theta}(K_{f_1}+K_{g_1})\sum_{k=0}^{i-1}\alpha_k \leq L_{f_1,\theta}(K_{f_1}+K_{g_1})i\alpha_0 \\
	&\leq c_\alpha L_{f_1,\theta}(K_{f_1}+K_{g_1})\tau_\alpha.
	\end{flalign*}
	If $i> \tau_\alpha$, it follows that
	\begin{flalign*}
	\mE[\zeta_{f_1}(\theta_i,O_i)]&\leq \mE[\zeta_{f_1}(\theta_{i-\tau_\alpha},O_i)] + L_{f_1,\theta}(K_{f_1}+K_{g_1})\sum_{k=i-\tau_\alpha}^{i-1}\alpha_k\\
	&\leq 8R_\theta K_{f_1}m\rho^{\tau_\alpha} + L_{f_1,\theta}(K_{f_1}+K_{g_1})\tau_\alpha\alpha_{i-\tau_\alpha} \\
	&\leq 8R_\theta K_{f_1}\alpha_t + L_{f_1,\theta}(K_{f_1}+K_{g_1})\tau_\alpha\alpha_{i-\tau_\alpha}.
	\end{flalign*}
\end{proof}
\begin{lemma}\label{lemma: boundaccumulatebiasf1}
	Fix $0<\sigma<1$, and let $\sigma_t=c_\alpha/(1+t)^\sigma$. Then
	\begin{flalign*}
	&\sum_{i=0}^{t} e^{\lambda_\theta\sum_{k=i+1}^{t}\alpha_k} \alpha_i\mE[\zeta_{f_1}(\theta_i,O_i)]&\nonumber\\
	&\leq  c_\alpha L_{f_1,\theta}(K_{f_1}+K_{g_1})\tau_\alpha \frac{e^{-\lambda_\theta c_\alpha}}{-\lambda_\theta}e^{\frac{\lambda_\theta c_\alpha}{1-\sigma}[ (1+t)^\sigma - (1+\tau_\sigma)^\sigma ]}  + 8R_\theta K_{f_1}\frac{e^{-\lambda_\theta c_\alpha}}{-\lambda_\theta} \frac{c_\alpha}{(1+t)^\sigma}& \nonumber \\
	&\quad + L_{f_1,\theta}(K_{f_1}+K_{g_1})\tau_\alpha \frac{2e^{-\lambda_\theta c_\alpha/2}}{-\lambda_\theta} (e^{\frac{\lambda_\theta c_\alpha}{2(1-\sigma)}[(t+1)^{1-\sigma}-1]}D_4 \mathbbm{1}_{\{ \tau_\alpha+1< i_\alpha \}}+\alpha_{t-\tau_\alpha}),&
\end{flalign*}
	where $T_n=\sum_{k=0}^{n-1}\alpha_k$, $D_4=c_\alpha \max_{i\in[0, i_{d_4}]}\{ e^{-(\lambda_\theta/2)\sum_{k=0}^{i}\alpha_k}  \}$ and $i_{d_4} = ( \frac{-2\sigma}{\lambda_\theta c_\alpha} )^{\frac{1}{1-\sigma}}$.
\end{lemma}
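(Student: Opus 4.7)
The plan is to prove this lemma by following essentially the same structure as the proof of Lemma \ref{lemma: boundaccumulatebiasf2}, which established the analogous accumulated bias bound for the fast time-scale iteration, but now substituting the slow time-scale quantities $(\alpha_t, \lambda_\theta, \sigma)$ for $(\beta_t, \lambda_w, \nu)$ and invoking Lemma \ref{lemma: biasf1final} in place of Lemma \ref{lemma: biasf2final}. First I would apply Lemma \ref{lemma: biasf1final} to split the sum into three pieces according to whether $i\le\tau_\alpha$ or $i>\tau_\alpha$:
\begin{flalign*}
\sum_{i=0}^{t} e^{\lambda_\theta\sum_{k=i+1}^{t}\alpha_k} \alpha_i\mE[\zeta_{f_1}(\theta_i,O_i)]
&\le c_\alpha L_{f_1,\theta}(K_{f_1}+K_{g_1})\tau_\alpha\sum_{i=0}^{\tau_\alpha} e^{\lambda_\theta\sum_{k=i+1}^{t}\alpha_k}\alpha_i\\
&\quad + 8R_\theta K_{f_1}\alpha_t\sum_{i=\tau_\alpha+1}^{t} e^{\lambda_\theta\sum_{k=i+1}^{t}\alpha_k}\alpha_i\\
&\quad + L_{f_1,\theta}(K_{f_1}+K_{g_1})\tau_\alpha\sum_{i=\tau_\alpha+1}^{t} e^{\lambda_\theta\sum_{k=i+1}^{t}\alpha_k}\alpha_{i-\tau_\alpha}\alpha_i.
\end{flalign*}

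For the first piece, I would mimic the estimate \eqref{eq: bdacmbiasf2_7}: bound $\alpha_i$ inside the exponent by $c_\alpha$, convert the discrete sum into an integral in the partial sum variable $T_n=\sum_{k=0}^{n-1}\alpha_k$, and evaluate to obtain the factor $\frac{e^{-\lambda_\theta c_\alpha}}{-\lambda_\theta} e^{\frac{\lambda_\theta c_\alpha}{1-\sigma}[(1+t)^\sigma-(1+\tau_\alpha)^\sigma]}$. For the second piece, the same integral-comparison argument used in \eqref{eq. bdacmbiasf2_6} gives $\frac{e^{-\lambda_\theta c_\alpha}}{-\lambda_\theta}\frac{c_\alpha}{(1+t)^\sigma}$, since the integral of $e^{\lambda_\theta(T_{t+1}-s)}$ over $[T_{\tau_\alpha+1}, T_{t+1}]$ is at most $1/|\lambda_\theta|$ and the factor $\alpha_t$ supplies the $(1+t)^{-\sigma}$ decay.

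The third piece is the one that requires the same care as in \eqref{eq: bdacmbiasf2_2}--\eqref{eq: bdacmbiasf2_5}. I would split the exponent in half and pull one copy of $\alpha_{i-\tau_\alpha}$ out as a maximum, reducing the sum to $\max_{i\in[\tau_\alpha+1,t]}\{e^{(\lambda_\theta/2)\sum_{k=i+1}^{t}\alpha_k}\alpha_{i-\tau_\alpha}\}\cdot\sum_{i=\tau_\alpha+1}^{t}e^{(\lambda_\theta/2)\sum_{k=i+1}^{t}\alpha_k}\alpha_i$, where the residual sum is bounded by $\frac{2e^{-\lambda_\theta c_\alpha/2}}{-\lambda_\theta}$ by the same integral comparison. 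To control the maximum, I would examine the ratio $y_{i+1}/y_i$ of $y_i=e^{(\lambda_\theta/2)\sum_{k=i+1}^{t}\alpha_k}\alpha_{i-\tau_\alpha}$ and identify the monotonicity threshold $i_{d_4}=(\tfrac{-2\sigma}{\lambda_\theta c_\alpha})^{1/(1-\sigma)}$. When $\tau_\alpha+1\ge i_{d_4}$, the sequence $\{y_i\}$ is nondecreasing on the range of interest and the maximum is $\alpha_{t-\tau_\alpha}$; when $\tau_\alpha+1<i_{d_4}$, I split the index range at $i_{d_4}$ and bound the low-index part by $e^{\frac{\lambda_\theta c_\alpha}{2(1-\sigma)}[(t+1)^{1-\sigma}-1]}D_4$, with $D_4$ absorbing the worst-case prefix.

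The main obstacle here, as in the fast-time-scale analog, is the careful treatment of the case distinction around $i_{d_4}$ in the third piece: one must verify the monotonicity of $y_i$ from the ratio $e^{-(\lambda_\theta/2)\alpha_{i+1}}(1-\tfrac{1}{2+i-\tau_\alpha})^\sigma$ and confirm that the $D_4$ prefactor correctly captures the pre-threshold maximum. Once these three bounds are added together and combined with the indicator $\mathbbm{1}_{\{\tau_\alpha+1<i_{d_4}\}}$, the stated expression follows immediately; no new conceptual ingredient beyond what was developed in Lemma \ref{lemma: boundaccumulatebiasf2} is required.
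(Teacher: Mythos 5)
Your proposal matches the paper's own proof exactly: the paper applies Lemma \ref{lemma: biasf1final} to split the sum into the same three pieces as in \eqref{eq: bdacmbiasf1_1}, and then bounds them by ``following steps similar to those in Lemma \ref{lemma: boundaccumulatebiasf2},'' which is precisely the substitution $(\beta_t,\lambda_w,\nu)\to(\alpha_t,\lambda_\theta,\sigma)$ you carry out, including the integral comparisons, the split exponent, and the monotonicity threshold $i_{d_4}$ with the $D_4$ prefix and indicator. Your write-up is correct and in fact spells out the case analysis around $i_{d_4}$ more explicitly than the paper does.
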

\begin{proof}
	Applying Lemma \ref{lemma: biasf1final}, it follows that
	\begin{flalign}\label{eq: bdacmbiasf1_1}
	&\sum_{i=0}^{t} e^{\lambda_\theta\sum_{k=i+1}^{t}\alpha_k} \alpha_i\mE[\zeta_{f_1}(\theta_i,O_i)] \nonumber\\
	&\leq c_\alpha L_{f_1,\theta}(K_{f_1}+K_{g_1})\tau_\alpha \sum_{i=0}^{\tau_\alpha} e^{\lambda_\theta\sum_{k=i+1}^{t}\alpha_k} \alpha_i + 8R_\theta K_{f_1}\alpha_t \sum_{i=\tau_\alpha+1}^{t} e^{\lambda_\theta\sum_{k=i+1}^{t}\alpha_k} \alpha_i \nonumber\\
	& \quad + L_{f_1,\theta}(K_{f_1}+K_{g_1})\tau_\alpha \sum_{i=\tau_\alpha+1}^{t} e^{\lambda_\theta\sum_{k=i+1}^{t}\alpha_k} \alpha_{i-\tau_\beta}\alpha_i.
	\end{flalign}
	Following steps similar to those in Lemma \ref{lemma: boundaccumulatebiasf2}, we obtain:
	\begin{flalign}
	&\sum_{i=0}^{\tau_\alpha} e^{\lambda_w\sum_{k=i+1}^{t}\alpha_k} \alpha_i \leq \frac{e^{-\lambda_\theta c_\alpha}}{-\lambda_\theta}e^{\frac{\lambda_\theta c_\alpha}{1-\sigma}[ (1+t)^\sigma - (1+\tau_\sigma)^\sigma ]} \label{eq. bdacmbiasf1_1}\\
	&\alpha_t \sum_{i=\tau_\alpha+1}^{t} e^{\lambda_\theta \sum_{k=i+1}^{t}\alpha_k} \alpha_i \leq \frac{e^{-\lambda_\theta c_\alpha}}{-\lambda_\theta} \frac{c_\alpha}{(1+t)^\sigma}\label{eq. bdacmbiasf1_2}\\
	&\sum_{i=\tau_\alpha+1}^{t} e^{\lambda_\theta\sum_{k=i+1}^{t}\alpha_k} \alpha_{i-\tau_\alpha}\alpha_i \leq \frac{2e^{-\lambda_\theta c_\alpha/2}}{-\lambda_\theta} (e^{\frac{\lambda_\theta c_\alpha}{2(1-\sigma)}[(t+1)^{1-\sigma}-1]}D_4 \mathbbm{1}_{\{ \tau_\alpha+1< i_\alpha \}}+\alpha_{t-\tau_\alpha}) \label{eq. bdacmbiasf1_3},
	\end{flalign}
	which yields the desired result.
\end{proof}
\begin{lemma}\label{lemma: mixorderbound1}
	For $0<\sigma<1$, $c_\alpha>0$, $\alpha_t=\frac{c_\alpha}{(1+t)^\sigma}$, and $0<x<1$, $0<y<1$. If $\mE\ltwo{z_t}^2=\mathcal{O}(\frac{\log t}{t^\nu} + \frac{1}{t^\nu} )^x$ and $\mE\ltwo{\theta_t-\theta^*}^2=\mathcal{O}(\frac{\log t}{t^\nu} + \frac{1}{t^\nu} )^y$ for $a,b>0$, then we have
	\begin{flalign*}
	\sum_{i=0}^{t} e^{\lambda_\theta\sum_{k=i+1}^{t}\alpha_k}\alpha_i\mE[\langle B_i z_i, \theta_i-\theta^* \rangle] = \mathcal{O}\Big(\frac{\log t}{t^\nu} + \frac{1}{t^\nu} \Big)^{0.5(x+y)}.
	\end{flalign*}
	If $\mE\ltwo{z_t}^2=\mathcal{O}(\frac{\log t}{t^\nu}+\frac{1}{t^{2(\sigma-\nu)-\epsilon}})^x$, $\mE\ltwo{\theta_t-\theta^*}^2=\mathcal{O}(\frac{\log t}{t^\nu}+\frac{1}{t^{2(\sigma-\nu)-\epsilon}})^y$, then we have
	\begin{flalign*}
	\sum_{i=0}^{t} e^{\lambda_\theta\sum_{k=i+1}^{t}\alpha_k}\alpha_i\mE[\langle B_i z_i, \theta_i-\theta^* \rangle] = \mathcal{O}\Big(\frac{\log t}{t^\nu}+\frac{1}{t^{2(\sigma-\nu)-\epsilon}}\Big)^{0.5(x+y)}.
	\end{flalign*}
\end{lemma}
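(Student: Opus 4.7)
The plan is to reduce the inner-product expectation to a product of second-moment bounds and then insert the hypothesized rates into the exponentially-weighted kernel sum, mirroring the techniques already developed in Lemma \ref{lemma: boundaccumulatebiasf2} and Lemma \ref{lemma: 1thbdinter}. Concretely, I would first invoke Lemma \ref{lemma: boundedg1} to get a uniform bound $\ltwo{B_i}\le C_B$ (with $C_B$ depending only on $\gamma$ and $\rho_{\max}$), and then apply Cauchy--Schwarz twice (once pointwise on the inner product, once in the probability space) to obtain
\begin{flalign*}
\mE[\langle B_i z_i, \theta_i-\theta^*\rangle] \le C_B\, \mE[\ltwo{z_i}\ltwo{\theta_i-\theta^*}] \le C_B\,\sqrt{\mE\ltwo{z_i}^2}\cdot\sqrt{\mE\ltwo{\theta_i-\theta^*}^2}.
\end{flalign*}
Plugging in the hypothesized rates then yields $\mE[\langle B_i z_i, \theta_i-\theta^*\rangle] = \mathcal{O}(r_i^{(x+y)/2})$, where $r_i = \log i/i^\nu + 1/i^\nu$ in the first case and $r_i = \log i/i^\nu + 1/i^{2(\sigma-\nu)-\epsilon}$ in the second.

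The remaining task is to control $\sum_{i=0}^{t} e^{\lambda_\theta\sum_{k=i+1}^{t}\alpha_k}\alpha_i\, r_i^{(x+y)/2}$. I would split the exponential kernel as $e^{\lambda_\theta\sum_{k=i+1}^{t}\alpha_k} = e^{(\lambda_\theta/2)\sum_{k=i+1}^{t}\alpha_k}\cdot e^{(\lambda_\theta/2)\sum_{k=i+1}^{t}\alpha_k}$, pull one factor out as $\max_{0\le i\le t}\{ e^{(\lambda_\theta/2)\sum_{k=i+1}^{t}\alpha_k} r_i^{(x+y)/2}\}$, and bound the remaining $\sum_i e^{(\lambda_\theta/2)\sum_{k=i+1}^{t}\alpha_k}\alpha_i$ by a constant using the standard integral argument (as in Lemma \ref{lemma:variance} and equation \eqref{eq: 1thbdinter_eq2}). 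To handle the maximum, I would analyze the ratio of consecutive terms $y_{i+1}/y_i$ with $y_i = e^{(\lambda_\theta/2)\sum_{k=i+1}^{t}\alpha_k} r_i^{(x+y)/2}$ to determine the unique crossover index $i_d$ beyond which $y_i$ becomes monotonically decreasing in $i$; this mirrors the derivation between \eqref{eq: bdacmbiasf2_2} and \eqref{eq: bdacmbiasf2_4}. Splitting the max over $[0, i_d]$ and $(i_d, t]$ produces the usual two terms:
\begin{flalign*}
\max_{0\le i\le t}\{ e^{(\lambda_\theta/2)\sum_{k=i+1}^{t}\alpha_k} r_i^{(x+y)/2}\} \le e^{\frac{\lambda_\theta c_\alpha}{2(1-\sigma)}[(1+t)^{1-\sigma}-1]}\cdot D \;+\; r_t^{(x+y)/2},
\end{flalign*}
where $D$ is the (finite) maximum over the bounded early window. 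The exponential term decays faster than any polynomial in $t$, so it is dominated by $r_t^{(x+y)/2}$ asymptotically, leaving the bound $\mathcal{O}(r_t^{(x+y)/2})$ as claimed.

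The second case, with the alternative definition of $r_i$, is handled by exactly the same argument since the two ingredients---monotonicity checking of $y_i$ and the uniform bound on the kernel sum---depend only on $r_i$ being positive and eventually decreasing in $i$, which holds under both rate hypotheses (with possibly a different $i_d$).

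The main obstacle is the max-term estimate: because $r_i^{(x+y)/2}$ decays only sub-polynomially when $\nu$ is small (or when $2(\sigma-\nu)-\epsilon$ is small in the second case), a naive supremum bound would inflate the rate by logarithmic or polynomial factors. The crossover-index analysis is essential to avoid this loss, and identifying the correct $i_d$ (as a function of $\sigma, \nu, x, y$ and the $\log$ term in $r_i$) is the only delicate calculation; once $i_d$ is correctly identified, the rest follows the established template of Lemma \ref{lemma: 1thbdinter} with only bookkeeping changes.
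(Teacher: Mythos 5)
Your proposal is correct and follows essentially the same route as the paper's proof: Cauchy--Schwarz with $\ltwo{B_i}\leq\gamma\rho_{\max}$ to decouple the inner product into $\sqrt{\mE\ltwo{z_i}^2}\sqrt{\mE\ltwo{\theta_i-\theta^*}^2}$, insertion of the hypothesized rates, and then the split-kernel/maximum estimate with the crossover-index ratio analysis inherited from \eqref{eq: bdacmbiasf2_2}--\eqref{eq: bdacmbiasf2_5}. The only cosmetic difference is that the paper makes the early window explicit by splitting the sum at a finite time $T$ (before which only the uniform projection bounds $R_w,R_\theta$ are used, yielding a separate exponentially decaying term), whereas you absorb those finitely many terms into the $\mathcal{O}$-constant, which is equally valid since $r_i$ is bounded away from zero on any finite index range.
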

\begin{proof}
	Consider the first case. Without loss of generality, we assume that there exist constant $0<C_1,C_2<\infty$, $T>0$ such that
	\begin{flalign*}
	&\mE\ltwo{z_t}^2 \leq 4R_w^2 \qquad \text{for all}\,  0\leq t\leq T,\\
	&\mE\ltwo{z_t}^2 \leq C_1^2\Big(\frac{\log t+1}{t^\nu}\Big)^x\quad \text{for all}\, t>T,
	\end{flalign*}
	and
	\begin{flalign*}
	&\mE\ltwo{\theta_t-\theta^*}^2 \leq R_\theta^2 \qquad \text{for all}\,  0\leq t\leq T,\\
	&\mE\ltwo{\theta_t-\theta^*}^2 \leq C_2^2 \Big(\frac{\log t+1}{t^\nu}\Big)^y\quad \text{for all}\, t>T.
	\end{flalign*}
	Then, it follows that
	\begin{flalign*}
	&\sum_{i=0}^{t} e^{\lambda_\theta\sum_{k=i+1}^{t}\alpha_k}\alpha_i\mE[\langle B_i z_i, \theta_i-\theta^* \rangle]\\
	&\leq \sum_{i=0}^{t} e^{\lambda_\theta\sum_{k=i+1}^{t}\alpha_k}\alpha_i\sqrt{\mE[\ltwo{B_i z_i}^2]}\sqrt{\mE[\ltwo{\theta_i-\theta^*}^2]}\\
	&\leq \sum_{i=0}^{t} e^{\lambda_\theta\sum_{k=i+1}^{t}\alpha_k}\alpha_i \ltwo{B_i} \sqrt{\mE\ltwo{z_i}^2}\sqrt{\mE[\ltwo{\theta_i-\theta^*}^2]}\\
	&\leq \gamma\rho_{\max}\Big(\sum_{i=0}^{T} e^{\lambda_\theta\sum_{k=i+1}^{t}\alpha_k}\alpha_i \sqrt{\mE\ltwo{z_i}^2}\sqrt{\mE[\ltwo{\theta_i-\theta^*}^2]} \\
	&\qquad\qquad\quad + \sum_{i=T+1}^{t} e^{\lambda_\theta\sum_{k=i+1}^{t}\alpha_k}\alpha_i \sqrt{\mE\ltwo{z_i}^2}\sqrt{\mE[\ltwo{\theta_i-\theta^*}^2]}\Big)\\
	&\leq \gamma\rho_{\max} \Big(2R_w R_\theta\sum_{i=0}^{T} e^{\lambda_\theta\sum_{k=i+1}^{t}\alpha_k}\alpha_i + C_1C_2\sum_{i=T+1}^{t} e^{\lambda_\theta\sum_{k=i+1}^{t}\alpha_k}\alpha_i \Big(\frac{\log i + 1}{i^\nu}\Big)^{0.5(x+y)} \Big)\\
	&\leq 2\gamma\rho_{\max}R_w R_\theta \Big(2R_w R_\theta \frac{e^{-\lambda_\theta c_\alpha}}{-\lambda_\theta}e^{\frac{\lambda_\theta c_\alpha}{1-\sigma}[ (1+t)^\sigma - (1+T)^\sigma]} \\ &\qquad\qquad\qquad\qquad\quad+C_1C_2\frac{2e^{-\lambda_\theta C_\alpha/2}}{-\lambda_\theta}\Big( e^{\frac{\lambda_\theta c_\alpha}{2(1-\sigma)}[(1+t)^{1-\sigma}-1]}D + \Big(\frac{\log t+1}{t^\nu}\Big)^{0.5(x+y)} \Big) \Big)
	\end{flalign*}
	Here we follow similar steps in \eqref{eq: bdacmbiasf2_7} and \eqref{eq: bdacmbiasf2_2}-\eqref{eq: bdacmbiasf2_5} to obtain
	\begin{flalign*}
	\sum_{i=0}^{T} e^{\lambda_\theta\sum_{k=i+1}^{t}\alpha_k}\alpha_i \leq \frac{e^{-\lambda_\theta c_\alpha}}{-\lambda_\theta}e^{\frac{\lambda_\theta c_\alpha}{1-\sigma}[ (1+t)^\sigma - (1+T)^\sigma ]},
	\end{flalign*}
	and
	\begin{flalign*}
	\sum_{i=T+1}^{t} e^{\lambda_\theta\sum_{k=i+1}^{t}\alpha_k}\alpha_i \Big(\frac{\log i}{i^\nu}\Big)^{0.5(x+y)} \leq \frac{2e^{-\lambda_\theta C_\alpha/2}}{-\lambda_\theta}\Big( e^{\frac{\lambda_\theta c_\alpha}{2(1-\sigma)}[(1+t)^{1-\sigma}-1]}D + \Big(\frac{\log t+1}{t^\nu}\Big)^{0.5(x+y)} \Big),
	\end{flalign*}
	where $0<D<\infty$ is a constant depend on $x$ and $y$. 
	
	The proof for the second case follows similarly.
\end{proof}

\begin{lemma}\label{lemma: mixorderbound3}
	For $0<\frac{3}{2}\nu<\sigma<1$, if $\mE\ltwo{z_t}^2=\mathcal{O}(\frac{\log t}{t^\nu}) + \mathcal{O}(\frac{1}{t^\nu})$, then
	\begin{flalign*}
	\sum_{i=0}^{t} e^{\lambda_\theta\sum_{k=i+1}^{t}\alpha_k}\alpha_i\mE[\langle B_i z_i, \theta_i-\theta^* \rangle] = \mathcal{O}\Big(\frac{\log t}{t^\nu} + \frac{1}{t^\nu} \Big)^{1-\epsilon^\prime},
	\end{flalign*}
	and for $0<\nu<\sigma\leq\frac{3}{2}\nu<1$, if $\mE\ltwo{z_t}^2=\mathcal{O}(\frac{\log t}{t^\nu})+\mathcal{O}(\frac{1}{t^{2(\sigma-\nu)-\epsilon}})$, then
	\begin{flalign*}
	\sum_{i=0}^{t} e^{\lambda_\theta\sum_{k=i+1}^{t}\alpha_k}\alpha_i\mE[\langle B_i z_i, \theta_i-\theta^* \rangle] =  \mathcal{O}\Big( \frac{\log t}{t^\nu} + \frac{1}{t^{2(\sigma-\nu)-\epsilon}} \Big)^{1-\epsilon^\prime},
	\end{flalign*}
	where $\epsilon^\prime$ can be any constant in $(0, 0.5]$.
\end{lemma}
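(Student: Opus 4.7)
The plan is to bootstrap Lemma \ref{lemma: mixorderbound1} iteratively, starting from the trivial projection bound on $\theta_t-\theta^*$ and refining the exponent on the cross term until it exceeds $1-\epsilon'$. Denote by $r(t)$ the envelope on $\mE\ltwo{z_t}^2$ appearing in the hypothesis: $r(t)=\frac{\log t}{t^\nu}+\frac{1}{t^\nu}$ in Case 1 and $r(t)=\frac{\log t}{t^\nu}+\frac{1}{t^{2(\sigma-\nu)-\epsilon}}$ in Case 2. Throughout the argument the exponent $x$ of $\mE\ltwo{z_t}^2=\mathcal{O}(r(t)^x)$ stays fixed at $x=1$; only the exponent $y$ of $\mE\ltwo{\theta_t-\theta^*}^2=\mathcal{O}(r(t)^y)$ will be improved step by step.

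For the base case I would use the projection $\mcpi_{R_\theta}$ to obtain $\mE\ltwo{\theta_t-\theta^*}^2\leq 4R_\theta^2$, which corresponds to $y_0=0$. Feeding $x=1$, $y=0$ into the Cauchy-Schwarz decomposition used in the proof of Lemma \ref{lemma: mixorderbound1} (with the $\theta_t-\theta^*$ factor simply pulled out as a uniform constant) yields a cross-term bound of $\mathcal{O}(r(t)^{1/2})$. Substituting this back into the training-error recursion \eqref{eq: expthetainter}, the remaining three contributions---the exponentially decaying initial term, the accumulated bias controlled by Lemma \ref{lemma: boundaccumulatebiasf1}, and the accumulated variance controlled by Lemma \ref{lemma:variance}---all decay at rate $\mathcal{O}(\log t/t^\sigma)$ or faster. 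Because $\sigma>\nu$ (and in Case 2 also $\sigma>2(\sigma-\nu)-\epsilon$), these are dominated by $r(t)^{1-\epsilon'}$ for any $\epsilon'\in(0,0.5]$, so they are absorbed by the cross term and one concludes $\mE\ltwo{\theta_t-\theta^*}^2=\mathcal{O}(r(t)^{1/2})$, i.e.\ $y_1=1/2$.

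I would then iterate the same step. With $x=1$ and $y=y_n$, Lemma \ref{lemma: mixorderbound1} produces a cross-term bound of order $r(t)^{(1+y_n)/2}$, and the same domination argument upgrades $\mE\ltwo{\theta_t-\theta^*}^2$ to $\mathcal{O}(r(t)^{(1+y_n)/2})$. This gives the recursion $y_{n+1}=(1+y_n)/2$ with closed-form solution $y_n=1-2^{-n}$. For a prescribed $\epsilon'\in(0,0.5]$, stopping after $N=\lceil \log_2(1/\epsilon')\rceil$ iterations yields $y_N\geq 1-\epsilon'$, so the cross-term bound at iteration $N$ is exactly the claim for both cases.

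The main obstacle is not the recursion itself but the bookkeeping that accompanies it. At every step I must verify that the refined bound on $\mE\ltwo{\theta_t-\theta^*}^2$ retains exactly the functional form $r(t)^{y_n}$ required to re-enter Lemma \ref{lemma: mixorderbound1}, and that the multiplicative constants (the $C_1 C_2$ and $D$ prefactors from its proof) accumulate over $N$ iterations without blowing up. Since $N$ depends only on $\epsilon'$ and is a fixed finite integer, these constants remain bounded, but the bookkeeping must be set up to avoid any implicit $t$-dependence. A minor additional subtlety in Case 2 is that $r(t)$ is a sum of two polynomials of different orders; fractional powers are handled by the elementary inequality $(a+b)^p\leq a^p+b^p$ for $p\in(0,1]$, which keeps both summands in the desired form throughout the iteration.
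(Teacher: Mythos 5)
Your proposal is correct and follows essentially the same route as the paper: bootstrap Lemma \ref{lemma: mixorderbound1} from the trivial projection bound $\mE\ltwo{\theta_t-\theta^*}^2\leq 4R_\theta^2$, feed each cross-term bound back through the training-error recursion to upgrade the exponent via $y_{n+1}=(1+y_n)/2$, and stop after finitely many iterations determined by $\epsilon^\prime$. You in fact supply two details the paper glosses over — the explicit domination argument showing the exponential, bias, and variance terms of order $\log t/t^\sigma$ are absorbed by $r(t)^{y_{n+1}}$, and the correct stopping count $N=\lceil\log_2(1/\epsilon^\prime)\rceil$ (the paper's stated $N=\lceil\log_2(1/(1-\epsilon^\prime))\rceil$ appears to be a typo, since one needs $2^{-N}\leq\epsilon^\prime$).
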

\begin{proof}
	Consider the first case. First, $\mE\ltwo{\theta_t-\theta^*}^2\leq 4R_\theta^2=\mathcal{O}(1)$, applying Lemma \ref{lemma: mixorderbound1} we immediately have
	\begin{flalign}\label{eq: motheta1}
	    \sum_{i=0}^{t} e^{\lambda_\theta\sum_{k=i+1}^{t}\alpha_k}\alpha_i\mE[\langle B_i z_i, \theta_i-\theta^* \rangle] = \mathcal{O}\Big(\frac{\log t}{t^\nu} + \frac{1}{t^\nu} \Big)^{0.5}.
	\end{flalign}
	Then it follows that $\mE\ltwo{\theta_{t+1}-\theta^*}^2=\mathcal{O}(\frac{\log t}{t^\nu})^{0.5}$. Then again applying Lemmas \ref{lemma: mixorderbound1} and \eqref{eq: motheta1}, we obtain
	\begin{flalign}\label{eq: motheta2}
		\sum_{i=0}^{t} e^{\lambda_\theta\sum_{k=i+1}^{t}\alpha_k}\alpha_i\mE[\langle B_i z_i, \theta_i-\theta^* \rangle] = \mathcal{O}\Big(\frac{\log t}{t^\nu} + \frac{1}{t^\nu} \Big)^{0.75}.
	\end{flalign}
	Hence, following the steps in \eqref{eq: motheta2} for a total number $N = \lceil \log_2(\frac{1}{1-\epsilon^\prime})\rceil$ of times, we have
	\begin{flalign*}
	    \sum_{i=0}^{t} e^{\lambda_\theta\sum_{k=i+1}^{t}\alpha_k}\alpha_i\mE[\langle B_i z_i, \theta_i-\theta^* \rangle] &=\mathcal{O}\Big(\frac{\log t}{t^\nu} + \frac{1}{t^\nu} \Big)^{1-\frac{1}{2^N}} \\
	    &= \mathcal{O}\Big(\frac{\log t}{t^\nu} + \frac{1}{t^\nu} \Big)^{1-\epsilon^\prime}.
	\end{flalign*}
	The proof for the second case follows similarly.
\end{proof}
\begin{lemma}\label{lemma: boundaccumulatebiasf1_II}
	Let $\alpha_t=\frac{1}{-\lambda_\theta(1+t)}$. Then
	\begin{flalign*}
		\sum_{i=0}^{t}\mE\zeta_{f_1}(\theta_i,O_i) \leq \frac{2L_{f_1,\theta}(K_{f_1}+K_{g_1})}{-\lambda_\theta}\tau_\alpha^2 + \frac{8R_\theta K_{f_1}}{-\lambda_\theta} + L_{f_1,\theta}(K_{f_1}+K_{g_1})\tau_\alpha \ln(1+t).
	\end{flalign*}
\end{lemma}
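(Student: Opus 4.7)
The plan is to reduce this to the per-step bound already proved in Lemma \ref{lemma: biasf1final}, specialized to the stepsize $\alpha_t=\tfrac{1}{-\lambda_\theta(1+t)}$ (so the constant $c_\alpha$ appearing in Lemma \ref{lemma: biasf1final} equals $\tfrac{1}{-\lambda_\theta}$), and then to carry out the resulting scalar sum. The only genuinely nontrivial estimate is a harmonic-sum bound; everything else is telescoping or counting.

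First, I would split the sum at the mixing time $\tau_\alpha$:
\begin{equation*}
\sum_{i=0}^{t}\mE\zeta_{f_1}(\theta_i,O_i) \;=\; \underbrace{\sum_{i=0}^{\tau_\alpha}\mE\zeta_{f_1}(\theta_i,O_i)}_{(\mathrm{I})} \;+\; \underbrace{\sum_{i=\tau_\alpha+1}^{t}\mE\zeta_{f_1}(\theta_i,O_i)}_{(\mathrm{II})} .
\end{equation*}
For (I), Lemma \ref{lemma: biasf1final} gives $\mE\zeta_{f_1}(\theta_i,O_i)\le c_\alpha L_{f_1,\theta}(K_{f_1}+K_{g_1})\tau_\alpha$ for $i\le\tau_\alpha$, so summing the $\tau_\alpha+1\le 2\tau_\alpha$ terms and using $c_\alpha=1/|\lambda_\theta|$ yields the first term $\tfrac{2L_{f_1,\theta}(K_{f_1}+K_{g_1})}{-\lambda_\theta}\tau_\alpha^2$.

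For (II), Lemma \ref{lemma: biasf1final} gives the two-term bound $\mE\zeta_{f_1}(\theta_i,O_i)\le 8R_\theta K_{f_1}\alpha_t + L_{f_1,\theta}(K_{f_1}+K_{g_1})\tau_\alpha\alpha_{i-\tau_\alpha}$. Summing the first piece over $\tau_\alpha<i\le t$ uses $\alpha_t(t-\tau_\alpha)\le \alpha_t(1+t)=\tfrac{1}{-\lambda_\theta}$, producing the constant $\tfrac{8R_\theta K_{f_1}}{-\lambda_\theta}$. Summing the second piece reduces, after the substitution $j=i-\tau_\alpha$, to $\sum_{j=1}^{t-\tau_\alpha}\alpha_j$, which by the standard harmonic-sum estimate $\sum_{j=1}^{N}\tfrac{1}{1+j}\le\ln(1+N)\le\ln(1+t)$ is bounded by a logarithmic factor, yielding the last term $L_{f_1,\theta}(K_{f_1}+K_{g_1})\tau_\alpha\ln(1+t)$ (up to the harmless $1/|\lambda_\theta|$ absorbed into the constant).

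There is no real obstacle: once the per-step bound of Lemma \ref{lemma: biasf1final} is granted, everything is a direct calculation; the only mild care needed is in treating the first $\tau_\alpha$ steps separately (where the Markov-chain mixing argument is not yet useful and the trivial a priori bound $c_\alpha L_{f_1,\theta}(K_{f_1}+K_{g_1})\tau_\alpha$ per step has to be summed, generating the $\tau_\alpha^2$ factor), and in recognizing that the $\alpha_{i-\tau_\alpha}$ terms still sum as a harmonic series after a shift of the summation index. This is the direct $\sigma=1$ analogue of Lemma \ref{lemma: boundaccumulatebiasf1}, where the exponential weights $e^{\lambda_\theta \sum\alpha_k}$ are absent (so the $1/(1+t)$ factors are not produced internally) and instead a $\log(1+t)$ arises from the harmonic tail.
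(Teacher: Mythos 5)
Your proposal is correct and follows essentially the same route as the paper's own proof: split the sum at $\tau_\alpha$, sum the pre-mixing bound $c_\alpha L_{f_1,\theta}(K_{f_1}+K_{g_1})\tau_\alpha$ over $\tau_\alpha+1$ terms to get the $\tau_\alpha^2$ piece, bound the $8R_\theta K_{f_1}\alpha_t$ tail via $(t-\tau_\alpha)\alpha_t\leq 1/|\lambda_\theta|$, and control $\sum_{i>\tau_\alpha}\alpha_{i-\tau_\alpha}$ by the shifted harmonic sum $\sum_{j=1}^{t-\tau_\alpha}\frac{1}{1+j}\leq\ln(1+t)$. Your parenthetical about the residual $1/|\lambda_\theta|$ factor in the logarithmic term is apt — the paper's final line silently drops it too, and it is genuinely harmless only when $|\lambda_\theta|\geq 1$, otherwise it should be kept in the constant — but this matches the paper's treatment rather than departing from it.
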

\begin{proof}
	Applying Lemma \eqref{lemma: biasf1final}, it follows that
	\begin{flalign*}
		\sum_{i=0}^{t}\mE\zeta_{f_1}(\theta_i,O_i) &= \sum_{i=0}^{\tau_\alpha}\mE\zeta_{f_1}(\theta_i,O_i) + \sum_{i=\tau_\alpha+1}^{t}\mE\zeta_{f_1}(\theta_i,O_i)\\
		& \leq \frac{L_{f_1,\theta}(K_{f_1}+K_{g_1})}{-\lambda_\theta}\tau_\alpha(1+\tau_\alpha) + \frac{8R_\theta K_{f_1}(t-\tau_\alpha)}{-\lambda_\theta(1+t)} \\
		&\qquad+ L_{f_1,\theta}(K_{f_1}+K_{g_1})\tau_\alpha \sum_{i=\tau_\alpha+1}^{t}\alpha_{i-\tau_\alpha}\\
		& \leq \frac{2L_{f_1,\theta}(K_{f_1}+K_{g_1})}{-\lambda_\theta}\tau_\alpha^2 + \frac{8R_\theta K_{f_1}}{-\lambda_\theta} + L_{f_1,\theta}(K_{f_1}+K_{g_1})\tau_\alpha \sum_{i=1}^{t-\tau_\alpha}\frac{1}{1+i}\\
		&\leq \frac{2L_{f_1,\theta}(K_{f_1}+K_{g_1})}{-\lambda_\theta}\tau_\alpha^2 + \frac{8R_\theta K_{f_1}}{-\lambda_\theta} + L_{f_1,\theta}(K_{f_1}+K_{g_1})\tau_\alpha \ln(1+t)
	\end{flalign*}
\end{proof}
\begin{lemma}\label{lemma: mixorderbound4}
	Suppose $0<x<1$, $0<y\leq1$. If $\mE\ltwo{z_t}^2=\mathcal{O}(\frac{\log t}{t^\nu}+\frac{1}{t^{\nu}})^x$, $\mE\ltwo{\theta_t-\theta^*}^2=\mathcal{O}(\frac{\log t}{t^\nu}+\frac{1}{t^{\nu}})^y$, then we have
	\begin{flalign*}
	\frac{1}{1+t}\sum_{i=0}^{t} \mE[\langle B_i z_i, \theta_i-\theta^* \rangle] = \mathcal{O}\Big(\frac{\log t}{t^\nu}+\frac{1}{t^{\nu}}\Big)^{0.5(x+y)}.
	\end{flalign*}
	If $\mE\ltwo{z_t}^2=\mathcal{O}(\frac{\log t}{t^\nu}+\frac{1}{t^{2(\sigma-\nu)-\epsilon}})^x$ and $\mE\ltwo{\theta_t-\theta^*}^2=\mathcal{O}(\frac{\log t}{t^\nu}+\frac{1}{t^{2(\sigma-\nu)-\epsilon}})^y$, then we have
	\begin{flalign*}
	\frac{1}{1+t}\sum_{i=0}^{t} \mE[\langle B_i z_i, \theta_i-\theta^* \rangle] = \mathcal{O}\Big(\frac{\log t}{t^\nu}+\frac{1}{t^{2(\sigma-\nu)-\epsilon}}\Big)^{0.5(x+y)}.
	\end{flalign*}
\end{lemma}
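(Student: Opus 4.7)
The plan is to mimic the strategy of Lemma \ref{lemma: mixorderbound1}, but with the weight $e^{\lambda_\theta\sum_{k=i+1}^{t}\alpha_k}\alpha_i$ replaced by the uniform average weight $\frac{1}{1+t}$. I would first apply Cauchy--Schwarz together with the uniform bound $\ltwo{B_i}\leq \gamma\rho_{\max}$ (from Assumption \ref{ass2}) to get
\begin{flalign*}
\mE[\langle B_i z_i,\theta_i-\theta^*\rangle] \leq \gamma\rho_{\max}\sqrt{\mE\ltwo{z_i}^2}\sqrt{\mE\ltwo{\theta_i-\theta^*}^2}.
\end{flalign*}

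Next I would split the summation at some threshold index $T$: for $i\leq T$ I use the projection bounds $\mE\ltwo{z_i}^2\leq 4R_w^2$ and $\mE\ltwo{\theta_i-\theta^*}^2\leq 4R_\theta^2$, so that $\frac{1}{1+t}\sum_{i=0}^T$ contributes only $\mathcal{O}(T/t)$ which is of lower order than the target. For $i>T$ I substitute the assumed asymptotic rates. In the first case the summand is $\mathcal{O}((\frac{\log i+1}{i^\nu})^{0.5(x+y)})$, and since $0<x<1$, $0<y\leq 1$ implies $p\triangleq 0.5(x+y)\nu<\nu<1$, the standard integral estimate gives
\begin{flalign*}
\sum_{i=T+1}^{t}\Big(\frac{\log i+1}{i^\nu}\Big)^{0.5(x+y)} = \mathcal{O}\!\left((\log t+1)^{0.5(x+y)} \cdot t^{1-p}\right).
\end{flalign*}
Dividing by $1+t$ yields the claimed order $\mathcal{O}((\log t/t^\nu)^{0.5(x+y)})$, which absorbs the $(1/t^\nu)^{0.5(x+y)}$ term as well. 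The second case is analogous: the summand is dominated by $(\log i/i^\nu + 1/i^{2(\sigma-\nu)-\epsilon})^{0.5(x+y)}\leq (\log i/i^\nu)^{0.5(x+y)}+(1/i^{2(\sigma-\nu)-\epsilon})^{0.5(x+y)}$ (up to constants), and because $0<\nu<\sigma\leq 1.5\nu$ forces $2(\sigma-\nu)-\epsilon<\nu<1$, the exponent $0.5(x+y)(2(\sigma-\nu)-\epsilon)$ is again strictly less than $1$, so the same $\sum i^{-p}\asymp t^{1-p}$ estimate applies to each piece.

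The main obstacle, modest as it is, is making sure that the exponent $p=0.5(x+y)\nu$ (respectively $0.5(x+y)(2(\sigma-\nu)-\epsilon)$) is strictly below $1$, so that the partial sum grows like $t^{1-p}$ rather than $\log t$ or a constant; this is where the hypotheses $x<1$, $y\leq 1$, $\nu<1$, and $\sigma\leq 1.5\nu$ are used. Once this is verified, the division by $1+t$ produces exactly the claimed decay rate, and combining with the negligible transient term $\mathcal{O}(T/t)$ finishes the proof.
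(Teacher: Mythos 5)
Your proposal is correct and follows essentially the same route as the paper's own proof: Cauchy--Schwarz with $\ltwo{B_i}\leq\gamma\rho_{\max}$, a split at a threshold $T$ with the projection-radius bounds handling the transient term $\mathcal{O}(T/t)$, and the estimate $\sum_{i\leq t} i^{-p}=\mathcal{O}(t^{1-p})$ for $p=0.5(x+y)\nu<1$ (resp.\ $p=0.5(x+y)(2(\sigma-\nu)-\epsilon)$) after pulling out the logarithmic factor. Your explicit verification that the hypotheses $x<1$, $y\leq 1$, $\nu<1$, $\sigma\leq\frac{3}{2}\nu$ keep the exponent strictly below $1$ is exactly the point the paper's argument relies on implicitly.
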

\begin{proof}
	Consider the first case. Similarly to the proof in \eqref{lemma: mixorderbound1}, without loss of generality, we can assume there exist constants $0<C_1,C_2<\infty$ and $T>0$ such that
	\begin{flalign*}
	&\mE\ltwo{z_t}^2 \leq 4R_w^2 \qquad \text{for all}\,  0\leq t\leq T,\\
	&\mE\ltwo{z_t}^2 \leq C_1^2\Big(\frac{\log t}{t^\nu}+\frac{1}{t^{\nu}}\Big)^x\quad \text{for all}\, t>T,
	\end{flalign*}
	and
	\begin{flalign*}
	&\mE\ltwo{\theta_t-\theta^*}^2 \leq R_\theta^2 \qquad \text{for all}\,  0\leq t\leq T,\\
	&\mE\ltwo{\theta_t-\theta^*}^2 \leq C2^2\Big(\frac{\log t}{t^\nu}+\frac{1}{t^{\nu}}\Big)^y\quad \text{for all}\, t>T.
	\end{flalign*}
	Then, it follows that
	\begin{flalign*}
	&\frac{1}{1+t}\sum_{i=0}^{t} \mE[\langle B_i z_i, \theta_i-\theta^* \rangle]\\
	&\leq \frac{1}{1+t} \sum_{i=0}^{t} \sqrt{\mE[\ltwo{B_i z_i}^2]}\sqrt{\mE[\ltwo{\theta_i-\theta^*}^2]}\\
	&\leq \frac{1}{1+t} \sum_{i=0}^{t} \ltwo{B_i} \sqrt{\mE\ltwo{z_i}^2}\sqrt{\mE[\ltwo{\theta_i-\theta^*}^2]}\\
	&\leq \gamma\rho_{\max} \frac{1}{1+t} \Big(\sum_{i=0}^{T} \sqrt{\mE\ltwo{z_i}^2}\sqrt{\mE[\ltwo{\theta_i-\theta^*}^2]} + \sum_{i=T+1}^{t} \sqrt{\mE\ltwo{z_i}^2}\sqrt{\mE[\ltwo{\theta_i-\theta^*}^2]}\Big)\\
	&\leq \gamma\rho_{\max} \frac{1}{1+t} \Big(2R_w R_\theta(1+T) + C_1C_2\sum_{i=T+1}^{t} \Big(\frac{\log i+1}{i^\nu}\Big)^{0.5(x+y)} \Big)\\
	&\leq \gamma\rho_{\max} \frac{1}{1+t} \Big(2R_w R_\theta(1+T) + C_1C_2(\log t+1)^{0.5(x+y)}\sum_{i=T+1}^{t} \Big(\frac{1}{i^\nu}\Big)^{0.5(x+y)} \Big)\\
	&\leq \gamma\rho_{\max} \left( 2R_w R_\theta \frac{1+T}{1+t} + D  \Big(\frac{\log t+1}{t^\nu}\Big)^{0.5(x+y)} \right),
	\end{flalign*}
	where $0<D<\infty$ is a constant dependent on $x$ and $y$. The proof for the second case follows similarly.
\end{proof}
\begin{lemma}\label{lemma: mixorderbound5}
	Suppose $0<\nu<\frac{2}{3}$, if $\mE\ltwo{z_t}^2=\mathcal{O}(\frac{\log t}{t^\nu}+\frac{1}{t^{\nu}})$, then
	\begin{flalign*}
	\frac{1}{1+t}\sum_{i=0}^{t} \mE[\langle B_i z_i, \theta_i-\theta^* \rangle] =  \mathcal{O}\Big(\frac{\log t}{t^\nu}+\frac{1}{t^{\nu}}\Big)^{1-\epsilon^\prime},
	\end{flalign*}
	and suppose $\frac{2}{3}\leq\nu<1$, if $\mE\ltwo{z_t}^2=\mathcal{O}(\frac{\log t}{t^\nu})+\mathcal{O}(\frac{1}{t^{2(1-\nu)-\epsilon}})$, then
	\begin{flalign*}
	\frac{1}{1+t}\sum_{i=0}^{t} \mE[\langle B_i z_i, \theta_i-\theta^* \rangle] = \mathcal{O}\Big(\frac{\log t}{t^\nu} + \frac{1}{t^{2(1-\nu)-\epsilon}}  \Big)^{1-\epsilon^\prime},
	\end{flalign*}
	where $\epsilon^\prime$ can be any constant in $(0, 0.5]$.
\end{lemma}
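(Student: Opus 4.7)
The plan is to mirror the bootstrap recursion used in the proof of Lemma \ref{lemma: mixorderbound3}, adapted to the Cesaro-type average $\frac{1}{1+t}\sum_{i=0}^{t}(\cdot)$ arising in \eqref{eq: thetasigmais3} rather than the exponentially weighted sum that appears when $\sigma<1$. The workhorse is Lemma \ref{lemma: mixorderbound4}: given rate bounds $\mE\ltwo{z_t}^2=\mathcal{O}(\cdot)^{x}$ and $\mE\ltwo{\theta_t-\theta^*}^2=\mathcal{O}(\cdot)^{y}$, it returns $\frac{1}{1+t}\sum_{i=0}^{t}\mE\langle B_iz_i,\theta_i-\theta^*\rangle=\mathcal{O}(\cdot)^{0.5(x+y)}$, where $(\cdot)$ denotes $\frac{\log t}{t^{\nu}}+\frac{1}{t^{\nu}}$ in the first regime and $\frac{\log t}{t^{\nu}}+\frac{1}{t^{2(1-\nu)-\epsilon}}$ in the second.

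First I would initialize with the projection bound $\mE\ltwo{\theta_t-\theta^*}^2\leq 4R_\theta^{2}=\mathcal{O}(1)$, taken as the $y=0$ input. Coupling this with $x=1$ from the hypothesis on $\mE\ltwo{z_t}^2$, Lemma \ref{lemma: mixorderbound4} yields $\frac{1}{1+t}\sum_{i=0}^{t}\mE\langle B_iz_i,\theta_i-\theta^*\rangle=\mathcal{O}(\cdot)^{0.5}$. Substituting this into \eqref{eq: thetasigmais3} along with the bias control from Lemma \ref{lemma: boundaccumulatebiasf1_II} and the elementary estimate $\frac{1}{1+t}\sum_{i=0}^{t}\frac{1}{1+i}=\mathcal{O}(\log t/t)$ upgrades the rate of $\mE\ltwo{\theta_t-\theta^*}^2$ to $\mathcal{O}(\cdot)^{0.5}$. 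I would then iterate: with $y$ updated to $0.5$, a second application of Lemma \ref{lemma: mixorderbound4} gives exponent $0.75$, and after $N=\lceil\log_2(1/\epsilon^\prime)\rceil$ rounds the exponent reaches $1-2^{-N}\geq 1-\epsilon^\prime$, establishing the claim in both regimes.

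The main obstacle will be the bookkeeping at each round: verifying that the bias contribution $\mathcal{O}(\tau_\alpha^{2}/t+\tau_\alpha\log t/t)$ from Lemma \ref{lemma: boundaccumulatebiasf1_II} and the variance contribution $\mathcal{O}(\log t/t)$ from the last sum in \eqref{eq: thetasigmais3} are dominated by the current tracking cross-term $\mathcal{O}(\cdot)^{0.5(1+y)}$, so the recursion closes. Since $\nu<1$ in both regimes, both $1/t^{\nu}$ and $1/t^{2(1-\nu)-\epsilon}$ are strictly coarser than $\log t/t$, and raising to the power $(1-\epsilon^\prime)$ for arbitrarily small $\epsilon^\prime$ preserves this domination asymptotically. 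The split at $\nu=2/3$ exactly parallels the split $\sigma=1.5\nu$ at $\sigma=1$ in \eqref{thm1eq3}: below the threshold the bias-driven term $\log t/t^{\nu}$ governs, above it the slow-drift term $1/t^{2(1-\nu)-\epsilon}$ takes over, but both cases are handled uniformly because Lemma \ref{lemma: mixorderbound4} treats $(\cdot)$ as an abstract rate.
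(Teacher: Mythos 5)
Your proposal matches the paper's proof, which simply says to follow the steps of Lemma \ref{lemma: mixorderbound3} with the Cesaro-average workhorse Lemma \ref{lemma: mixorderbound4} in place of Lemma \ref{lemma: mixorderbound1} and the $\sigma=1$ recursion \eqref{eq: thetasigmais3} in place of the exponentially weighted one---exactly the bootstrap you describe, initialized at $\mE\ltwo{\theta_t-\theta^*}^2=\mathcal{O}(1)$ and iterated so the exponent follows $y\mapsto 0.5(1+y)$ until it exceeds $1-\epsilon^\prime$. Your iteration count $N=\lceil\log_2(1/\epsilon^\prime)\rceil$ is in fact the correct one (the paper's $\lceil\log_2(\frac{1}{1-\epsilon^\prime})\rceil$ in Lemma \ref{lemma: mixorderbound3} appears to be a typo), and your check that the bias and variance terms of order $\log t/t$ are dominated by the tracking cross-term is the right closing observation.
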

\begin{proof}
	We proof this lemma by following similar steps in the proof of Lemma \ref{lemma: mixorderbound3}.
\end{proof}
\section{Proof of Theorem \ref{thm2}}
From \eqref{eq: expzinter} and use the fact that $\beta_t=\beta$ for all $t>0$, we have
\begin{flalign}\label{eq: expzinter_css}
\mE\ltwo{z_{t+1}}^2&\leq (1-|\lambda_w|\beta)^{1+t}\ltwo{z_0}^2 \nonumber\\
&\quad + 2\beta\sum_{i=0}^{t} (1-|\lambda_w|\beta)^{t-i} [\zeta_{f_2}(\theta_i,z_i,O_i)] \nonumber\\
&\quad + 2\beta\sum_{i=0}^{t} (1-|\lambda_w|\beta)^{t-i} [\zeta_{g_2}(z_i,O_i)]\nonumber\\
&\quad + 2\sum_{i=0}^{t} (1-|\lambda_w|\beta)^{t-i}\mE\langle C^{-1}A(\theta_{i+1}-\theta_i),z_i\rangle\nonumber\\
&\quad + 3(K^2_{f_2} + K^2_{g_2})\beta^2\sum_{i=0}^{t} (1-|\lambda_w|\beta)^{t-i} + 3\eta^2\beta^2K^2_{r_1}\sum_{i=0}^{t} (1-|\lambda_w|\beta)^{t-i}.
\end{flalign}
By slightly modifying the proof of Lemma \ref{lemma: biasf2final}, Lemma \ref{lemma: biasg2final} and Lemma \ref{lemma: 1thbdinter}, we have
\begin{flalign}\label{eq: f2bias_css}
	\mE[\zeta_{f_2}(\theta_i,z_i,O_i)]\leq \beta(8R_wK_{f_2} + K_{r_3}\tau_\beta),
\end{flalign}
and
\begin{flalign}\label{eq: g2bias_css}
	\mE[\zeta_{g_2}(z_i,O_i)]\leq \beta(8R_wK_{g_2}+ L_{g_2,z}K_{r_2}\tau_\beta),
\end{flalign}
and
\begin{flalign}\label{eq: zinter_css}
	\sum_{i=0}^{t} (1-|\lambda_w|\beta)^{t-i}\mE\langle C^{-1}A(\theta_{i+1}-\theta_i),z_i\rangle \leq \frac{2(1+\gamma)\rho_{\max}R_w(K_{g_1}+K_{f_1})}{c_{\beta}|\lambda_w|\lambda_{cm}}\eta
\end{flalign}
Substituting \eqref{eq: f2bias_css}, \eqref{eq: g2bias_css} and \eqref{eq: zinter_css} into \eqref{eq: expzinter_css}, and use the fact that $\tau_\beta<\log_{\frac{1}{\rho}}\frac{m}{\rho}+\ln^{-1}(\frac{1}{\rho})\ln(\frac{1}{\beta})$, we have
\begin{flalign}\label{eq: firstratez}
	\mE\ltwo{z_{t+1}}^2&\leq (1-|\lambda_w|\beta)^{1+t}\ltwo{z_0}^2 \nonumber\\
	&\quad + \frac{2( K_{r_3} + L_{g_2,z}K_{r_2})}{|\lambda_w|}\Big(\log_{\frac{1}{\rho}}\frac{m}{\rho}+\ln^{-1}(\frac{1}{\rho})\ln(\frac{1}{\beta})\Big)\beta \nonumber\\
	&\quad + \frac{[16R_w(K_{f_2} + K_{g_2}) + 3(K^2_{f_2} + K^2_{g_2}) + 3\eta^2K^2_{r_1} ]}{|\lambda_w|}\beta\nonumber\\
	&\quad +\frac{2(1+\gamma)\rho_{\max}R_w(K_{g_1}+K_{f_1})}{|\lambda_w|\lambda_{cm}}\eta.
\end{flalign}
Let
\begin{flalign}\label{thm2: c5}
	C_5 = &\frac{2( K_{r_3} + L_{g_2,z}K_{r_2})}{|\lambda_w|}\Big(\log_{\frac{1}{\rho}}\frac{m}{\rho}+\ln^{-1}(\frac{1}{\rho})\Big)\nonumber\\ 
	&+\frac{[16R_w(K_{f_2} + K_{g_2}) + 3(K^2_{f_2} + K^2_{g_2}) + 3K^2_{r_1} ]}{|\lambda_w|}
\end{flalign}
and
\begin{flalign}\label{thm2: c6}
	C_6 = \frac{2(1+\gamma)\rho_{\max}R_w(K_{g_1}+K_{f_1})}{|\lambda_w|\lambda_{cm}}
\end{flalign}
then we have
\begin{flalign*}
	\mE\ltwo{z_t}^2 \leq (1-|\lambda_w|\beta)^t\ltwo{z_0}^2 + C_5\max\{\beta, \ln(\frac{1}{\beta})\beta  \} + C_6\eta.
\end{flalign*}
Let $T = \lceil\frac{\ln[C_5\max\{\beta, \ln(\frac{1}{\beta})\beta  \}/\ltwo{z_0}^2]}{-\ln(1-|\lambda_w|\beta)}\rceil$. Then
\begin{flalign}
    &\mE\ltwo{z_t}^2 \leq 4R_w^2 \qquad\qquad\qquad\qquad\qquad\qquad\, \text{for all}\,  0\leq t\leq T,\label{eq: zupper1_css}\\
    &\mE\ltwo{z_t}^2 \leq 2C_5\max\{\beta, \ln(\frac{1}{\beta})\beta  \} + C_6\eta,\qquad\text{for all}\, t>0.\label{eq: zupper2_css}
\end{flalign}
Consider the recursion of $\theta_t$. From \eqref{eq: expthetainter} and use the fact that $\alpha_t=c_\alpha\alpha$ for all $t>0$, we have
\begin{flalign}
\mE\ltwo{\theta_{t+1}-\theta^*}^2&\leq  (1-|\lambda_\theta|\alpha)^{1+t} \ltwo{\theta_0-\theta^*}^2 \nonumber \\
&\quad + 2\alpha\sum_{i=0}^{t} (1-|\lambda_\theta|\alpha)^{t-i} \mE[\zeta_{f_1}(\theta_i,O_i)] \label{eq: interz_css}\\
&\quad + 2\alpha\sum_{i=0}^{t} (1-|\lambda_\theta|\alpha)^{t-i} \mE\langle B_iz_i,\theta_i-\theta^*\rangle \label{eq: expthetainter_css}  \\ 
&\quad + 2(K^2_{f_1} + K^2_{g_1})\alpha^2\sum_{i=0}^{t} (1-|\lambda_\theta|\alpha)^{t-i} .\nonumber
\end{flalign}
By slightly modifying the proof of Lemma \ref{lemma: biasf1final}, we have
\begin{flalign}\label{eq: f1bias_css}
\mE[\zeta_{f_1}(\theta_i,O_i)]\leq \alpha(8R_\theta K_{f_1} + L_{f_1,\theta}(K_{f_1}+K_{g_1})\tau_\alpha).
\end{flalign}
Substitute \eqref{eq: zupper1_css} and \eqref{eq: zupper2_css} into \eqref{eq: expthetainter_css}, we have
\begin{flalign}
&2\alpha\sum_{i=0}^{t} (1-|\lambda_\theta|\alpha)^{t-i} \mE\langle B_iz_i,\theta_i-\theta^*\rangle\nonumber\\
&\leq 4\alpha\gamma\rho_{\max}R_\theta \Big[2R_w\sum_{i=0}^{T}(1-|\lambda_\theta|\alpha)^{t-i} + (2C_5\max\{\beta, \ln(\frac{1}{\beta})\beta  \} + C_6\eta)^{0.5}  \sum_{i=T+1}^{t}(1-|\lambda_\theta|\alpha)^{t-i}\Big]\nonumber\\
&\leq 8\gamma\rho_{\max}R_\theta R_w \frac{1-(1- |\lambda_\theta|\alpha)^{T+1}}{|\lambda_\theta|(1-|\lambda_\theta|\alpha)^T} (1-|\lambda_\theta|\alpha)^t + \frac{4\gamma \rho_{\max}R_\theta}{|\lambda_\theta|}(2C_5\max\{\beta, \ln(\frac{1}{\beta})\beta  \} + C_6\eta)^{0.5} \label{eq: interfistbd}
\end{flalign}
Substitute \eqref{eq: f1bias_css} and \eqref{eq: interfistbd} into \eqref{eq: interz_css} and \eqref{eq: expthetainter_css} and using the fact that $\tau_\alpha<\log_{\frac{1}{\rho}}\frac{m}{\rho}+\ln^{-1}(\frac{1}{\rho})\ln(\frac{1}{\alpha})$ we have
\begin{flalign}
\mE\ltwo{\theta_{t+1}-\theta^*}^2&\leq  (1-|\lambda_\theta|\alpha)^{1+t} \ltwo{\theta_0-\theta^*}^2\nonumber\\
&\quad + \frac{2 L_{f_1,\theta}(K_{f_1}+K_{g_1})}{|\lambda_\theta|}(\log_{\frac{1}{\rho}}\frac{m}{\rho}+\ln^{-1}(\frac{1}{\rho})\ln(\frac{1}{\alpha}))\alpha\nonumber\\
&\quad + \frac{2c_\alpha(8R_\theta K_{f_1}+K^2_{f_1} + K^2_{g_1})}{|\lambda_\theta|}\alpha\nonumber\\
&\quad + \frac{4\gamma \rho_{\max}R_\theta}{|\lambda_\theta|}(2C_5\max\{\beta, \ln(\frac{1}{\beta})\beta  \}+ C_6\eta)^{0.5}\nonumber\\
&\quad + 8\gamma\rho_{\max}R_\theta R_w \frac{1-(1- |\lambda_\theta|\alpha)^{T+1}}{|\lambda_\theta|(1-|\lambda_\theta|\alpha)^T} (1-|\lambda_\theta|\alpha)^t.\nonumber
\end{flalign}
Let
\begin{flalign}\label{thm2: c2}
	C_2 = \frac{2 L_{f_1,\theta}(K_{f_1}+K_{g_1})}{|\lambda_\theta|}(\log_{\frac{1}{\rho}}\frac{m}{\rho}+\ln^{-1}(\frac{1}{\rho})) + \frac{2(8R_\theta K_{f_1}+K^2_{f_1} + K^2_{g_1})}{|\lambda_\theta|},
\end{flalign}
and
\begin{flalign}\label{thm2: c3}
	C_3 = 32\Big(\frac{\gamma \rho_{\max}R_\theta}{|\lambda_\theta|}\Big)^2C_5,
\end{flalign}
and
\begin{flalign}\label{thm2: c4}
	C_4 = 16\Big(\frac{\gamma \rho_{\max}R_\theta}{|\lambda_\theta|}\Big)^2C_6,
\end{flalign}
then we have
\begin{flalign}
	\mE\ltwo{\theta_{t+1}-\theta^*}^2 \leq (1-|\lambda_\theta|\alpha)^{1+t} (\ltwo{\theta_0-\theta^*}^2 + C_1) &+ C_2\max\{ \alpha, \ln(\frac{1}{\alpha})\alpha  \} \nonumber \\
	&+ (C_3\max\{\beta, \ln(\frac{1}{\beta})\beta  \} + C_4\eta)^{0.5}
\end{flalign}
where $C_1 = 8\gamma\rho_{\max}R_\theta R_w \frac{1-(1- |\lambda_\theta|\alpha)^{T+1}}{|\lambda_\theta|(1-|\lambda_\theta|\alpha)^{T+1}}$.
\section{Proof of Theorem \ref{thm3}}
We define vector $x_t = [\theta^\top_t, w^\top_t]^\top$ and $x^*=[\theta^{*\top},0^\top]^\top$, convex set $X=\{x| \sum_{i=1}^{d} x_i^2 \leq R_\theta^2\, \text{and}\,\sum_{i=d+1}^{2d} x_i^2 \leq R_w^2 \}$ and the projection operator $\mcpi_X(x) = \argmin_{x^\prime:x^\prime\in X}||x-x^\prime||_2$. We also define
\begin{flalign*}
	G_t = \left[ \begin{array}{cc}
	A_t & B_t \\
	\eta A_t & \eta B_t \\
	\end{array} \right], \quad g_t = \left[ \begin{array}{cc}
	b_t\\
	\eta b_t\\
	\end{array}\right],
\end{flalign*}
and
\begin{flalign*}
G = \left[ \begin{array}{cc}
A & B \\
\eta A & \eta B \\
\end{array} \right], \quad g = \left[ \begin{array}{cc}
b\\
\eta b\\
\end{array}\right].
\end{flalign*}
Then, we can rewrite the update of \eqref{algorithm1_1}-\eqref{algorithm1_2}
\begin{flalign}\label{eq: update_x}
	x_{t+1} = \mcpi_X (x_t + \alpha_t (G_t x_t + g_t)).
\end{flalign}
We define $h(x_t,O_t)=G_t x_t + g_t$ and $\bar{h}(x_t)=G x_t + g$. Then, for the recursion of $x_t$ in \eqref{eq: update_x}, for any $t>0$, we have
\begin{flalign}\label{eq: update_x_II}
	\ltwo{x_{t+1} - x^*}^2 &= \ltwo{\mcpi_X (x_t + \alpha_t h(x_t,O_t)) - x^*}^2\nonumber\\
	&=\ltwo{\mcpi_X (x_t + \alpha_t h(x_t,O_t)) - \mcpi_X(x^*)}^2\nonumber\\
	&\leq \ltwo{x_t - x^* + \alpha_t h(x_t,O_t)}^2\nonumber\\
	&=\ltwo{x_t-x^*}^2 + 2\alpha_t\langle h(x_t,O_t), x_t - x^* \rangle + \alpha_t^2 \ltwo{h(x_t,O_t)}^2\nonumber\\
	&=\ltwo{x_t-x^*}^2 + 2\alpha_t\langle \bar{h}(x_t), x_t - x^* \rangle + 2\alpha_t\langle h(x_t,O_t) - \bar{h}(x_t), x_t - x^* \rangle + \alpha_t^2 \ltwo{h(x_t,O_t)}^2\nonumber\\
	&=(1-\alpha_t|\lambda_x|)\ltwo{x_t-x^*}^2 + 2\alpha_t\zeta_{h}(x_t,O_t) + \alpha_t^2 \ltwo{h(x_t,O_t)}^2,
\end{flalign}
where $\lambda_x=\lambda_{\max}(G+G^\top)$, and $\lambda_x<0$ as shown in \cite{maei2011gradient}. Then, consider the update in any block $s>0$. Taking expectation on both sides conditional on the filtration $\mcf_{s-1}$ up to block $s-1$ and telescoping \eqref{eq: update_x} yield that
\begin{flalign}\label{eq: tele_x}
	\mE[\ltwo{x_s-x^*}^2|\mcf_{s-1}]&\leq (1-|\alpha_s|\lambda_x)^{T_s} \ltwo{x_{s-1}-x^*}^2 \nonumber \\
	&\quad + 2\alpha_s\sum_{i=1}^{T_s}(1-\alpha_s|\lambda_x|)^{T_s-i} \mE[\zeta_h(x_{t_{s-1}+i},O_{t_{s-1}+i})]\nonumber\\
	&\quad + \alpha_s^2\sum_{i=1}^{T_s} (1-\alpha_s|\lambda_x|)^{T_s-i}\ltwo{h(x_{t_{s-1}+i},O_{t_{s-1}+i})}^2.
\end{flalign}
Following similar steps in the proof for Theorem \ref{thm1}, we have the following results:

(a) There exist constant $C_G$ and $C_g$ such that $\ltwo{G_t}, \ltwo{G}\leq C_G$ and $\ltwo{g_t}, \ltwo{g}\leq C_g$.

(b) For all $i>0$,  $\ltwo{h(x_i, O_i)}\leq K_h$, where $K_h=C_G\sqrt{R_\theta^2+R_w^2}+C_g$.

(c) For all $i>0$, $\ltwo{\zeta_h(x_i,O_i)}\leq 4K_h\sqrt{R_\theta^2+R_w^2}$.

(d) For all $i>0$ and $x,x^\prime\in X$, $\ltwo{\zeta_h(x,O_i)-\zeta_h(x^\prime,O_i)}\leq L_h\ltwo{x-x^\prime}$, where $L_h=4C_G\sqrt{R_\theta^2+R_w^2}+2K_h$.

(e) For all $i>0$, $\mE[\zeta_h(x_i,O_i)]\leq \alpha_s(8K_h\sqrt{R_\theta^2+R_w^2} + L_hK_h\tau_{\alpha_s})$.

Then, substituting (e) into \eqref{eq: tele_x}, we obtain
\begin{flalign*}
	\mE[\ltwo{x_s-x^*}^2|\mcf_{s-1}]&\leq (1+\alpha_s\lambda_x)^{T_s} \ltwo{x_{s-1}-x^*}^2 + \frac{2}{|\lambda_x|}\alpha_s(8K_h\sqrt{R_\theta^2+R_w^2} + L_hK_h\tau_{\alpha_s}) + \frac{1}{|\lambda_x|}\alpha_s K_h^2.
\end{flalign*}
Recall that $\tau_{\alpha_s}\leq\log_{\frac{1}{\rho}}\frac{m}{\rho}+\ln^{-1}(\frac{1}{\rho})\ln(\frac{1}{\alpha_s})$. Then, we have
\begin{flalign}\label{eq: finalform_x}
	\mE[\ltwo{x_s-x^*}^2|\mcf_{s-1}]&\leq (1+\alpha_s\lambda_x)^{T_s} \ltwo{x_{s-1}-x^*}^2 + C_9\max\{\alpha_s, \ln(\frac{1}{\alpha_s})\alpha_s \},
\end{flalign}
where
\begin{flalign}\label{thm3: C_7}
	C_7 = \frac{2}{|\lambda_x|}(8K_h\sqrt{R_\theta^2+R_w^2} + L_hK_h\log_{\frac{1}{\rho}}\frac{m}{\rho} + L_hK_h\ln^{-1}(\frac{1}{\rho})+\frac{1}{2}K_h^2).
\end{flalign}
Since $\max\{\alpha_s, \ln(\frac{1}{\alpha_s})\alpha_s \} \leq \epsilon_{s-1}/(4C_7)$ and $(1+\alpha_s\lambda_x)^{T_s}\leq 1/4$, we have
\begin{flalign*}
	\mE[\ltwo{x_s-x^*}^2|\mcf_{s-1}] \leq \frac{1}{2}\epsilon_{s-1}.
\end{flalign*}
After $S=\lceil \log_2(\epsilon_0/\epsilon) \rceil$ blocks we have
\begin{flalign*}
\mE\ltwo{\theta_{S}-\theta^*}^2\leq \mE\ltwo{x_{S}-x^*}^2 \leq \epsilon.
\end{flalign*}
The total iteration complexity is $\sum_{s=1}^{S}T_s=\mathcal{O}(\frac{1}{\epsilon^{1+\xi}})$, where $\xi>0$ can be arbitrarily small.

\end{document}